\newtheorem{theorem}{Theorem}
\newcommand*\samethanks[1][\value{footnote}]{\footnotemark[#1]}
\title{Explaining Explanations: Axiomatic Feature Interactions for Deep Networks}
\author{%
  Joseph D. Janizek\thanks{These authors contributed equally to this work and are listed alphabetically.} \\
  Paul G. Allen School of Computer Science\\
  University of Washington\\
  Seattle, WA 98144 \\
  \texttt{jjanizek@cs.washington.edu} \\
   \And
  Pascal Sturmfels \samethanks \\
  Paul G. Allen School of Computer Science\\
  University of Washington\\
  Seattle, WA 98144 \\
  \texttt{psturm@cs.washington.edu} \\
   \AND
  Su-In Lee \samethanks \\
  Paul G. Allen School of Computer Science\\
  University of Washington\\
  Seattle, WA 98144 \\
  \texttt{suinlee@cs.washington.edu} \\
}
\begin{document}

\maketitle

\begin{abstract}
  Recent work has shown great promise in explaining neural network behavior. In particular, feature attribution methods explain which features were most important to a model's prediction on a given input. However, for many tasks, simply knowing which features were important to a model's prediction may not provide enough insight to understand model behavior. The \textit{interactions} between features within the model may better help us understand not only the model, but also why certain features are more important than others. In this work, we present Integrated Hessians\footnote[2]{Code available at \url{https://github.com/suinleelab/path_explain}}: an extension of Integrated Gradients that explains pairwise feature interactions in neural networks. Integrated Hessians overcomes several theoretical limitations of previous methods to explain interactions, and unlike such previous methods is not limited to a specific architecture or class of neural network. Additionally, we find that our method is faster than existing methods when the number of features is large, and outperforms previous methods on existing quantitative benchmarks.
\end{abstract}

\section{Introduction and Prior Work}
\label{sec:Intro}

Deep neural networks are one of the most popular classes of machine learning model. They have achieved state-of-the-art performance in problem domains ranging from natural language processing to image recognition \citep{devlin2018bert, he2016deep}. They have even outperformed other non-linear model types on structured tabular data \citep{shavitt2018regularization}. Because neural networks have been traditionally difficult to interpret compared to simpler model classes, gaining a better understanding of their predictions is desirable for a variety of reasons. To the extent that these algorithms are used in automated decisions impacting humans, explanations may be legally required \citep{righttoexplanation}. When used in high stakes applications, it is essential to ensure that models are making safe decisions for the right reasons \citep{geis2019ethics}. During model development, interpretability methods can help debug undesirable model behavior \citep{sundararajan2017axiomatic}.

\textbf{Feature attribution:} There have been a large number of recent approaches to interpret deep neural networks, ranging from methods that aim to distill complex models into more simple models \citep{tan2018learning, wu2018beyond, puri2017magix}, to methods that aim to identify the most important concepts learned by a network \citep{kim2017interpretability, olah2018building, olah2017feature, fong2018net2vec, erhan2009visualizing, mahendran2015understanding}. One of the best-studied sets of approaches is known as \emph{feature attribution methods} \citep{binder2016layer,shrikumar2017learning,lundberg2017unified,ribeiro2016should}. These approaches explain a model's prediction by assigning credit to each input feature based on how much it influenced that prediction. Although these approaches help practitioners understand which features are important, they do not explain \textit{why} certain features are important or how features interact in a model. In order to develop a richer understanding of model behavior, it is therefore desirable to develop methods to explain not only feature attributions but also feature interactions. For example, in Figure \ref{fig:not_bad_comp}, we show that word-level interactions can help us distinguish why deeper, more expressive neural networks outperform simpler ones on language tasks.

\textbf{Feature interaction:} There are several existing methods that explain feature interactions in neural networks. \citet{cui2019recovering} propose a method to explain global interactions in Bayesian Neural Networks (BNN) by examining pairs of features that have large second-order derivatives at the input. Neural Interaction Detection is a method that detects statistical interactions between features by examining the weight matrices of feed-forward neural networks \citep{tsang2017detecting}. Furthermore, several authors have proposed domain-specific methods for finding interactions in the area of deep learning for genomics \citep{koo2018inferring, greenside2018discovering}. For example, Deep Feature Interaction Maps detect interactions between two features by calculating the change in the attribution of one feature incurred by changing the value of the second \cite{greenside2018discovering}. \citet{singh2018hierarchical} propose a generalization of Contextual Decomposition \citep{murdoch2018beyond} that can explain interactions for feed-forward and convolutional architectures. In game theory literature, \citet{grabisch1999axiomatic} propose the Shapley Interaction Index, which allocates credit to interactions between players in a coalitional game by considering all possible subsets of players.

\textbf{Limitations of Prior Approaches:} While previous approaches have taken important steps towards understanding feature interaction in neural networks, all suffer from practical limitations, including being limited to specific types of architectures. Neural Interaction Detection only applies to feed-forward neural network architectures, and can not be used on networks with convolutions, recurrent units, or self-attention. Contextual Decomposition has been applied to LSTMs, feed-forward neural networks and convolutional networks, but to our knowledge is not straightforward to apply to more recent innovations in deep learning, such as self-attention layers. The approach suggested by \citet{cui2019recovering} is limited in that it requires the use of Bayesian Neural Networks; it is unclear how to apply the method to standard neural networks. Deep Feature Interaction Maps only work when the input features for a model have a small number of discrete values (such as genomic sequence data). The Shapley Interaction Index, like the Shapley Value, is NP-hard to compute exactly \cite{elkind2009computational}. 

Furthermore, most existing methods to detect interactions do not satisfy the common-sense axioms that have been proposed by feature attribution methods \citep{sundararajan2017axiomatic, lundberg2017unified}. This leads these previous approaches being provably unable to find learned interactions, or more generally finding counter-intuitive interactions (see \autoref{sec:SimulatedXOR}). Existing methods that do satisfy such axioms, such as those based on the Shapley Interaction Index \citep{grabisch1999axiomatic,dhamdhere2019shapley}, are computationally inefficient to compute or even approximate.

\textbf{Our Contributions:} (1) We propose an approach to quantify pairwise feature interactions that can be applied to any neural network architecture; (2) We identify several common-sense axioms that feature-level interactions should satisfy and show that our proposed method satisfies them; (3) We provide a principled way to compute interactions in ReLU-based networks, which are piece-wise linear and have zero second derivatives; (4) We evaluate our method against existing methods and show that it better identifies interactions in simulated data.

\begin{figure*}
    \centering
    \includegraphics[width=0.65\paperwidth]{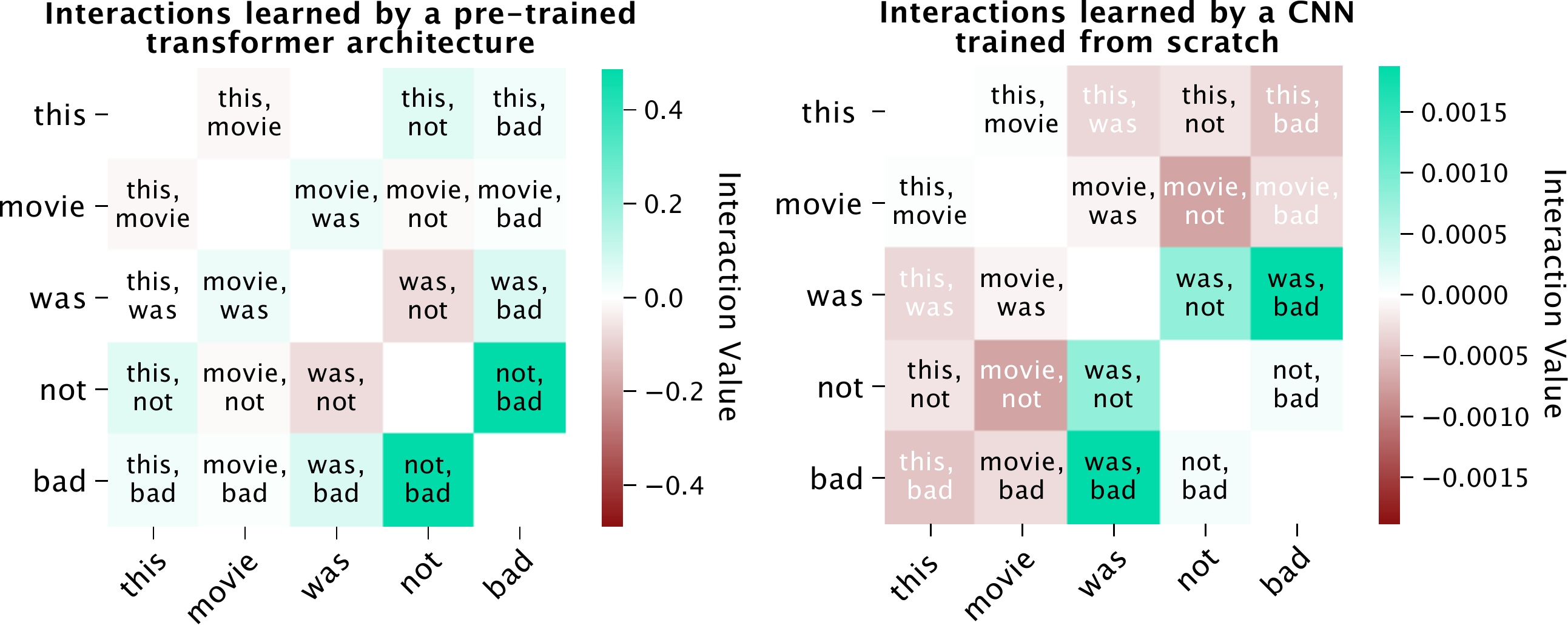}
    \caption{ Interactions help us understand why certain models perform better than others. Here, we examine interactions on the sentence ``this movie was not bad." We compare two models trained to do sentiment analysis on the Stanford Sentiment dataset: a pre-trained transformer, DistilBERT (left), which predicts positive with 98.2\% confidence, and a convolutional neural network trained from scratch (right), which predicts negative with 97.6\% confidence. The transformer picks up on negation patterns: ``not bad'' has a positive interaction, despite the word ``bad'' being negative. The CNN mostly picks up on negative interactions like ``movie not'' and ``movie bad''. }
    \label{fig:not_bad_comp}
\end{figure*}

\section{Explaining Explanations with Integrated Hessians}
\label{sec:ExpectedHessians}

To derive our feature interaction values, we start by considering Integrated Gradients (IG), proposed by \citet{sundararajan2017axiomatic}. We represent our model as a function $f : \mathbb{R}^{d} \mapsto \mathbb{R}$.\footnote{In the case of multi-output models, such as multi-class classification problems, we assume the function is indexed into the correct output class.}. For a function $f(x)$, the IG attribution for the $i$th feature is defined as:
\begin{equation}
    \phi_i (x) = (x_i - x'_i) \times \int_{\alpha=0}^{1} \frac{ \partial f(x' + \alpha(x-x')) }{\partial x_i} d\alpha,
\end{equation}

where $x$ is the sample we would like to explain and $x'$ is a baseline value. Although $f$ is often a neural network, the only requirement in order to compute attribution values is that $f$ be differentiable along the path from $x'$ to $x$. Our key insight is that the IG value for a differentiable model $f : \mathbb{R}^d \mapsto \mathbb{R}$ is \emph{itself} a differentiable function $\phi_i : \mathbb{R}^d \mapsto \mathbb{R}$. This means that we can apply IG to itself in order to explain how much feature $j$ impacted the importance of feature $i$: 

\begin{equation}
    \Gamma_{i, j}(x) = \phi_j(\phi_i(x))
\end{equation}

For $i \neq j$, we can derive that:
\begin{equation}
    \Gamma_{i, j}(x) = 
    (x_i - x'_i)(x_j - x'_j) \times \int_{\beta=0}^{1} \int_{\alpha=0}^{1} \alpha \beta \frac{\partial^2 f(x' + \alpha \beta (x - x'))}{\partial x_i \partial x_j} d\alpha d\beta
\end{equation}

In the case of $i = j$, the formula $\Gamma_{i, i}(x)$ has an additional first-order term. We interpret $\Gamma_{i, j}(x)$ as the explanation of the importance of feature $i$ in terms of the input value of feature $j$. In practice, choosing a single baseline value can be challenging \citep{kindermans2019reliability, sundararajan2019many, sturmfels2020visualizing}. In such cases, we average over multiple baselines from the training set, as in \citet{erion2019learning}. We leave the derivation of the full formula, the baselines we choose and how we approximate the integral in practice to the appendix. 

\subsection{Fundamental Axioms for Interaction Values}

\citet{sundararajan2017axiomatic} showed that, among other theoretical properties, IG satisfies the completeness axiom, which states: $\sum_i \phi_i(x) = f(x) - f(x')$. Although we leave the derivation to the appendix, we can show the following two equalities, which are immediate consequences of completeness:

\begin{gather}
    \label{eq:interaction}
    \sum_i \sum_j \Gamma_{i, j}(x) = f(x) - f(x') \\
    \label{eq:main_effect}
    \Gamma_{i, i}(x) = \phi_i(x) - \sum_{j \neq i} \Gamma_{i, j} (x)
\end{gather}

We call equation (\ref{eq:interaction}) the \textit{interaction completeness} axiom: the sum of the $\Gamma_{i, j} (x)$ terms adds up to the difference between the output of $f$ at $x$ and at the baseline $x'$. This axiom lends itself to another natural interpretation of $\Gamma_{i, j} (x)$: as the interaction between features $i$ and $j$. That is, it represents the contribution that the pair of features $i$ and $j$ together add to the output $f(x) - f(x')$. Satisfying interaction completeness is important because it demonstrates a relationship between model output and interaction values. Without this axiom, it is unclear how to interpret the scale of interactions.

Equation (\ref{eq:main_effect}) provides a way to interpret the self-interaction term $\Gamma_{i, i}(x)$: it is the \textit{main effect} of feature $i$ after interactions with all other features have been subtracted away. We note that equation (\ref{eq:main_effect}) also implies the following, intuitive property about the main effect: if $\Gamma_{i, j} = 0$ for all $j \neq i$, or in the degenerate case where $i$ is the only feature, we have $\Gamma_{i, i} = \phi_i(x)$. We call this the  \textit{self completeness} axiom. Satisfying self-completeness is important because it provides a guarantee that the main effect of feature $i$ equals its feature attribution value if that feature interacts with no other features.

IG satisfies several other common-sense axioms such as sensitivity and linearity. We discuss generalizations of these axioms to interaction values (\textit{interaction sensitivity} and \textit{interaction linearity}) and demonstrate that Integrated Hessians satisfies said axioms in the appendix. We do observe one additional property here, which we call \textit{interaction symmetry}: that for any $i, j$, we have $\Gamma_{i, j} (x) = \Gamma_{j, i} (x)$. It is straightforward to show that existing neural networks and their activation functions have continuous second partial derivatives, which implies that Integrated Hessians satisfies interaction symmetry.\footnote{We discuss the special case of the ReLU activation function in Section \ref{sec:smoothing_relu}.} 

\section{Smoothing ReLU Networks}
\label{sec:smoothing_relu}

One major limitation that has not been discussed in previous approaches to interaction detection in neural networks is related to the fact that many popular neural network architectures use the ReLU activation function, $\textrm{ReLU}(x) = \textrm{max}(0, x)$. Neural networks that use ReLU are piecewise linear and have second partial derivatives equal to zero in all places. Previous second-order approaches (based on the Hessian) fail to detect any interaction in ReLU-based networks.

Fortunately, the ReLU activation function has a smooth approximation -- the SoftPlus function: $ \textrm{SoftPlus}_{\beta} (x) = \frac{1}{\beta} \log (1 + e^{-\beta x} )$. SoftPlus more closely approximates the ReLU function as $\beta$ increases, and has well-defined higher-order derivatives. Furthermore, \citet{dombrowski2019explanations} have proved that both the outputs and first-order feature attributions of a model are minimally perturbed when ReLU activations are replaced by SoftPlus activations in a trained network. Therefore, we can apply Integrated Hessians on a network with ReLU activations by first replacing ReLU with SoftPlus. We note that no re-training is necessary for this approach.

In addition to being twice differentiable and allowing us to calculate interaction values in ReLU networks, replacing ReLU with SoftPlus leads to other desirable behavior for calculating interaction values. We show that smoothing a neural network (decreasing the value of $\beta$ in the SoftPlus activation function) lets us accurately approximate the Integrated Hessians value with fewer gradient calls.

\begin{theorem} 
\label{thm1}
For a one-layer neural network with softplus\textsubscript{$\beta$} non-linearity, $f_{\beta}(x) =$ softplus\textsubscript{$\beta$}$(w^Tx)$, and $d$ input features, we can bound the number of interpolation points $k$ needed to approximate the Integrated Hessians to a given error tolerance $\epsilon$ by $k \leq \mathcal{O}(\frac{d \beta^2}{\epsilon})$.
\end{theorem}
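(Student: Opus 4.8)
The plan is to reduce the Integrated Hessians integrand to an explicit one-dimensional integral, bound the derivative of that integrand in terms of the second and third derivatives of softplus, and then invoke a standard first-order quadrature error estimate. First I would compute the Hessian of $f_\beta$. Since $f_\beta(x) = g(w^\top x)$ with $g = \mathrm{softplus}_\beta$, the chain rule gives
\[
\frac{\partial^2 f_\beta}{\partial x_i \partial x_j}(z) = w_i w_j\, g''(w^\top z),
\]
so the only nonconstant quantity along the interpolation path is $g''$. Substituting $z = x' + st(x-x')$ into equation (3) (I rename the integration variables to $s,t$ to avoid clashing with the softplus parameter $\beta$), the interaction becomes $(x_i-x_i')(x_j-x_j')\,w_iw_j \int_0^1\!\int_0^1 st\, g''(a + st\,b)\,ds\,dt$, where $a = w^\top x'$ and $b = w^\top(x-x')$. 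Because the integrand depends on $s,t$ only through the product $r = st$, a change of variables collapses the double integral to the single integral $\int_0^1 (1-r)\,g''(a+rb)\,dr$; this is what justifies speaking of $k$ interpolation points along $[0,1]$.

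Next I would control the quadrature error. Approximating $\int_0^1 F(r)\,dr$ by a $k$-point Riemann sum incurs error at most $\tfrac{1}{2k}\sup_{r}|F'(r)|$, so everything reduces to bounding the derivative of $F(r) = (1-r)\,g''(a+rb)$. Differentiating gives
\[
F'(r) = -\,g''(a+rb) + (1-r)\,b\,g'''(a+rb), \qquad |F'(r)| \le |g''|_\infty + |b|\,|g'''|_\infty .
\]
The crux is therefore to bound the second and third derivatives of softplus. Expressing them through the logistic sigmoid $\sigma$, one finds that $g''$ equals $\beta$ times a term of the form $\sigma'(\pm\beta u)$ and $g'''$ equals $\beta^2$ times a term of the form $\sigma''(\pm\beta u)$; maximizing the bounded polynomials $p(1-p)$ and $p(1-p)(1-2p)$ in $p = \sigma$ then yields $|g''|_\infty \le \beta/4$ and $|g'''|_\infty \le C\beta^2$ for an absolute constant $C$. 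Hence $|F'| \le \beta/4 + C|b|\beta^2$, and the $\beta^2$ term dominates for large $\beta$.

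Finally I would collect constants and solve for $k$. Reinstating the prefactor $|(x_i-x_i')(x_j-x_j')\,w_iw_j|$ and using the bound on $|F'|$, the approximation error of $\Gamma_{i,j}$ is $\mathcal{O}\!\big(|b|\,\beta^2 / k\big)$, where the factor $|b| = |w^\top(x-x')| \le \|w\|_1\,\|x-x'\|_\infty = \mathcal{O}(d)$ supplies the linear dependence on the number of features $d$ (under the standing assumption that the per-coordinate weights and feature ranges are $\mathcal{O}(1)$). Setting this error below $\epsilon$ and rearranging gives $k \le \mathcal{O}(d\beta^2/\epsilon)$, as claimed. I expect the main obstacle to be the derivative bounds on softplus, in particular verifying that the third derivative scales like $\beta^2$ rather than $\beta$ and that this term, not the $g''$ term, controls the rate, together with correctly identifying $\|w\|_1$ (equivalently $|w^\top(x-x')|$) as the source of the factor $d$.
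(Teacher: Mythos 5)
Your route is genuinely different from the paper's: the paper never collapses the double integral, but instead sandwiches the double Riemann sum between left and right sums (using monotonicity of the integrand when integrating from the zero baseline), applies the mean value theorem to $h_{ij}(x)-h_{ij}(x')$, and maximizes the coordinates of $\nabla_x h_{ij}$. Your key analytic ingredients do match the paper's: the chain-rule reduction of the Hessian to $w_iw_j\,g''(w^\top z)$, the bounds $|g''|_\infty \le \beta/4$ and $|g'''|_\infty = \beta^2/(6\sqrt{3})$ (the paper computes exactly this constant by locating critical points at $\frac{1}{\beta}\log(2\pm\sqrt{3})$), and the factor of $d$ arising from pairing a derivative bound with the $d$ coordinates of $x-x'$.

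However, there is a concrete error in your reduction, and the step built on it fails. For $r=st$ with $s,t$ independent and uniform on $(0,1)$, the density of $r$ is $-\log r$, so
\[
\int_0^1\!\!\int_0^1 st\,g''(a+st\,b)\,ds\,dt \;=\; \int_0^1 (-r\log r)\,g''(a+rb)\,dr ,
\]
not $\int_0^1(1-r)\,g''(a+rb)\,dr$; the kernel $(1-r)$ is the Taylor-remainder kernel and computes a different quantity. (A consistency check: only the $-r\log r$ kernel integrates by parts against the first-order diagonal term $\int_0^1(-\log r)\,g'(a+rb)\,dr$ to recover $g(a+b)-g(a)$, i.e.\ interaction completeness.) This matters for your quadrature step: with the correct kernel, $F(r)=(-r\log r)\,g''(a+rb)$ has $F'(r)=(-\log r-1)\,g''(a+rb)+(-r\log r)\,b\,g'''(a+rb)$, so $\sup_r|F'(r)|=\infty$ because of the logarithmic singularity at $r=0$, and the sup-norm bound ``error $\le \sup|F'|/2k$'' you invoke does not apply. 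The gap is fixable: replace it with the total-variation bound $\mathrm{error}\le \frac{1}{k}\int_0^1|F'(r)|\,dr$, and note $\int_0^1(|\log r|+1)\,dr=2$ and $\int_0^1 r|\log r|\,dr=\frac{1}{4}$ are finite, which gives error $\mathcal{O}\big((\beta+|b|\beta^2)/k\big)$ and hence the claimed $k\le\mathcal{O}(d\beta^2/\epsilon)$. Separately, you silently drop the diagonal terms $\Gamma_{i,i}$, which carry an extra first-order integral; the paper treats these and shows they contribute an additional term of order $\beta$, which a corrected version of your argument would absorb in the same way.
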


Proof of \autoref{thm1} is contained in the appendix. In addition to the proof for the single-layer case, in the appendix we also show empirical results that many-layered neural networks display the same property. Finally, we demonstrate the intuition behind these results. As we replace ReLU with SoftPlus, the decision surface of the network is smoothed - see \autoref{fig:grad_path} and \citet{dombrowski2019explanations}. We can see that the gradients tend to all have more similar direction along the path from reference to foreground sample once the network has been smoothed with SoftPlus replacement.

\begin{figure}
    \centering
    \includegraphics[width=0.5\columnwidth]{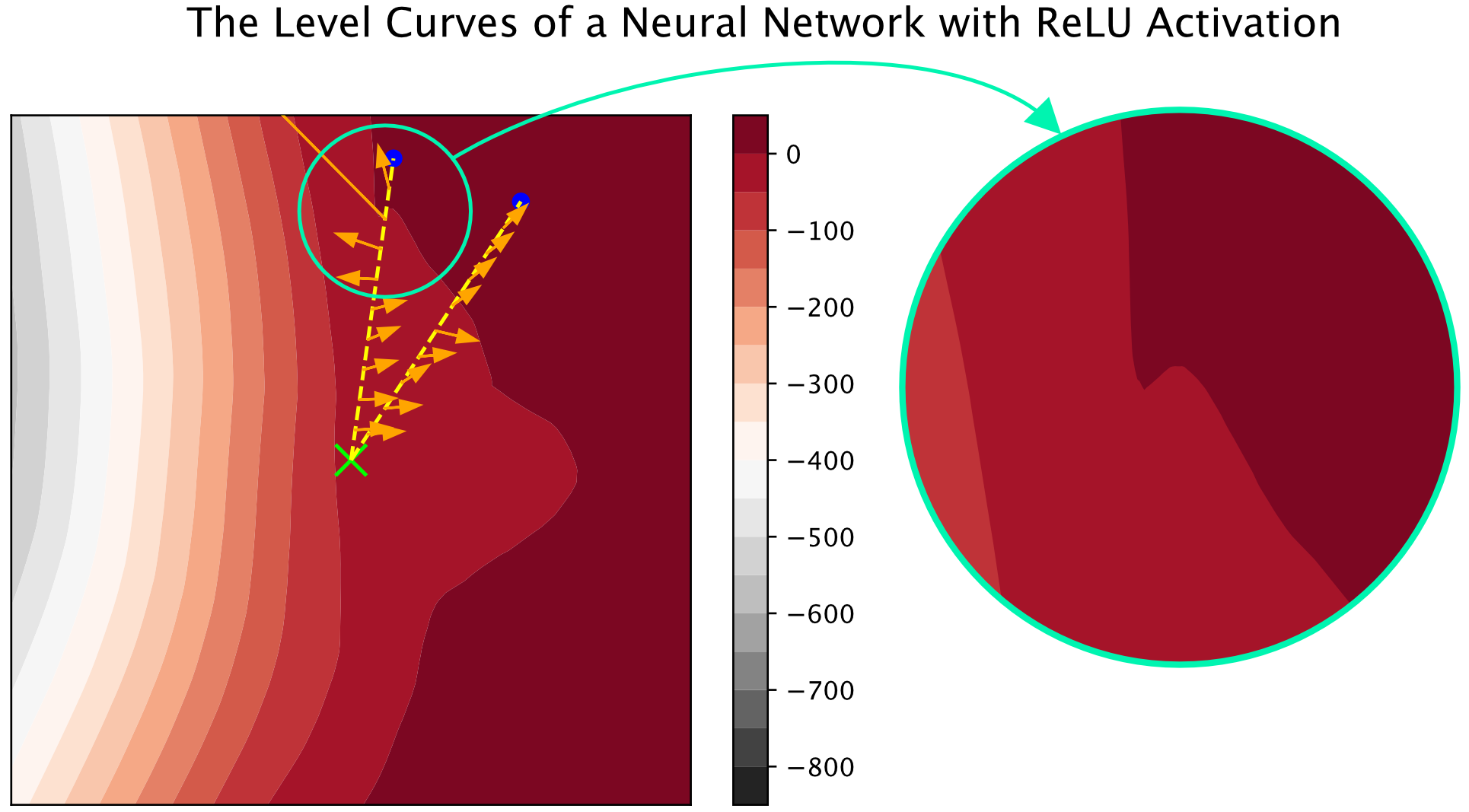}
    \includegraphics[width=0.3\columnwidth]{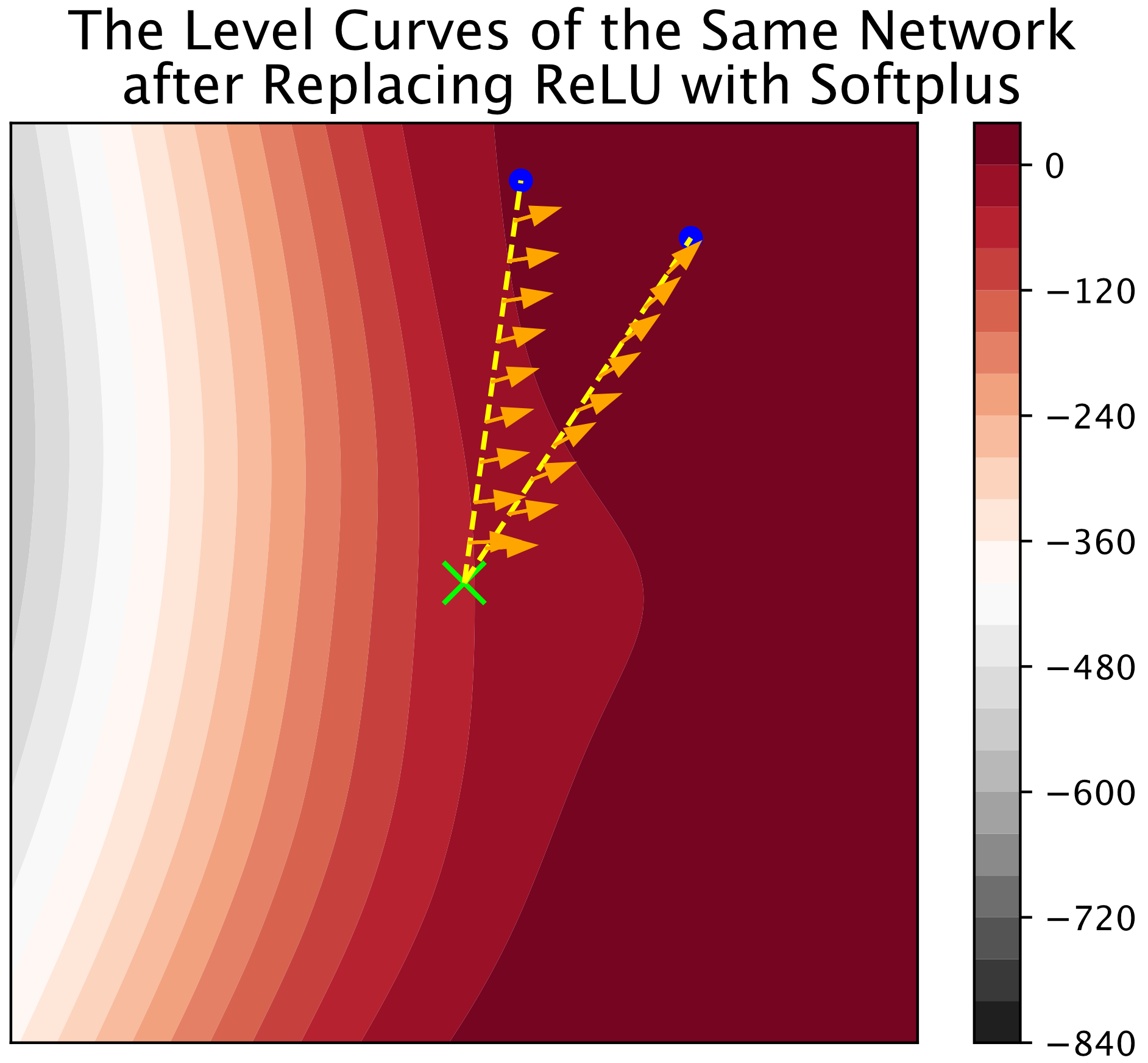}
    \caption{ Replacing ReLU activations with $\textrm{SoftPlus}_{\beta}$ activations with $\beta = 10$ smooths the decision surface of a neural network: gradients tend to be more homogeneous along the integration path. Orange arrows show the gradient vectors at each point along the path from the reference (green x) to the input (blue dots). ReLUs can cause small bends in the output space with aberrant gradients. }
    \label{fig:grad_path}
\end{figure}

\section{Explanation of XOR function}
\label{sec:SimulatedXOR}

\begin{figure}
    \centering
    \includegraphics[width=\columnwidth]{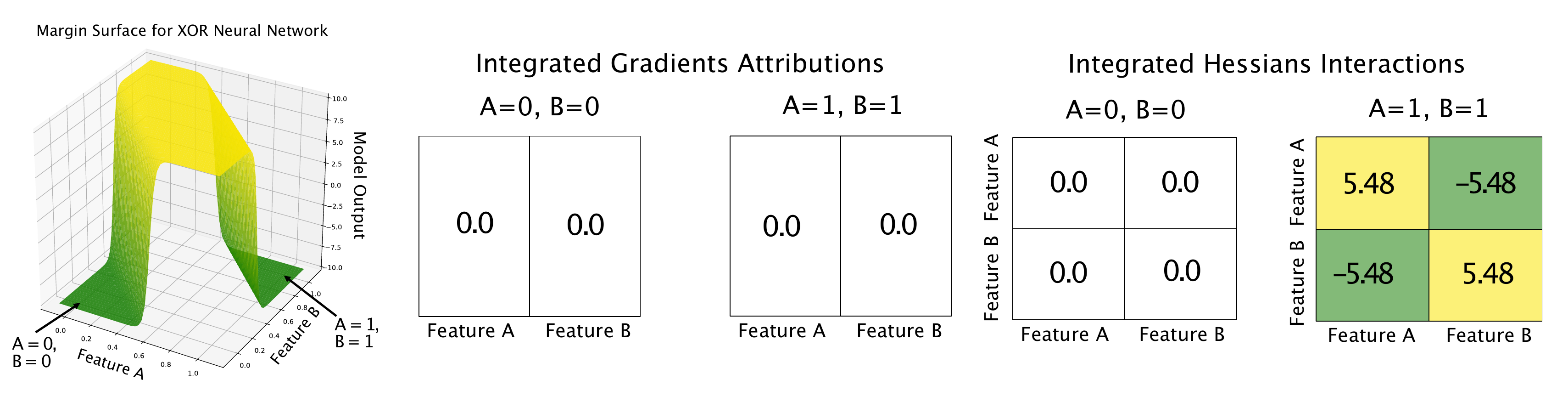}
    \caption{(Left) The margin surface for a neural network representing an XOR function. (Middle) The Integrated Gradients feature attributions for two samples, the first where both features are turned off, and the second where both features are turned on. Both get identical Integrated Gradients attributions. (Right) The Integrated Hessians feature interactions for the same two samples. We see that the Integrated Hessians values, but not the Integrated Gradients values, differentiate the two samples.}
    \label{fig:XOR}
\end{figure}

To understand why feature interactions can be more informative than feature attributions, we take the case where we have two binary features and a neural network representing an XOR function. This network has a large output when either is on alone, but a low magnitude output when both features are either on or off (\autoref{fig:XOR}, left).

When we explain the network using Integrated Gradients with the zeros baseline, we see that samples where both features are on and samples where both features are off get identical attributions (see \autoref{fig:XOR}, middle). Integrated Hessians, however, differentiates between these two samples by identifying the negative interaction that occurs between the two features when they are both on (\autoref{fig:XOR}, right). Therefore the interactions are usefully able to distinguish between (0,0), which has an output of 0 because it is identical to the baseline, and (1,1), which has an output of 0 because both features are on, which on their own should increase the model output, but in interaction with each other cancel out the positive effects and drive the model's output back to the baseline.

This example also illustrates a problem with methods like \citet{cui2019recovering}, which use the input Hessian without integrating over a path. In \autoref{fig:XOR}, we see that the function is saturated at all points on the data manifold, meaning the elements of the Hessian will be 0 for all samples. In contrast, by integrating between the baseline and the samples, Integrated Hessians is capable of correctly detecting the negative interaction between the two features.

\section{Empirical Evaluation}

In this section, we empirically evaluate our method against existing methods, inspired by recent literature on quantitatively evaluating feature attribution methods \citep{adebayo2018sanity, kindermans2019reliability, hooker2019benchmark, yeh2019fidelity, lin2019explaining}. We compare against four existing methods: the Shapley Interaction Index \cite{grabisch1999axiomatic}, Generalized Contextual Decomposition \cite{singh2018hierarchical}, Neural Interaction Detection \cite{tsang2017detecting}, and using the Hessian at the input sample \cite{cui2019recovering}.

\subsection{Computation time}

First we compare the computation time of each method. We explain interactions in a 5 layer neural network with softplus activation and 128 units per layer. We run each method on models with 5, 50 and 500 input features, and run each method on 1000 samples. Because computing the Shapley Interaction Index is NP-hard in the general case \cite{elkind2009computational}, we instead compare against a monte-carlo estimation of the Shapley Interaction Index with 200 samples, analogous to how the Shapley Value is estimated in \citet{kononenko2010efficient}. Even using a small number of samples, this was unable to be run to completion for the 500 feature case and would take an estimated 1000 hours to complete. For each method, we compute all pairwise interactions: $d^2$ interactions for $d$ features. The results are displayed in Figure \ref{fig:computation_time} on the left. \\

While our method is slower than existing methods for a small number of features, as the number of features grows, our method becomes \textit{more tractable} than existing methods. This is because other methods require at least $O(d^2)$ separate forward passes of the model to compute the interactions, which is non-trivial to parallelize. However, back-propagating the Hessian through the model is easy to do in parallel on a GPU: such functionality already exists in modern deep learning frameworks. 

\begin{figure}
    \centering
    \includegraphics[width=0.48\columnwidth]{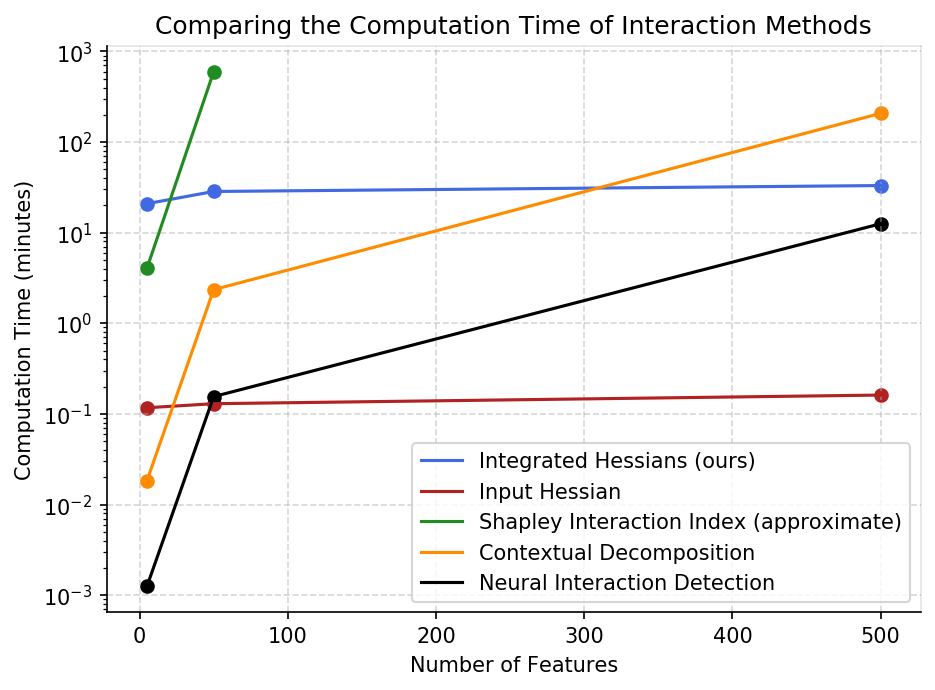}
    \includegraphics[width=0.5\columnwidth]{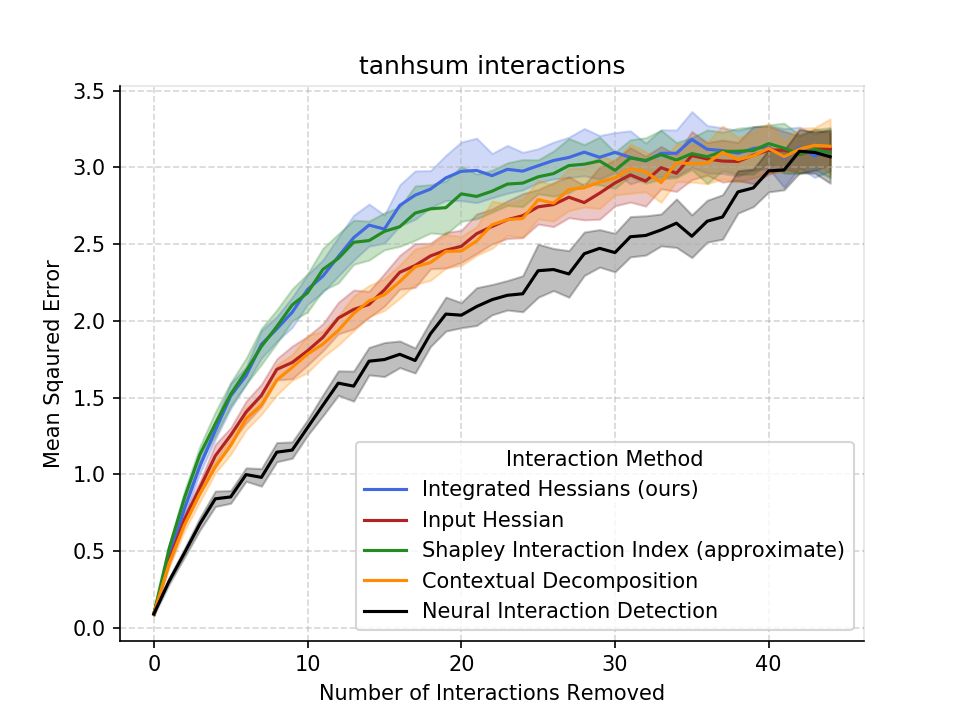}
    \caption{\textit{Left}: The time each methods takes to compute all pairwise interactions on 1000 samples with $d$ features, as a function of $d$. Existing methods are fast when $d$ is small, but scale poorly compared to our proposed method or using the input Hessian directly. \textit{Right}: Benchmarking our methods against others using a modified version of Remove and Retrain \citep{hooker2019benchmark} on simulated interactions. Our method identifies the most important interactions more quickly than all existing methods except the approximate Shapley Interaction Index, which is much more computationally expensive.}
    \label{fig:computation_time}
\end{figure}

\subsection{Quantitative comparison of interaction detection with other methods}

To compare each method, we use the Remove and Retrain benchmark introduced in \citet{hooker2019benchmark}. The benchmark compares feature attributions by progressively ablating the most important features in each sample - ranked according to each attribution method - then retraining the model on the ablated data and measuring the performance drop. The attribution method that incurs the fastest drop in performance is the one that most quickly identifies the most predictive features in the data. 

We generate a simulated regression task with 10 features where each feature is drawn independently from $\mathcal{N}(0, 1)$. The label is an additive sum of 20 interactions with random coefficients normalized to sum to 1, drawn without replacement from all possible pairs of features. A 3-layer neural network's predictions achieve over 0.99 correlation with the true label. We leave further details about experimental setup, as well as how we ``remove'' an interaction between two features to the appendix.

We compare each of the five interaction methods using Remove and Retrain on five different interaction types: $g_{\textrm{tanhsum}}(x_i, x_j) = \tanh(x_i + x_j)$, $g_{\textrm{cossum}}(x_i, x_j) = \cos(x_i + x_j)$, $g_{\textrm{multiply}}(x_i, x_j) = x_i * x_j$, $g_{\textrm{max}}(x_i, x_j) = \max(x_i, x_j)$ and $g_{\textrm{min}}(x_i, x_j) = \min(x_i, x_j)$. The results for $g_{\textrm{tanhsum}}$ are displayed in Figure \ref{fig:computation_time} on the right. Our method most quickly identifies the largest interactions in the data, as demonstrated by the fastest increase in error. We include the remaining results in the appendix, but note here that our method outperforms all existing methods on all interaction types.

In addition to the Remove and Retrain benchmark, we also directly measured the rank correlation between different interaction methods and the true interactions in two synthetic datasets. The label for the first dataset was the sum of all of the multiplicative pairwise interactions, while the label for the second was the sum of pairwise min/max functions. The features for both datasets were five independent Gaussian features. We found that the interactions attained using Integrated Hessians consistently outperformed the other methods, and include details in the appendix. Finally, we tested our approach using the ``Sanity Checks'' proposed in \cite{adebayo2018sanity} to ensure that our interaction attributions were sensitive to network and data randomization, and found that our method passed both sanity checks (see appendix for more details).

\section{Applications of Integrated Hessians}


\subsection{NLP}
\label{sec:nlp}

In the past decade, neural networks have been the go-to model for language tasks, from convolutional \citep{kim2014convolutional} to recurrent \citep{sundermeyer2012lstm}. More recently large, pre-trained transformer architectures \citep{peters2018deep, devlin2018bert} have achieved state of the art performance on a wide variety of tasks. Some previous work has suggested looking at the internal weights of the attention mechanisms in attention-based models \citep{ghaeini2018interpreting, lee2017interactive, lin2017structured, wang2016attention}. However, more recent work has suggested that looking at attention weights may not be a reliable way to interpret models with attention layers \citep{serrano2019attention, jain2019attention, brunner2019identifiability}. To overcome this, feature attributions have been applied to text classification models to understand which words most impacted the classification \citep{liu2019incorporating, lai2019many}. However, these methods do not explain how words interact with their surrounding context.

We download pre-trained weights for DistilBERT \citep{sanh2019distilbert} from the HuggingFace Transformers library \citep{Wolf2019HuggingFacesTS}. We fine-tune the model on the Stanford Sentiment Treebank dataset \citep{socher2013recursive} in which the task is to predict whether or not a movie review has positive or negative sentiment. After 3 epochs of fine-tuning, DistilBERT achieves a validation accuracy of 0.9071 (0.9054 TPR / 0.9089 TNR).\footnote{This performance does not represent state of the art, nor is sentiment analysis representative of the full complexity of existing language tasks. However, our focus in this paper is on explanation and this task is easy to fine-tune without needing to extensively search over hyper-parameters.} We leave further fine-tuning details to the appendix.

In Figure \ref{fig:painfully_funny_and_saturation}, we show interactions generated by Integrated Hessians and attributions generated by Integrated Gradients on an example drawn from the validation set. The figure demonstrates that DistilBERT has learned intuitive interactions that would not be revealed from feature attributions alone. For example, a word like ``painfully," which might have a negative connotation on its own, has a large positive interaction with the word ``funny" in the phrase ``painfully funny." In Figure \ref{fig:not_bad_comp}, we demonstrate how interactions can help us understand one reason why a fine-tuned DistilBERT model outperforms a simpler model: a convolutional neural network (CNN) that gets an accuracy of 0.82 on the validation set. DistilBERT picks up on positive interactions between negation words (``not'') and negative adjectives (``bad'') that a CNN fails to fully capture. Finally, in Figure \ref{fig:painfully_funny_and_saturation}, we use interaction values to reveal saturation effects: many negative adjectives describing the same noun interact positively. Although this may seem counter-intuitive at first, it reflects the structure of language. If a phrase has only one negative adjective, it stands out as the word that makes the phrase negative. At some point, however, describing a noun with more and more negative adjectives makes any individual negative adjective less important towards classifying that phrase as negative.

\begin{figure}
    \centering
    \includegraphics[width=0.99\columnwidth]{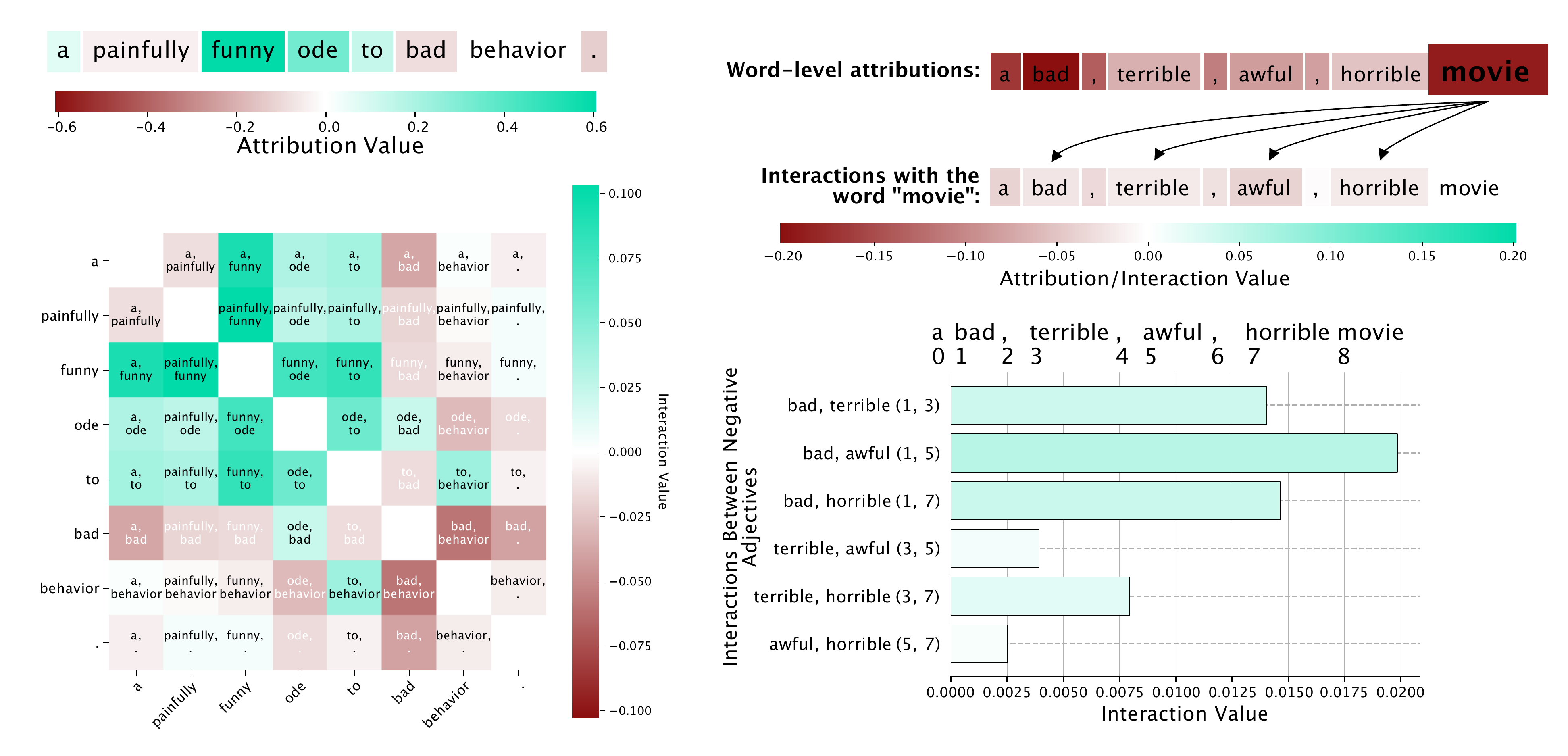}
    \caption{\textit{Left}: Interactions in text reveal learned patterns such as the phrase "painfully funny" having positive interaction despite the word "painfully" having negative attribution. These interactions are not evident from attributions alone. \textit{Right}: Interactions help us reveal an unintuitive pattern in language models: saturation. Although the word "movie" interacts negatively with all negative modifying adjectives, those negative adjectives themselves all interact positively. The more negative adjectives are in the sentence, the less each individual negative adjective matters towards the overall classification of the sentence.}
    \label{fig:painfully_funny_and_saturation}
\end{figure}



    

\subsection{Drug combination response prediction}
\label{sec:drug}

In the domain of anti-cancer drug combination response prediction, plotting Integrated Hessians helps us to glean biological insights into the process we are modeling. We consider one of the largest publicly-available datasets measuring drug combination response in acute myeloid leukemia \citep{tyner2018functional}. Each one of 12,362 samples consists of the measured response of a 2-drug pair tested in the cancer cells of a patient. The 1,235 input features are split between features describing the drug combinations and features describing the cancerous cells, which we modeled using the neural architectures described in \citet{hao2018pasnet} and \citet{10.1093/bioinformatics/btx806}.


According to the first-order explanations, the presence or absence of the drug Venetoclax in the drug combination is the most important feature. We can also easily see that first-order explanations are inadequate in this case -- while the presence of Venetoclax is generally predictive of a more responsive drug combination, the amount of positive response to Venetoclax is predicted to vary across samples (see \autoref{fig:drug_fig}, top left).

Integrated Hessians gives us the insight that some of this variability can be attributed to the drug Venetoclax is combined with. We can see that the model has learned a strong negative interaction between Venetoclax and Artemisinin (see \autoref{fig:drug_fig}, middle), which we can confirm matches the ground truth ascertained from additional external data ($p = 2.31 \times 10^{-4}$, see appendix for details of calculation). Finally, we can gain insight into the variability in the \emph{interaction} values between Venetoclax and Artemisinin by plotting them against the expression level of a pathway containing cancer genes (see \autoref{fig:drug_fig}, bottom). We see that patients with higher expression of this pathway tend to have a more negative interaction (sub-additive response) than patients with lower expression of this pathway. Integrated Hessians helps us understand the interactions between drugs in our model, as well as what genetic factors influence this interaction.

\begin{figure}
    \centering
    \includegraphics[width=0.85\columnwidth]{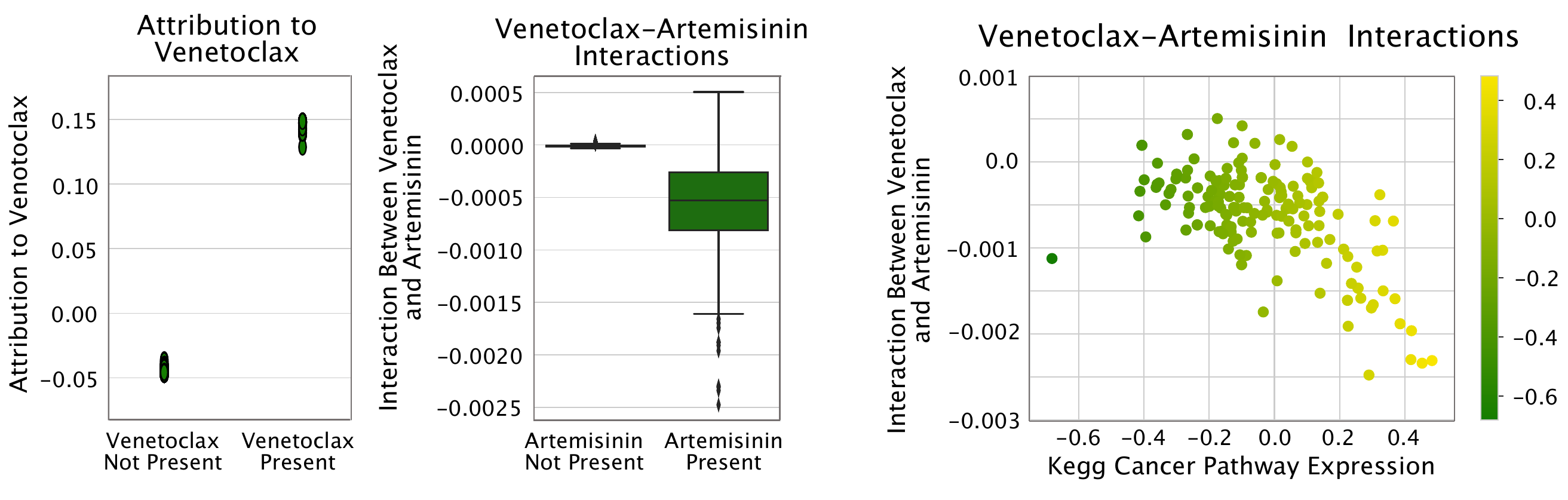}
    \caption{Left: Integrated Gradients values for Venetoclax. Middle: Venetoclax interactions with Artemisinin across all samples. Right: Venetoclax and Artemisinin interaction is driven by expression of genes in cancer samples.}
    \label{fig:drug_fig}
\end{figure}




\section{Conclusion}

In this work we propose a novel method to explain feature interactions in neural networks. The interaction values we propose have two natural interpretations: (1) as the combined effect of two features to the output of a model, and (2) as the explanation of one feature's importance in terms of another. Our method provably satisfies common-sense axioms that previous methods do not - and outperforms previous methods in practice at identifying known interactions on simulated data.

Additionally, we demonstrate how to glean interactions from neural networks trained with a ReLU activation function which has no second-derivative. In accordance with recent work, we show why replacing the ReLU activation function with the softplus activation function at explanation time is both intuitive and efficient.

Finally, we perform several experiments to reveal the utility of our method, from understanding performance gaps between model classes to discovering patterns a model has learned on high-dimensional data. We conclude that although feature attribution methods provide valuable insight into model behavior, such methods by no means end the discussion on interpretability. Rather, they encourage further work in deeper understanding model behavior.

\newpage
\section*{Broader Impact}

We have shown that our proposed technique, Integrated Hessians, is capable of offering significant insight into the behavior of neural networks. As such, it has the potential to significantly impact a wide variety of stakeholders -- ranging from industry practitioners attempting to debug neural networks for deployment to individual consumers looking to understand why a particular algorithmic decision impacted them. While our theoretical and empirical results have conclusively demonstrated the technical soundness of our method, it is instructive to consider some of the specific cases where a method like ours could be used in order to understand the possible social impacts.

We first consider the case of a scientist using our method to gain mechanistic insight into a biological process by modeling their data with a neural network. Since this data is typically high-dimensional with correlated features and large amounts of noise and batch effects, it is possible that the model depends more on spurious correlates than actual causally-meaningful features. If the neural network learns confounded relationships, the feature attributions and feature interactions our method reveals for this network will reflect that confounding. While this is not a technical weakness of our method, it is a very real possibility that it could have negative impact in practical deployment if scientists misunderstand how the feature interaction values relate to the data generating process. The feature interactions are causal \textit{with respect to the model}, but not necessarily \textit{with respect to the true data generating process}. This potential misunderstanding highlights the importance of educating users to this risk, which we will aim to do with example notebooks and tutorials in the open source repository for our code.

We next consider the case of a loan applicant who wants to understand why their loan application was accepted or denied. In this scenario, both the applicant and the lender stand to benefit from our approach. The lender is able to make their most accurate ``black box'' models compliant with the GDPR's ``right to an explanation,'' while the loan applicant is able to gain detailed insight into the reasons their loans was accepted or denied. In this application, there is a high degree of financial incentive for malicious use. For example, one could imagine a lender who wanted to hide the fact that their model unfairly depends on protected characteristics like race or gender. It has previously been shown that first-order interpretability methods (like LIME, SHAP, and Integrated Gradients) are subject to adversarial manipulation, and it is likely that our interaction method is also vulnerable in this way \citep{ghorbani2019interpretation, slack2020fooling}. Therefore, there is a necessity to either make sure the explanations are deployed by trustworthy individuals (e.g. impartial auditors rather than stakeholders), or to deploy feature explanation and interaction tools in conjunction with other tools that can modify underlying models to ensure robustness to this sort of adversarial attack \citep{chen2019robust}.

In all of the discussed cases, we see that our technique has the potential to benefit a wide variety of stakeholders if used properly. We can also see the importance of helping potential users understand how to properly and safely use our method.

\newpage
\bibliography{refs}

\newpage
\appendix
\onecolumn
\section*{Appendix}

\section{Deriving Interaction Values}

In this section, we expand the derivation of the Integrated Hessians formula, further discuss relevant axioms that interaction values should satisfy, and how we compute Integrated Hessians in practice. 

\subsection{The Integrated Hessians Formula}
Here we derive the formula for Integrated Hessians from it's definition: $\Gamma_{i, j}(x) = \phi_j(\phi_i(x))$. We start by expanding out $\phi_j$ using the definition of Integrated Gradients:

\begin{align}
    \Gamma_{i, j} (x) := (x_j - x_j') \times \int_{\beta=0}^1 \frac{\partial \phi_i (x' + \beta (x - x'))}{\partial x_j} d\beta
\end{align}

We consider the function $\frac{\partial \phi_i}{\partial x_j} (x)$, and we first assume that $i \neq j$
\begin{align}
    \frac{\partial \phi_i}{\partial x_j} (x) &= \\
    (x_i - x_i') \times \frac{\partial}{\partial x_j} \Bigg( \int_{\alpha=0}^1 \frac{\partial f(x' + \alpha (x - x'))}{\partial x_i} d\alpha \Bigg) &= \\
    (x_i - x_i') \times \int_{\alpha=0}^1 \frac{\partial}{\partial x_j} \Bigg(  \frac{\partial f(x' + \alpha (x - x'))}{\partial x_i} \Bigg) d\alpha &= \\
    (x_i - x_i') \times \int_{\alpha=0}^1 \alpha \frac{\partial^2 f(x' + \alpha (x - x'))}{\partial x_i \partial x_j} d\alpha
    \label{eq:partial}
\end{align}

where we've assume that the function $f$ satisfies the conditions for the Leibniz Integral Rule (so that integration and differentiation are interchangeable). These conditions require that the derivative of $f$, $\frac{\partial f}{\partial x_i}$ and its second derivative function $\frac{\partial^2 f}{\partial x_i \partial x_j}$ are continuous over $x$ in the integration region, and that the bounds of integration are constant with respect to $x$. It's easy to see that the bounds of integration are constant with respect to $x$. It is also straightforward to see that common neural network activation functions - for example, $\textrm{sigmoid}(x) = \frac{1}{1+e^{-x}}$, $\textrm{softplus}_\beta(x) = \frac{1}{\beta}\log(1 + e^{-\beta x})$, or $\textrm{gelu}(x) = x \Phi(x)$ where $\Phi(x)$ is the cumulative distribution function of the normal distribution - have continuous first and second partial derivatives, which implies compositions of these functions have continuous first and second partial derivatives as well. Although this is not the case with the ReLU activation function, we discuss replacing it with softplus in the main text. \\

We can proceed by plugging equation \ref{eq:partial} into the original definition of $\Gamma_{i, j}(x)$:
\begin{align}
    \Gamma_{i, j} (x) := (x_j - x_j') \times \int_{\beta=0}^1 \frac{\partial \phi_i (x' + \beta (x - x'))}{\partial x_j} d\beta &= \\
    (x_j - x_j') \times \int_{\beta=0}^1 (x'_i - \beta (x_i - x'_i) - x'_i) \int_{\alpha=0}^1 \alpha \frac{\partial^2 f(x' + \alpha (x' - \beta (x - x') - x'))}{\partial x_i \partial x_j} d\alpha d\beta &=\\
    (x_j - x_j') (x_i - x_i') \int_{\beta = 0}^{1} \int_{\alpha = 0}^{1} \alpha \beta \frac{\partial^2 f(x' + \alpha \beta (x - x'))}{\partial x_i \partial x_j} d\alpha d\beta
\end{align}

where all we've done is re-arrange terms. \\

Deriving $\Gamma_{i, i} (x)$ proceeds similarly:

\begin{align}
    \frac{\partial \phi_i}{\partial x_i} (x) &= \\
    \frac{\partial}{\partial x_i} \Bigg((x_i - x_i')\Bigg) \times  \int_{\alpha=0}^1 \frac{\partial f(x' + \alpha (x - x'))}{\partial x_i} d\alpha + (x_i - x_i') \times \frac{\partial}{\partial x_i} \Bigg( \int_{\alpha=0}^1 \frac{\partial f(x' + \alpha (x - x'))}{\partial x_i} d\alpha \Bigg) &= \\
    \int_{\alpha=0}^1 \frac{\partial f(x' + \alpha (x - x'))}{\partial x_i} d\alpha + (x_i - x_i') \times \int_{\alpha=0}^1 \alpha \frac{\partial^2 f(x' + \alpha (x - x'))}{\partial x_i \partial x_j} d\alpha
\end{align}

using the chain rule. After similar re-arrangement, we can arrive at:
\begin{align}
    \Gamma_{i, i}(x) = (x_i - x_i') \int_{\beta = 0}^{1} \int_{\alpha = 0}^{1} \frac{\partial f(x' + \alpha \beta (x - x'))}{\partial x_i} d\alpha d\beta + (x_i - x_i')^2 \times \int_{\beta = 0}^{1} \int_{\alpha = 0}^{1} \alpha \beta \frac{\partial^2 f(x' + \alpha \beta (x - x'))}{\partial x_i \partial x_j} d\alpha d\beta
\end{align}

\subsection{Baselines and Expected Hessians}

Several existing feature attribution methods have pointed out the need to explain relative to a baseline value that represents a lack of information that the explanations are relative to \citep{sundararajan2017axiomatic, shrikumar2017learning, binder2016layer, lundberg2017unified}. However, more recent work has pointed out that choosing a single baseline value to represent lack of information can be challenging in certain domains \citep{kindermans2019reliability, kapishnikov2019segment, sundararajan2018note, ancona2017towards, fong2017interpretable, sturmfels2020visualizing}. As an alternative, \citet{erion2019learning} proposed an extension of IG called Expected Gradients (EG), which samples many baseline inputs from the training set. We can therefore apply EG to itself to get Expected Hessians.
\begin{align}
    \Gamma_{i, j}^{EG} (x) &= \mathbb{E}_{\alpha \beta \sim U(0, 1) \times U(0, 1), x' \sim D}  \Bigg[  (x_i - x_i')(x_j - x_j')
    \alpha \beta \frac{\partial^2 f(x' + \alpha \beta (x - x'))}{\partial dx_i \partial dx_j} \Bigg ] \\
    \Gamma_{i, i}^{EG} (x) &= \mathbb{E}_{\alpha \beta \sim U(0, 1) \times U(0, 1), x' \sim D}  \Bigg[ (x_i - x_i') \frac{\partial f(x' + \alpha \beta (x - x'))}{\partial x_i} + (x_i - x_i')^2
    \alpha \beta \frac{\partial^2 f(x' + \alpha \beta (x - x'))}{\partial dx_i \partial dx_j} \Bigg ]
\end{align}
where the expectation is over $x' \sim D$ for an underlying data distribution $D$, $\alpha \sim U(0, 1)$ and $\beta \sim U(0, 1)$. This formulation can be useful in the case where there does not exist a single, natural baseline. The derivation for Expected Hessians follows identical steps as the derivation for Integrated Hessians while observing that the integrals can be viewed as integrating over the product of two uniform distributions $\alpha\beta \sim U(0, 1) \times U(0, 1)$. We use Integrated Hessians for all of the examples in the main text - however, the additional examples shown in the appendix use the Expected Hessians formulation. 

\subsection{Axioms Satisfied by Integrated Hessians}

\subsubsection{Self and Interaction Completeness}
The proof that Integrated Hessians satisfies the axioms \textit{interaction completeness} and \textit{self completeness} are straightforward to show, but we include the step-by-step derivation here. First, we note that $\phi_i(x') = 0$ for any $i$ because $x_i' - x_i' = 0$. Then, by completeness of Integrated Gradients, we have that:
\begin{align}
    \sum_j \Gamma_{i, j}(x) = \phi_i(x) - \phi_i(x') = \phi_i(x) \label{eq:subs}
\end{align}

Re-arrangement gives us the \textit{self completeness} axioms:
\begin{align}
    \Gamma_{i, i}(x) = \phi_i(x) \textrm{ if } \Gamma_{i, j}(x) = 0, \forall j \neq i
\end{align}

Since Integrated Gradients satisfies completeness, we have:
\begin{align}
    \sum_i \phi_i(x) = f(x) - f(x')
\end{align}

Making the appropriate substitution from equation \ref{eq:subs} shows the \textit{interaction completeness} axiom:
\begin{align}
    \sum_i \sum_j \Gamma_{i, j} (x) = f(x) - f(x')
\end{align}

\subsubsection{Sensitivity}
Integrated Gradients satisfies an axiom called sensitivity, which can be phrased as follows. Given an input $x$ and a baseline $x'$, if $x_i = x_i'$ for all $i$ except $j$ where $x_j \neq x_j'$ and if $f(x) \neq f(x')$, then $\phi_j (x) \neq 0$. Specifically, by completeness we know that $\phi_j(x) = f(x) - f(x')$. Intuitively, this is saying that if only one feature differs between the baseline and the input and changing that feature changes the output, then the amount the output changes should be equal to the importance of that feature. \\

We can extend this axiom to the interaction case by considering the case when two features differ from the baseline. We call this axiom \textit{interaction sensitivity}, and can be described as follows. If an input $x$ and a baseline $x'$ are equal everywhere except $x_i \neq x_i'$ and $x_j \neq x_j'$, and if $f(x) \neq f(x')$, then: $\Gamma_{i, i}(x) + \Gamma_{j, j}(x) + 2 \Gamma_{i, j}(x) = f(x) - f(x') \neq 0$ and $\Gamma_{\ell, k} = 0$ for all $\ell, k \neq i, j$. Intuitively, this says that if the only features that differ from the baseline are $i$ and $j$, then the difference in the output $f(x) - f(x')$ must be solely attributable to the main effects of $i$ and $j$ plus the interaction between them. This axiom holds simply by applying \textit{interaction completeness} and observing that $\Gamma_{\ell, k} (x) = 0$ if $x_\ell = x'_\ell$ or $x_k = x'_k$.\\

\subsubsection{Implementation Invariance}
The implementation invariance axiom, described in the original paper, states the following. For two models $f$ and $g$ such that $f = g$, then $\phi_i(x; f) = \phi_i(x; g)$ for all features $i$ and all points $x$ regardless of how $f$ and $g$ are implemented. Although it seems trivial, this axiom does not necessarily hold for attribution methods that use the implementation or structure of the network in order to generate attributions. Critically, this axiom also does not hold for the interaction method proposed by \citet{tsang2017detecting}, which looks at the first layer of a feed forward neural network. Two networks may represent exactly the same function but differ greatly in their first layer.\\

This axiom is trivially seen to hold for Integrated Hessians since it holds for Integrated Gradients. However, this axiom is desirable because without it, it may mean that attributions/interactions are encoding information about unimportant aspects of model structure rather than the actual decision surface of the model.

\subsubsection{Linearity}
Integrated Gradients satisfies an axiom called linearity, which can be described as follows. Given two networks $f$ and $g$, consider the output of the weighted ensemble of the two networks $a f (x) + b g(x)$. Then the attribution $\phi_i(x; af + bg)$ of the weighted ensemble equals the weighted sum of attributions $a\phi_i(x; f) + b\phi_i(x; g)$ for all features $i$ and samples $x$. This axiom is desirable because it preserves linearity within a network, and allows easy computation of attributions for network ensembles.\\

We can generalize linearity to interactions using the \textit{interaction linearity} axiom: $\Gamma_{i, j} (x; af + bg) = a \Gamma_{i, j} (x; f) + b \Gamma_{i, j} (x; g)$ for any $i, j$ and all points $x$. Given that $\Gamma_{i, j}$ is composition of linear functions $\phi_i$, $\phi_j$ in terms of the parameterized networks $f$ and $g$, it itself is a linear function of the networks and therefore Integrated Hessians satisfies \textit{interaction linearity}. 

\subsubsection{Symmetry-Preserving}
We say that two features $x_i$ and $x_j$ are \textit{symmetric} with respect to $f$ if swapping them does not change the output of $f$ anywhere. That is, $f(\cdots, x_i, \cdots, x_j, \cdots) = f(\cdots, x_j, \cdots, x_i, \cdots$). The original paper shows that Integrated Gradients is \textit{symmetry-preserving}, that is, if $x_i$ and $x_j$ are symmetric with respect to $f$, and if $x_i = x_j$ and $x_i' = x_j'$ for some input $x$ and baseline $x'$, then $\phi_i(x) = \phi_j(x)$. We can make the appropriate generalization to interaction values: if the same conditions as above hold, then $\Gamma_{k, i} (x) = \Gamma_{k, j} (x)$ for any feature $x_k$. This axiom holds since, again $\Gamma_{k, i} (x) = \phi_i (\phi_k(x))$ and $\phi_i, \phi_j$ are symmetry-preserving. This axiom is desirable because it says that if two features are functionally equivalent to a model, then they must interact the same way with respect to that model.

\subsection{Approximating Integrated Hessians in Practice}
To compute Integrated Gradients in practice, \citet{sundararajan2017axiomatic} introduce the following discrete sum approximation:

\begin{align}
    \hat{\phi}_i(x) = (x_i - x_i') \times \sum_{\ell = 1}^k \frac{\partial f(x' + \frac{\ell}{k} (x - x'))}{\partial x_i} \times \frac{1}{k}
\end{align}

where $k$ is the number of points used to approximate the integral. To compute Integrated Hessians, we introduce a similar discrete sum approximation:

\begin{align}
    \hat{\Gamma}_{i, j}(x) = (x_i - x_i')(x_j - x_j') \times \sum_{\ell = l}^k \sum_{p = 1}^m \frac{\ell}{k} \times \frac{p}{m} \times \frac{\partial f(x' + (\frac{\ell}{k} \times \frac{p}{m})  (x - x'))}{\partial x_i \partial x_j} \times \frac{1}{km}
\end{align}

Typically, it is easiest to compute this quantity when $k = m$ and the number of samples drawn is thus a perfect square - however, when a non-square number of samples is desired we can generate a number of sample points from the product distribution of two uniform distributions such that the number is the largest perfect square above the desired number of samples, and index the sorted samples appropriately to get the desired number. The above formula omits the first-order term in $\Gamma_{i, i}(x)$ but it can be computed using the same principle. \\

Expected Hessians has a similar, if slightly easier form:

\begin{align}
    \hat{\Gamma}_{i,j}^{EG}(x) = (x_i - x_i')(x_j - x_j') \sum_{\ell}^k \zeta_\ell \times \frac{\partial f(x' + \zeta_\ell  (x - x'))}{\partial x_i \partial x_j} \times \frac{1}{k}
\end{align}

where $\zeta_\ell$ is the $\ell$th sample from the product distribution of two uniform distributions. We find that in general that less than 300 samples are required for any given problem to approximately satisfy interaction completeness. For most problems, a number far less than 300 suffices (e.g. around 50) although this is model and data dependent: larger models and higher-dimensional data generally require more samples than smaller models and lower-dimensional data.

\section{Comparing Against Existing Methods}

In this section, we further discuss the relationship between our method and six existing methods:
\begin{itemize}
    \item Integrated Hessians: the method we propose in this paper.
    \item Input Hessian: simply use the hessian at the input instance. Using the input hessian at a particular instance to measure the interaction values is the natural generalization of using the gradient to measure the importance of individual features, as done by \cite{simonyan2013deep}.
    \item Contextual Decomposition (CD): this method was introduced by \citet{murdoch2018beyond} for LSTMs and extended to feed-forward and convolutional architectures by \citet{singla2019understanding}. We focus on the generalized version introduced by the latter. 
    \item Neural Interaction Detection (NID): introduced by \citet{tsang2017detecting}, this method generated interactions by inspecting the weights of the first layer of a feed-forward neural network.
    \item Shapley Interaction Index (SII): this method, introduced by \citet{grabisch1999axiomatic}, is a way to allocate credit in coalitional game theory. It can be used to explain interactions in neural networks similarly to how the Shapley value is used to explain attributions in \cite{lundberg2017unified}. 
    \item Group Expected Hessian (GEH): this method aggregates the input hessian over many samples with respect to a bayesian neural network. It was introduced in \citet{cui2019recovering}.
    \item Deep Feature Interaction Maps (DFIM): this method determines interactions by seeing how much attributions change when features are perturbed, and was introduced by \citet{greenside2018discovering}.
\end{itemize}

We first discuss the practical considerations regarding each method, and then evaluate whether or not each method satisfies the axioms we identified in the previous section.

\subsection{Practical Considerations}
In this section we describe four properties desirable properties that interaction methods may or may not satisfy:

\begin{itemize}
    \item Local: a method is local if it operates at the level of individual predictions, as opposed to methods that are global, which operate over the entire dataset. Which is better is task dependent, but local methods are often more flexible because they can be aggregated globally \citep{lundberg2020local}.
    \item Architecture Agnostic: a method is architecture agnostic if it can be applied to any neural network architecture. 
    \item Data Agnostic: a method is data agnostic if it can be applied to any type of data, no matter the structure.
    \item Higher-Order Interactions: this paper primarily discusses interactions between pairs of features. However, some existing methods are able to generate interactions between groups of features larger than 2. Those methods that do are said to be able to generate higher-order interactions.
\end{itemize}

In Table \ref{tab:practical}, we compare which methods satisfy which properties. We note that GEH and DFIM are not architecture or data agnostic, respectively. Therefore, we do not include them in empirical comparisons, as they cannot be run on feed-forward neural networks with arbitrary data. We also note that our method does not generate higher-order interactions. Although in principle one generate $k$th order interactions by recursively applying integrated gradients to itself $k$ times, we do not discuss doing so in this paper.

\begin{table*}
    \centering
    \begin{tabular}{ |c|c|c|c|c|  }
     \hline
     \textbf{Interaction Method} & Local & \makecell{Architecture \\Agnostic} & \makecell{Data \\Agnostic} & \makecell{Higher-Order\\ Interactions}\\
     \hline
     Input Hessian & \checkmark & \checkmark & \checkmark &  \\
     \hline
     Integrated Hessians (ours) & \checkmark & \checkmark & \checkmark & \\
     \hline
     CD \citep{singh2018hierarchical} & \checkmark & $\sim$* & \checkmark & \checkmark \\
     \hline
     NID \citep{tsang2017detecting} & & \checkmark & \checkmark & \checkmark \\
     \hline
     SII \citep{grabisch1999axiomatic} & \checkmark & \checkmark & \checkmark & \checkmark\\
     \hline
     GEH \citep{cui2019recovering} & \checkmark & & \checkmark & \\
     \hline
     DFIM \citep{greenside2018discovering} & \checkmark & \checkmark & & \\
     \hline
    \end{tabular}
    \caption{Comparing the practical properties of existing interaction methods. Because GEH and DFIM are not architecture or data agnostic respectively, we omit them from empirical comparisons. *Contextual Decomposition was originally introduced for LSTMs in \citet{murdoch2018beyond}, and generalized to feed-forward and convolutional architectures in \citet{singh2018hierarchical}. However, the method has yet to be adapted to other architectures such as transformers \cite{devlin2018bert}.}
    \label{tab:practical}
\end{table*}

\subsection{Theoretical Considerations}

In this section, we evaluate which methods satisfy the axioms we presented in the previous section. The results are presented in Table \ref{tab:axioms}. We note that Integrated Hessians and the Shapley Interaction Index share many theoretical properties, except that the Shapley Interaction Index trades completeness for the recursive axioms, which state that higher order interactions should be recursively defined from lower order interactions. Whether or not this is preferable to satisfying completeness seems subjective. \\

The hessian satisfies implementation invariance for the same reason that our method does: it only relies on the value of the network function and its derivatives. The hessian also satisfies linearity since the derivative is a linear operator. However, second derivatives are not symmetry preserving for general functions (this is easy to see by considering the multiplication of two variables). Contextual decomposition is symmetry preserving by definition, but fails on implementation invariance due to the way it splits the bias. 

\begin{table*}
    \centering
    \begin{tabular}{ |c|c|c|c|c|c|c|  }
     \hline
     \textbf{Interaction Method} & \makecell{Interaction \\Completeness} & \makecell{Interaction \\Sensitivity} & \makecell{Implementation \\Invariant} & \makecell{Symmetry\\ Preserving} & \makecell{Interaction \\Linearity} & \makecell{Recursive \\Axioms*} \\
     \hline
     Input Hessian &  &  &  \checkmark & & \checkmark & \\
     \hline
     Integrated Hessians (ours) & \checkmark & \checkmark & \checkmark & \checkmark & \checkmark & \\
     \hline
     CD \citep{singh2018hierarchical} & & $\sim$** & & \checkmark & & \\
     \hline
     NID \citep{tsang2017detecting} & & & & & & \\
     \hline
     SII \citep{grabisch1999axiomatic} & & \checkmark & \checkmark & \checkmark & \checkmark & \checkmark \\
     \hline
     GEH \citep{cui2019recovering} & & & \checkmark & & \checkmark & \\
     \hline
     DFIM \citep{greenside2018discovering} &  &  & $\sim$*** & $\sim$*** & $\sim$*** & \\
     \hline
    \end{tabular}
    \caption{Comparing the theoretical guarantees of existing interaction methods. We define each axiom in the appendix. *The recursive axioms are discussed in \citet{grabisch1999axiomatic} and guarantee that higher-order interactions satisfy a specific recurrence relation. **Contextual Decomposition does not satisfy the exact sensitivity axiom we present in the appendix, but it does guarantee that features equal to 0 will have zero interaction with any other feature. ***Whether or not Deep Feature Interaction Maps satisfies these properties relies on the underlying feature attribution method used: if the underlying method satisfies the properties, so does DFIM. }
    \label{tab:axioms}
\end{table*}

\subsection{Quantitative Evaluations}

In this section, we elaborate on the quantitative comparisons presented in the main text. We also mention here that we do not compare against GEH and DFIM, because they are not applicable to feed-forward neural networks with continuous data.

\subsubsection{Remove and Retrain}

The Remove and Retrain benchmark first starts with a trained network on a given dataset. It ranks the features most important for prediction on every sample in the dataset according to a given feature attribution method. Using the ranking, it iteratively ablates the most important features in each sample and then re-trains the model on the ablated data. In order to run this method, it is necessary to be able to ablate a feature, which the original paper does by mean or zero imputation \citep{hooker2019benchmark}. However, this presents a problem when trying to compare methods that explain interactions between features rather than attributions to features.\\

Unlike ablating features, it is not straightforward to ablate an interaction between two features. Ablating the both of the features in the interaction doesn't work, because it mixes main effects with interactions. Consider the function $f(x_1, x_2, x_3, x_4) = x_1 + x_2 + 0.1 * x_3 * x_4$. For $f(1.0, 1.0, 1.0, 1.0) = 2.1$, the largest and only interaction is between $x_3$ and $x_4$. However, ablating the pair $x_1, x_2$ would incur a larger performance hit because the features $x_1$ and $x_2$ have larger main effects than $x_3$ and $x_4$. \\

Instead, we opt to generate simulated data where the interactions are \textit{known}, and then ablate the interactions in the labels directly. We generate a simulated regression task with 10 features where each feature is drawn independently from $\mathcal{N}(0, 1)$. The label is an additive sum of 20 interactions with random coefficients, drawn without replacement from all possible pairs of features. That is, for some specified interaction function $g$, we generate the label $y$ as:
\[ \sum_{i = 1}^{20} \alpha_i g(x_{i, 1}, x_{i, 2}) \]
where $x_{i, 1}, x_{i, 2}$ are the pair of features chosen to be part of the $i$th interaction and $\alpha_i$ are random coefficients drawn from a uniform distribution and normalized to sum to 1:
\[\sum_{i = 1}^{20} \alpha_i = 1\]

To ablate an interaction, we multiply the interaction in the label with gaussian noise, which ensures ablating the largest interactions adds the most amount of noise the the label. For example, let $f(x_1, x_2, x_3) = 0.5 * g(x_1, x_2) + 0.3 g(x_1, x_3)$. To ablate the interaction between $x_1$ and $x_2$, we compute the ablated label $\hat{f}(x_1, x_2, x_3) = \epsilon * 0.5 * g(x_1, x_2) + 0.3 g(x_1, x_3)$ where $\epsilon \sim \mathcal{N}(0, 1)$. The interaction method that identifies the largest interactions in the data will add the largest amount of random noise into the label, thus increasing error the fastest. \\

On the simulated data mentioned in the main text, we train a 3 layer neural network with 64 hidden units each and ReLU activation. It achieves near-perfect performance on the simulated regression task: explaining over 99\% of variance in the label. For each progressive ablation of an interaction, we retrain the network 5 times and re-evaluate performance. We then plot the mean and standard deviation of performance on a held-out set, as recommended by the original paper \citep{hooker2019benchmark}. In the main-text, we showed the results for $g_{\textrm{tanhsum}}(x_i, x_j) = \tanh(x_i + x_j)$. We show results for the four additional interaction types here:
\begin{itemize}
    \item $g_{\textrm{cossum}}(x_i, x_j) = \cos(x_i + x_j)$
    \item $g_{\textrm{multiply}}(x_i, x_j) = x_i * x_j$
    \item $g_{\textrm{maximum}}(x_i, x_j) = \max(x_i, x_j)$
    \item $g_{\textrm{minimum}}(x_i, x_j) = \min(x_i, x_j)$
\end{itemize}

The results are shown in Figure \ref{fig:additional_roar}. They show that our method consistently outperforms all methods except the monte-carlo estimation of the Shapley Interaction Index, which it performs just as well as our method. We note here that, as discussed in the main text, our method is much more computationally tractable than the Shapley Interaction Index, even using monte-carlo estimation. We also note here that Neural Interaction Detection \cite{tsang2017detecting} is not a local interaction method; rather, it detects interactions globally. To compare against it, we simply ablate the top-ranked interaction globally in all samples. 

\begin{figure}
    \centering
    \includegraphics[width=0.45\columnwidth]{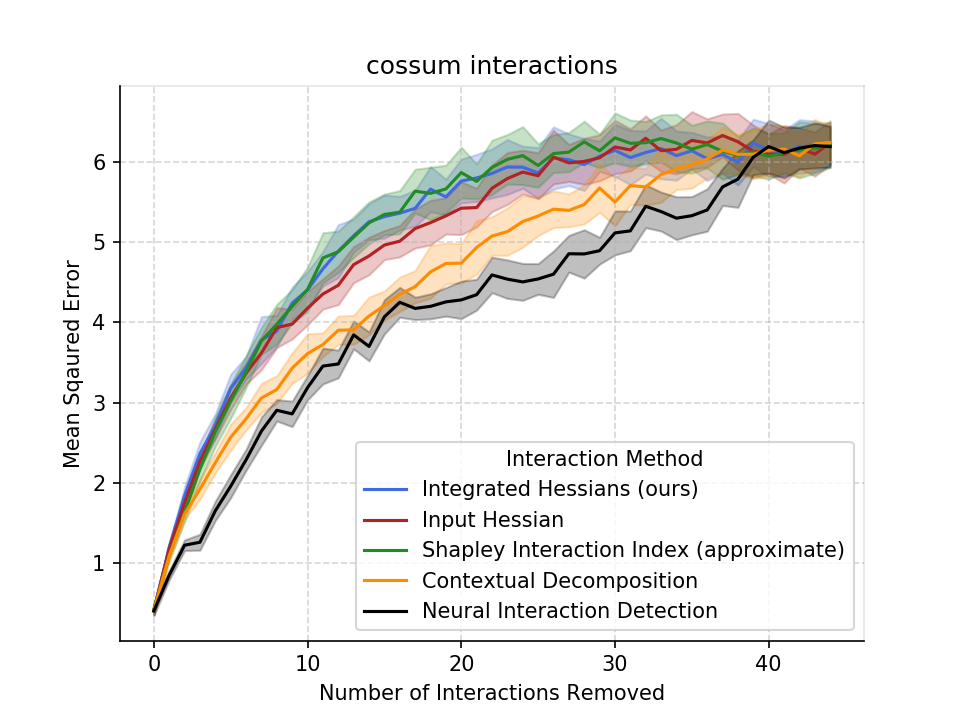}
    \includegraphics[width=0.45\columnwidth]{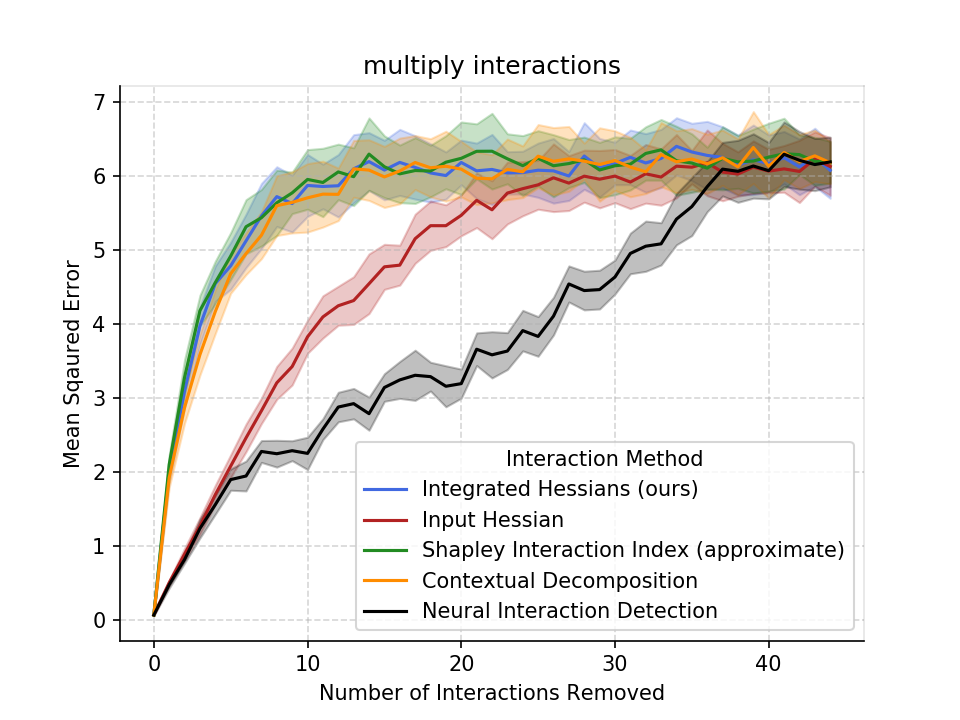}
    \includegraphics[width=0.45\columnwidth]{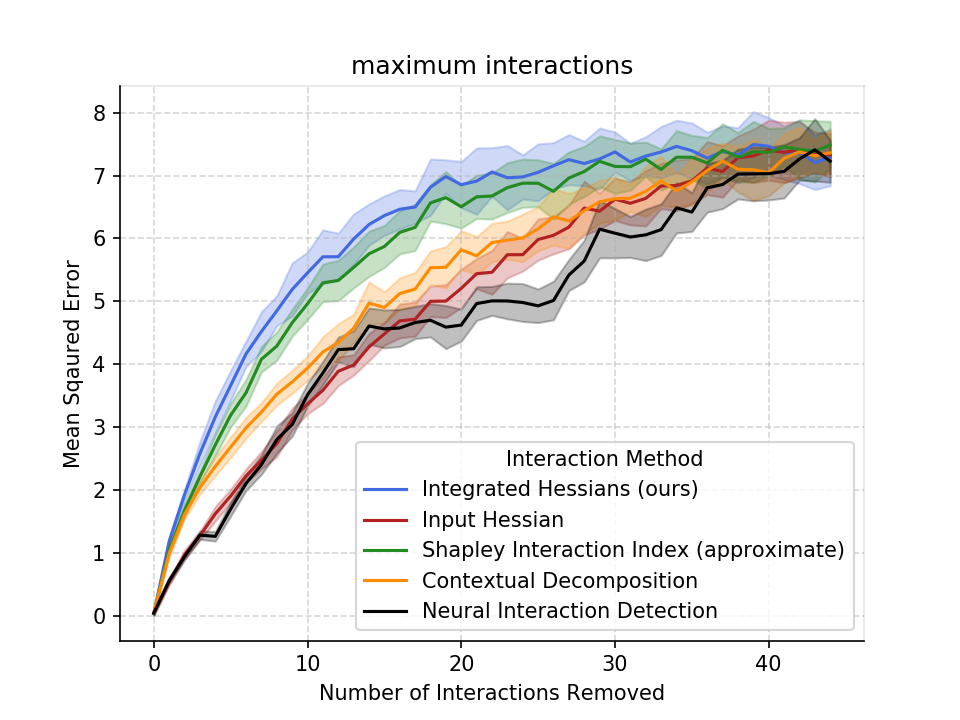}
    \includegraphics[width=0.45\columnwidth]{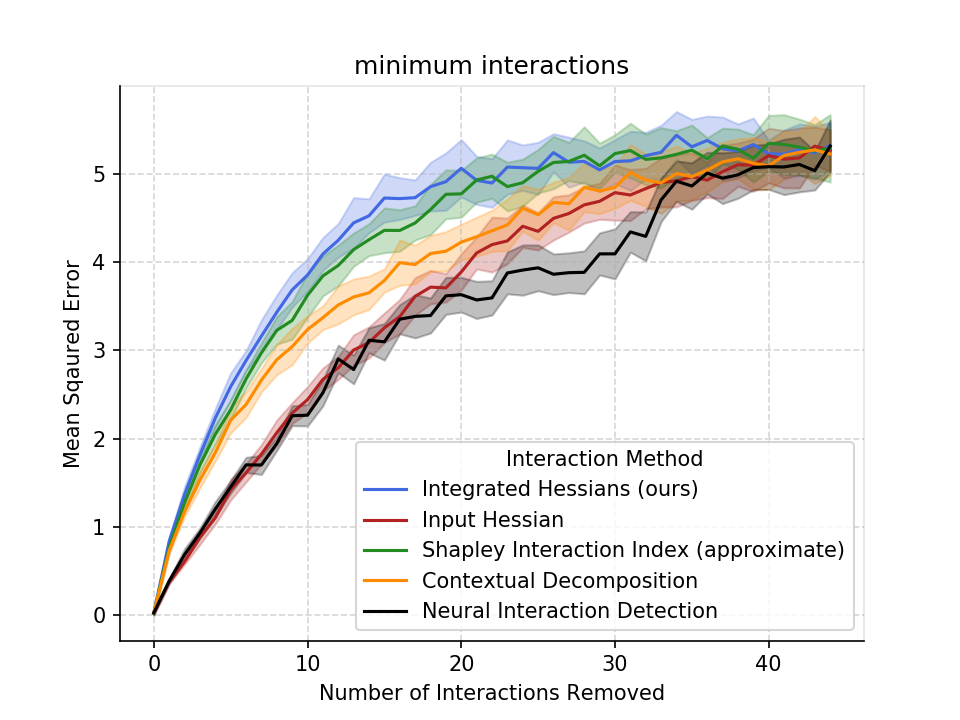}
    \caption{Additional comparisons against existing methods on known simulated interactions. The charts plot the mean and standard deviation across 5 re-trainings of the model as a function of ablated interactions. Our method (in blue) outperforms all existing methods on all interaction types. }
    \label{fig:additional_roar}
\end{figure}

\subsubsection{Rank Correlation and Sanity Checks}

We also quantitatively compared our method to other approaches by directly measuring the rank correlation between the true interactions and the interactions detected by each approach. We were able to do this because our data were synthetic. We created two synthetic datasets, both sharing the same set of features. The synthetic features consisted of five 0-mean unit-variance independent Gaussian random variables. For the first dataset (Multiplicative Interactions), the synthetic label was the sum of each of the ten possible pairwise multiplicative interactions between features with coefficients ranging in magnitude from 10 to 1:
\begin{equation}
    y = 10 x_1 x_2 - 9 x_1 x_3 + 8 x_1 x_4 + \cdots - 1 x_4 x_5.
\end{equation}

For the second dataset (Min/Max Interactions), the synthetic label was the sum of each of ten pairwise interactions, where each was a scalar coefficient times a pairwise minimum or maximum function:
\begin{equation}
    y = 10 \times \textrm{max}(x_1,x_2) - 9 \times \textrm{min}(x_1,x_3) + 8 \times \textrm{min}(x_1, x_4) - \cdots + \textrm{max}(x_4,x_5).
\end{equation}

For each dataset, we trained a neural network with two hidden layers and Tanh non-linearities until convergence (validation set $> R^2:0.99$). For all methods where local attributions could be attained (Integrated Hessians, Inputs * Hessian, CD, SII), we compared the true interaction contribution for each sample to the interaction detected by each attribution method. To translate local interactions into global interactions, we simply took the global interaction to be the average magnitude of interactions across all samples \citep{lundberg2020local,cui2019recovering}. Due to the small number of features (5) it was feasible to exhaustively compute the Shapley Interaction Index. We found that for both the multiplicative and non-multiplicative interactions, both globally and locally, Integrated Hessians was consistently the top performing method along with the Shapley Interaction Index (see \autoref{tab:rank_corr}).\\

\begin{table*}
    \centering
    \begin{tabular}{ |c|c|c|c|c| }
     \hline
     \textbf{Interaction Method} & \makecell{Multiplicative \\(Global)} & \makecell{Multiplicative \\(Local)} & \makecell{Non-Multiplicative\\ (Global)} & \makecell{Non-Multiplicative \\(Local)} \\
     \hline
     Integrated Hessians (ours) & \textbf{1.000} & \textbf{0.991}  &  \textbf{1.000} & \textbf{0.313} \\
     \hline
     SII \citep{grabisch1999axiomatic} & \textbf{1.000}  & \textbf{0.992}  &  \textbf{1.000} & 0.271 \\
     \hline
     Hessian \citep{cui2019recovering} & \textbf{1.000}  & 0.008  &  \textbf{1.000} & -0.086 \\
     \hline
     NID \citep{tsang2017detecting} & 0.855  & n/a  &  0.878 & n/a \\
     \hline
     CD \citep{singh2018hierarchical} & 0.794  & 0.002  &  -0.285 & 0.000\\
     \hline
    \end{tabular}
    \caption{Rank correlation between each interaction detection method and true model interactions, both locally and globally for two different interaction types (multiplicative and non-multiplicative).}
    \label{tab:rank_corr}
\end{table*}

Finally, to ensure that our interaction attributions were sensitive to network and data randomization, we tested our approach using the ``Sanity Checks'' proposed in \cite{adebayo2018sanity}. Using the same data and network architecture described in the previous experiment, we first fit a network and found the interactions using Integrated Hessians. We then compared the rank correlation of these interactions with the interactions attained from explaining (a) a network with randomly initialized weights and (b) a network trained to convergence on the training set on data where the labels are shuffled at random (but the features are the same). We found that in \emph{both} settings, the Spearman correlation between the true and randomized interactions was 0.

\section{Effects of Smoothing ReLU Networks}

\subsection{Proof of Theorem 1}

\begin{theorem} For a one-layer neural network with softplus\textsubscript{$\beta$} non-linearity, $f_{\beta}(x) =$ softplus\textsubscript{$\beta$}$(w^Tx)$, and $d$ input features, we can bound the number of interpolation points $k$ needed to approximate the Integrated Hessians to a given error tolerance $\epsilon$ by $k \leq \mathcal{O}(\frac{d \beta^2}{\epsilon})$.
\end{theorem}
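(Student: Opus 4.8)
The plan is to make the smoothness of softplus quantitative and feed it into a standard quadrature error bound. \textbf{Step 1 (explicit Hessian).} First I would write the integrand in closed form. Since $f_\beta(x) = \mathrm{softplus}_\beta(w^Tx)$, we have $\partial^2 f_\beta / \partial x_i \partial x_j = \beta\, w_i w_j\, \sigma'(\beta w^Tx)$, where $\sigma$ is the logistic sigmoid and $\sigma'(z) = \sigma(z)(1-\sigma(z)) \in [0, \tfrac{1}{4}]$. The important structural fact is that, unlike ReLU, softplus is smooth and each of its derivatives is bounded by a fixed power of $\beta$ times an absolute constant; this is exactly what lets the second-order quadrature converge, and it is the property the theorem is designed to exploit.

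\textbf{Step 2 (quadrature error).} Recall that $\Gamma_{i,j}(x)$ is a double path integral and that $\hat\Gamma_{i,j}(x)$ is the right-endpoint Riemann sum with $k$ subdivisions given in the appendix. I would control the discretization error with the elementary fact that the $k$-point Riemann approximation of $\int_0^1 g$ differs from the true value by $\mathcal{O}\!\left(\tfrac{1}{k}\int_0^1 |g'(u)|\,du\right)$, i.e.\ by $\mathcal{O}(1/k)$ times the total variation of the integrand. Because the integrand depends on the two path parameters only through their product, I would either analyze the product grid on $[0,1]^2$ directly or reparametrize by the product $u = st$ of the path parameters $s,t$ (whose density is $-\log u$), noting that the extra factor of $st$ multiplying the Hessian removes the logarithmic singularity at $u=0$ and keeps the integrand bounded.

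\textbf{Step 3 (the $\beta^2$ factor).} The heart of the argument is bounding $\int_0^1 |g'|$ for the path integrand $g$. Along the straight-line path the Hessian enters through $\sigma'(\beta z(u))$ with $z(u) = w^Tx' + u\, w^T(x-x')$. Differentiating in $u$ pulls down a factor $\beta\, w^T(x-x')\, \sigma''(\beta z(u))$; combined with the explicit $\beta$ prefactor already present in the Hessian, the derivative of the integrand is $\mathcal{O}(\beta^2)$ times bounded quantities, using that $\sigma'$ and $\sigma''$ are bounded by absolute constants. This is precisely where the two powers of $\beta$ come from: one from the second derivative of softplus and one from the chain rule along the integration path. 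I expect this total-variation estimate --- tracking both factors of $\beta$ cleanly through the double integral and its logarithmic reweighting --- to be the main obstacle; everything else is bookkeeping.

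\textbf{Step 4 (summation and solving for $k$).} Finally I would assemble the per-pair error bound, which is $\mathcal{O}(\beta^2/k)$ times magnitudes of the form $|(x_i-x_i')w_i|$, $|(x_j-x_j')w_j|$ and $|w^T(x-x')|$, and sum it over the feature pairs. Under the boundedness and normalization of $w$ and $x-x'$ used in the paper, aggregating over the $d$ input coordinates contributes the linear factor $d$, giving a total error of $\mathcal{O}(d\beta^2/k)$. Setting this at most $\epsilon$ and solving for $k$ yields $k \leq \mathcal{O}(d\beta^2/\epsilon)$, as claimed. The conclusion matches the intuition in \autoref{fig:grad_path}: smoothing (smaller $\beta$) shrinks the variation of the gradient field along the path, so fewer interpolation points are required to reach a fixed tolerance.
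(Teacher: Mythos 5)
Your proposal is correct and follows essentially the same route as the paper's proof: bound the Riemann-sum error by $\mathcal{O}(1/k)$ times the variation of the path integrand, then show that this variation is $\mathcal{O}(\beta^2)$ --- one factor of $\beta$ from the softplus Hessian prefactor and one from the chain rule along the path (the paper makes this explicit by maximizing the coordinates of $\nabla_x h_{ij}$, obtaining the constant $\beta^2/(6\sqrt{3})$), with the factor $d$ arising from the dot product over input coordinates exactly as in your Step 4. Your total-variation bound is just the general form of the paper's left-minus-right Riemann sum gap (which the paper justifies via monotonicity of the integrand from the zeros baseline), so the two arguments coincide in substance.
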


\begin{proof}
As pointed out in \citet{sundararajan2017axiomatic} and \citet{sturmfels2020visualizing}, completeness can be used to assess the convergence of the approximation. We first show that decreasing $\beta$ improves convergence for Integrated Gradients. In order to accurately calculate the Integrated Gradients value $\Phi$ for a feature $i$, we want to be able to bound the error between the approximate computation and exact value. The exact value is given as:

\begin{equation}
    \Phi_i(\theta,x,x') = (x_i - x_i') \times \int_{\alpha=0}^{1} \frac{ \partial f ( x'(1-\alpha) + x \alpha ) }{ \partial x_i} d\alpha.
\end{equation}

To simplify notation, we can define the partial derivative that we want to integrate over in the $i$th coordinate as $g_i(x) = \frac{\partial F(x)}{\partial x_i}$:

\begin{equation}
    \Phi_i(\theta,x,x') = (x_i - x_i') \times \int_{\alpha=0}^{1} g_i(x'\alpha + x(1-\alpha)) d\alpha.
\end{equation}

Since the single layer neural network with softplus activation is monotonic along the path, the error in the approximate integral can be lower bounded by the left Riemann sum $L_k$:

\begin{equation}
    L_k = \frac{\| x - x' \|}{k} \sum_{i=0}^{k-1} g_i (x' + \frac{i}{k}(x - x') ) \leq \int_{\alpha=0}^{1} g_i(x'\alpha + x(1-\alpha)) d\alpha.
\end{equation}

and can likewise be upper-bounded by the right Riemann sum $R_k$:

\begin{equation}
    \int_{\alpha=0}^{1} g_i(x'\alpha + x(1-\alpha)) d\alpha \leq  R_k = \frac{\| x - x' \|}{k} \sum_{i=1}^{k} g_i (x' + \frac{i}{k}(x - x') ). 
\end{equation}

We can then bound the magnitude of the error between the Riemann sum and the true integral by the difference between the right and left sums:

\begin{equation}
\label{eq:startOfDifference}
    \epsilon \leq | R_k - L_k | = \frac{\| x - x' \|_2}{k} | g_i (x) - g_i (x') |.
\end{equation}

By the mean value theorem, we know that for some $\eta \in [0,1]$ and $z = x' + \eta (x - x') $, $g_i (x) - g_i (x') = \nabla_x g_i(z)^\top (x - x')$. Therefore:
\begin{equation}
    \epsilon \leq \frac{\| x - x' \|}{k} \nabla_x g_i(z)^\top (x - x').
\end{equation}
Rewriting in terms of the original function, we have:
\begin{equation}
    \epsilon \leq \frac{\| x - x' \|}{k} \sum_{j=0}^{d} \bigg( \frac{\partial^2 f(z)}{\partial x_i x_j} (x_j - x_j') \bigg).
\end{equation}
We can then consider the gradient vector of $g_i(x)$:
\begin{equation}
\nabla_x g_i(x) = \bigg[\frac{\beta w_0 e^{\beta w^T x}}{(e^{\beta w^T x} + 1)^2} , \frac{\beta w_1 e^{\beta w^T x}}{(e^{\beta w^T x} + 1)^2} , \cdots \bigg]
\end{equation}
where each coordinate is maximized at the zeros input vector, and takes a maximum value of $\beta w_i / 4$. We can therefore bound the error in convergence as:
\begin{equation}
\label{eq:main}
    \epsilon \leq \frac{\| x - x' \|}{k} \sum_{j=0}^{d} \bigg( \frac{ \beta \|w\|_\infty}{4} (x_j - x_j') \bigg).
\end{equation}
Ignoring the dependency on path length and the magnitude of the weights of the neural network, we see that:
\begin{equation}
    k \leq \mathcal{O}\bigg(\frac{d\beta}{\epsilon}\bigg).
\end{equation}
This demonstrates that the number of interpolation points $k$ necessary to achieve a set error rate $\epsilon$ decreases as the activation function is smoothed (the value of $\beta$ is decreased). While this proof only bounds the error in the approximation of the integral for a single feature, we get the error in completeness by multiplying by an additional factor of $d$ features.\\

We can extend the same proof to the Integrated Hessians values. We first consider the error for estimating off diagonal terms $\Gamma_{i,j}, i \neq j$. The true value we are trying to approximate is given as:
\begin{equation}
    \Gamma_{ij} = (x_i - x_i')(x_j - x_j') \times \int_{\alpha\beta} \frac{ \partial^2 f (x' + \alpha \beta (x - x') ) }{\partial x_i \partial x_j} d \alpha d \beta
\end{equation}

For the sake of notation, we can say $h_{ij}(x) = \frac{ \partial^2 F (x) }{\partial x_i \partial x_j}$. Assuming that we are integrating from the all-zeros baseline as suggested in \citet{sundararajan2017axiomatic}, since $h_{ij} (x) = $ is monotonic on either interval from the 0 baseline, we can again bound the error in the double integral by the magnitude of the difference in the left and right Riemann sums:

\begin{equation}
    \epsilon \leq \bigg| \frac{ \| x - x' \|_2^2 }{k} \sum_{j=1}^{k} \sum_{i=1}^{k} h_{ij} (x' + \frac{ij}{k} (x - x') ) - \frac{ \| x - x' \|_2^2 }{k^2} \sum_{j=0}^{k-1} \sum_{i=0}^{k-1} h_{ij} (x' + \frac{ij}{k} (x - x') ) \bigg|
\end{equation}

\begin{equation}
    \epsilon \leq \frac{ \| x - x' \|_2^2 }{k} \bigg| \big( h_{ij} (x) + 2\sum_{i=1}^{k-1} h_{ij} (x' + \frac{i}{\sqrt{k}}(x - x') ) \big) - \big( h_{ij} (x') + 2\sum_{i=1}^{k-1} h_{ij} (x') \big) \bigg|
\end{equation}
We can then again use monotonicity over the interval to say that $h_{ij} (x' + \frac{i}{\sqrt{k}}(x - x') ) < h_{ij}(x)$, which gives us:
\begin{equation}
    \epsilon \leq \frac{(2k-1) \| x - x' \|_2^2 }{k} \bigg| h_{ij} (x)  - h_{ij} (x') \bigg|
\end{equation}
By the mean value theorem, we know that for some $\beta \in [0,1]$, $h_{ij}(x) - h_{ij}(x') = \nabla_x h_{ij}(\beta)^\top(x-x')$. Substituting gives us:
\begin{equation}
    \epsilon \leq \frac{(2k-1) \| x - x' \|_2^2 }{k} \nabla_x h_{ij}(\beta)^\top(x-x')
\end{equation}

We can then consider the elements of the gradient vector:
\begin{equation}
    \nabla_x h_{ij}(x) = \bigg[-\frac{\beta^2 w_i w_j w_1 e^{\beta w^T x} \big(e^{\beta w^T x} - 1 \big)}{(e^{\beta w^T x} + 1)^3}, -\frac{\beta^2 w_i w_j w_2 e^{\beta w^T x} \big(e^{\beta w^T x} - 1 \big)}{(e^{\beta w^T x} + 1)^3}, \cdots \bigg].
\end{equation}

For the univariate version of each coordinate, we can maximize the function by taking the derivative with respect to $x$ and setting it equal to 0.
\begin{equation}
    \frac{d}{dx} \bigg( -\frac{\beta^2 e^{\beta x} \big(e^{\beta x} - 1 \big)}{(e^{\beta x} + 1)^3} \bigg) = \frac{\beta^3 e^{\beta x}(-4 e^{\beta x} + e^{2\beta x} + 1)}{(e^{\beta x} + 1)^4} = 0
\end{equation}
We can see that this equation holds only when $(-4 e^{\beta x} + e^{2\beta x} + 1)$ = 0, and can solve it by finding the roots of this quadratic equation, which occur when $x = \frac{1}{\beta}\log (2 \pm \sqrt{3})$. When we plug that back in, we find the absolute value of the function in that coordinate takes a maximum value of $\frac{\beta^2}{6 \sqrt{3}}$. Therefore, for a given set of fixed weights of the network, we can see that the coordinate-wise maximum magnitude of $\nabla_x h_{ij}, \propto \beta^2$, and that the number of interpolation points necessary to reach a desired level of error in approximating the double integral decreases as $\beta$ is decreased. Again ignoring the fixed weights and path length, the number of interpolation points necessary is bounded by:

\begin{equation}
    k \leq \mathcal{O}\bigg( \frac{d \beta^2}{k} \bigg).
\end{equation}

For the $i=j$ terms (main effect terms) the error will have another additive factor of $\beta$. This is because there is an added term to the main effect equal to:
\begin{equation}
    (x_i - x_i') \int_{\beta = 0}^{1} \int_{\alpha = 0}^{1} \frac{\partial f(x' + \alpha \beta (x - x'))}{\partial x_i} d\alpha d\beta
\end{equation}

When we bound the error in this approximate integral by the difference between the double left sum and double right sum, we get that:
\begin{equation}
    \epsilon \leq \frac{(2k-1) \| x - x' \|_2^2 }{k} \big| g_{i} (x)  - g_{i} (x') \big|
\end{equation}
Following the exact same steps as in \autoref{eq:startOfDifference} through \autoref{eq:main}, we can then show the bound on the error of the on-diagonal terms will have an additional term that is $\propto \beta$. Due to the axiom of interaction completeness, the error bound of the entire convergence can be obtained by adding up all of the individual terms, incurring another factor of $d^2$ in the bound.



\end{proof}

\subsection{SoftPlus Smoothing Empirically Improves Convergence}

In addition to theoretically analyzing the effects of smoothing the activation functions of a single-layer neural network on the convergence of the approximately calculation of Integrated Gradients and Integrated Hessians, we also wanted to empirically analyze the same phenomenon in deeper networks. We assessed this by creating two networks: one with 5 hidden layers of 50 nodes, and a second with 10 hidden layers of 50 nodes. These networks were then randomly initialized using the Xavier Uniform intialization scheme \citep{glorot2010understanding}. We created 10 samples to explain, each with 100 features drawn at random from the standard normal distribution. To evaluate the convergence of our approximate Integrated Hessians values, we plot the interaction completeness error (the difference between the sum of the Integrated Hessians value and the difference of the function output at a sample and the function output at the zeros baseline). We plot the completeness error as a fraction of the magnitude of the function output. As we decrease the value of $\beta$, we smooth the activations, and we can see that the number of interpolations required to converge decreases (see \autoref{fig:Completeness5} and \autoref{fig:Completeness10}). We note that the randomly initialized weights of each network are held constant and the only thing changed is the value of $\beta$ in the activation function.

\begin{figure}
    \centering
    \includegraphics[width=0.5\columnwidth]{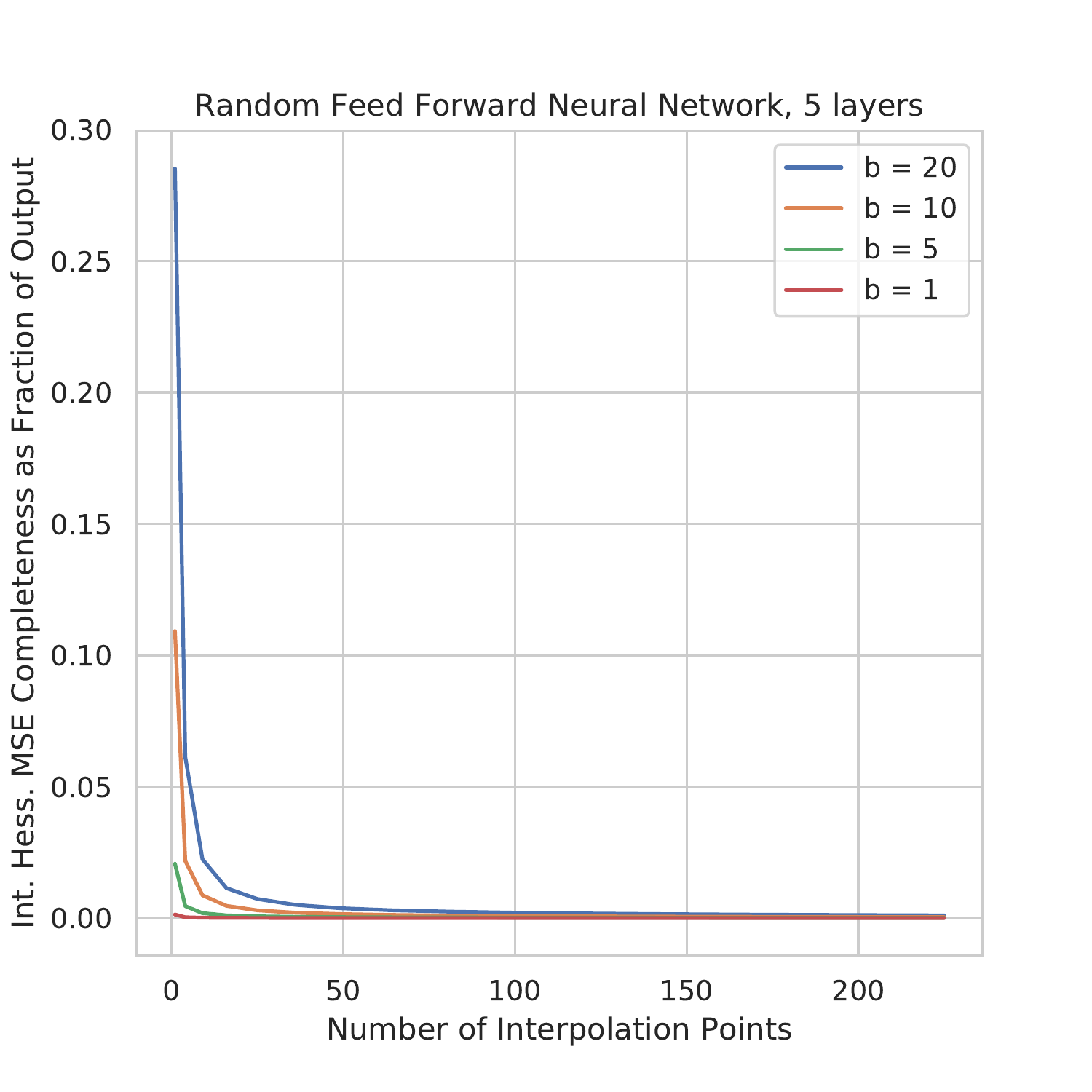}
    \caption{5-Layer Network Results. Interaction completeness error (difference between model output and sum of Integrated Hessians values) decreases more quickly with number of interpolation points as the $\beta$ parameter for the softplus activation function is decreased (as the function is smoothed). Results are averaged over 10 samples with 100 features for a neural network with 5 hidden layers of 50 nodes each.}
    \label{fig:Completeness5}
\end{figure}

\begin{figure}
    \centering
    \includegraphics[width=0.5\columnwidth]{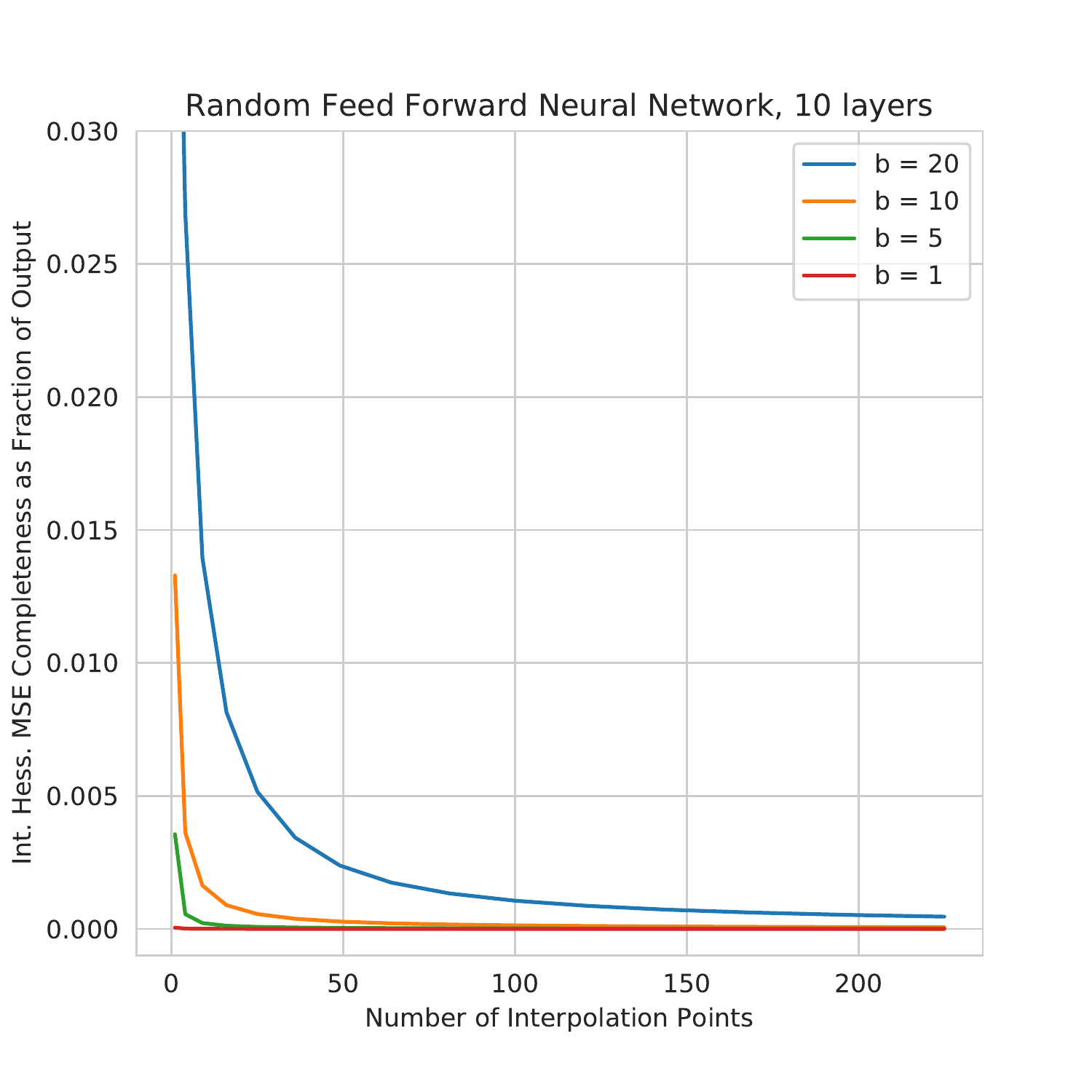}
    \caption{10-Layer Network Results. We can see that decreasing $\beta$ has an even more dramatic effect on convergence in this case when the network is deeper than in the 5-layer case. Again, this result shows that the interaction completeness error decreases more quickly with number of interpolation points as the activation function is smoothed. Results are averaged over 10 samples with 100 features for a neural network with 10 hidden layers of 50 nodes each.}
    \label{fig:Completeness10}
\end{figure}

\section{Details on the Sentiment Analysis Task}
\subsection{Fine-Tuning DistilBERT}

As mentioned in the main text, we download pre-trained weights for DistilBERT, a pre-trained language model introduced in \citet{sanh2019distilbert}, from the HuggingFace Transformers library \cite{Wolf2019HuggingFacesTS}. We fine-tune the model on the Stanford Sentiment Treebank dataset introduced by \citet{socher2013recursive}. We fine-tune for 3 epochs using a batch size of 32 and a learning rate of 0.00003. We use a a max sequence length of 128 tokens, and the Adam algorithm for optimization \cite{kingma2014adam}. We tokenize using the HuggingFace build-in tokenizer, which does so in an uncased fashion. We did not search for these hyper-parameters - rather, they were the defaults presented for fine-tuning in the HuggingFace repository. We find that they work adequately for our purposes, and didn't attempt to search through more hyper-parameters. 

\subsection{Training a CNN}
The convolutional neural network that we compare to in the main text is one we train from scratch on the same dataset. We randomly initialize 32-dimensional embeddings and use a max sequence length of 52. First, we apply dropout to the embeddings with dropout rate 0.5. The network itself is composed of 1D convolutions with 32 filters of size 3 and 32 filters of size 8. Each filter size is applied separately to the embedding layer, after which max pooling with a stride of 2 is applied and then the output of both convolutions is concatenated together and fed through a dropout layer with a dropout rate of 0.5 during training. A hidden layer of size 50 follows the dropout, finally followed by a linear layer generating a scalar prediction that the sigmoid function is applied to. \\

We train with a batch size of 128 for 2 epochs and use a learning rate of 0.001. We optimize using the Adam algorithm with the default hyper-parameters \cite{kingma2014adam}. Since this model was not pre-trained on a large language corpus and lacks the expressive power of a deep transformer, it is unable to capture patterns like negation that a fine-tuned DistilBERT does.

\subsection{Generating Attributions and Interactions}

In order to generate attributions and interactions, we use Integrated Gradients and Integrated Hessians with the zero-embedding - the embedding produced by the all zeros vector, which normally encodes the padding token - as a baseline. Because embedding layers are not differentiable, we generate attributions and interactions to the word embeddings and then sum over the embedding dimension to get word-level attributions and interactions, as done in \citet{sundararajan2017axiomatic}. When computing attributions and interactions, we use 256 background samples. Because DistilBERT uses the GeLU activation function \cite{ramachandran2017searching}, which has continuous first and second partial derivatives, there is no need to use the softplus replacement. When we plot interactions, we avoid plotting the main-effect terms in order to better visualize the interactions between words. 

\subsection{Additional Examples of Interactions}

Here we include additional examples of interactions learned on the sentiment analysis task. First we expand upon the idea of saturation in natural language, displayed in Figure \ref{fig:increasing_sat}. We display interactions learned by a fine-tuned DistilBERT on the following sentences: ``a bad movie'' (negative with 0.9981 confidence), ``a bad, terrible movie'' (negative with 0.9983 confidence), ``a bad, terrible, awful movie'' (negative with 0.9984 confidence) and ``a bad, terrible, awful, horrible movie'' (negative with 0.9984 confidence). The confidence of the network saturates: a network output only gets so negative before it begins to flatten. However, the number of negative adjectives in the sentence increases. This means a sensible network would spread the same amount of credit (because the attributions sum to the saturated output) across a larger number of negative words, which is exactly what DistilBERT does. However, this means that each word gets less negative attribution than it would if it was on its own. Thus, the negative words have positive interaction effects, which is exactly what we see from the figure.\\

In Figure \ref{fig:beautifully_observed}, we give another example of the full interaction matrix on a sentence from the validation set. In Figure \ref{fig:mainly_nonprofessional}, we give an example of how explaining the importance of a particular word can understand whether that word is important because of its main effect or because of its surrounding context. We show additional examples from the validation set in Figures \ref{fig:better_suited}, \ref{fig:fincher}, \ref{fig:good_script}, \ref{fig:little_to_love}, \ref{fig:quite_compelling}. We note that while some interactions make intuitive sense to humans (``better suited'' being negative or ``good script'' being positive), there are many other examples of interactions that are less intuitive. These interactions may indicate that the Stanford Sentiment Treebank dataset does not fully capture the expressive power of language (e.g. it doesn't have enough samples to fully represent all of the possible interactions in language), or it may indicate that the model has learned higher order effects that cannot be explained by pairwise interactions alone.

\begin{figure}
    \centering
    \begin{subfigure}
      \centering
      \includegraphics[width=.4\columnwidth]{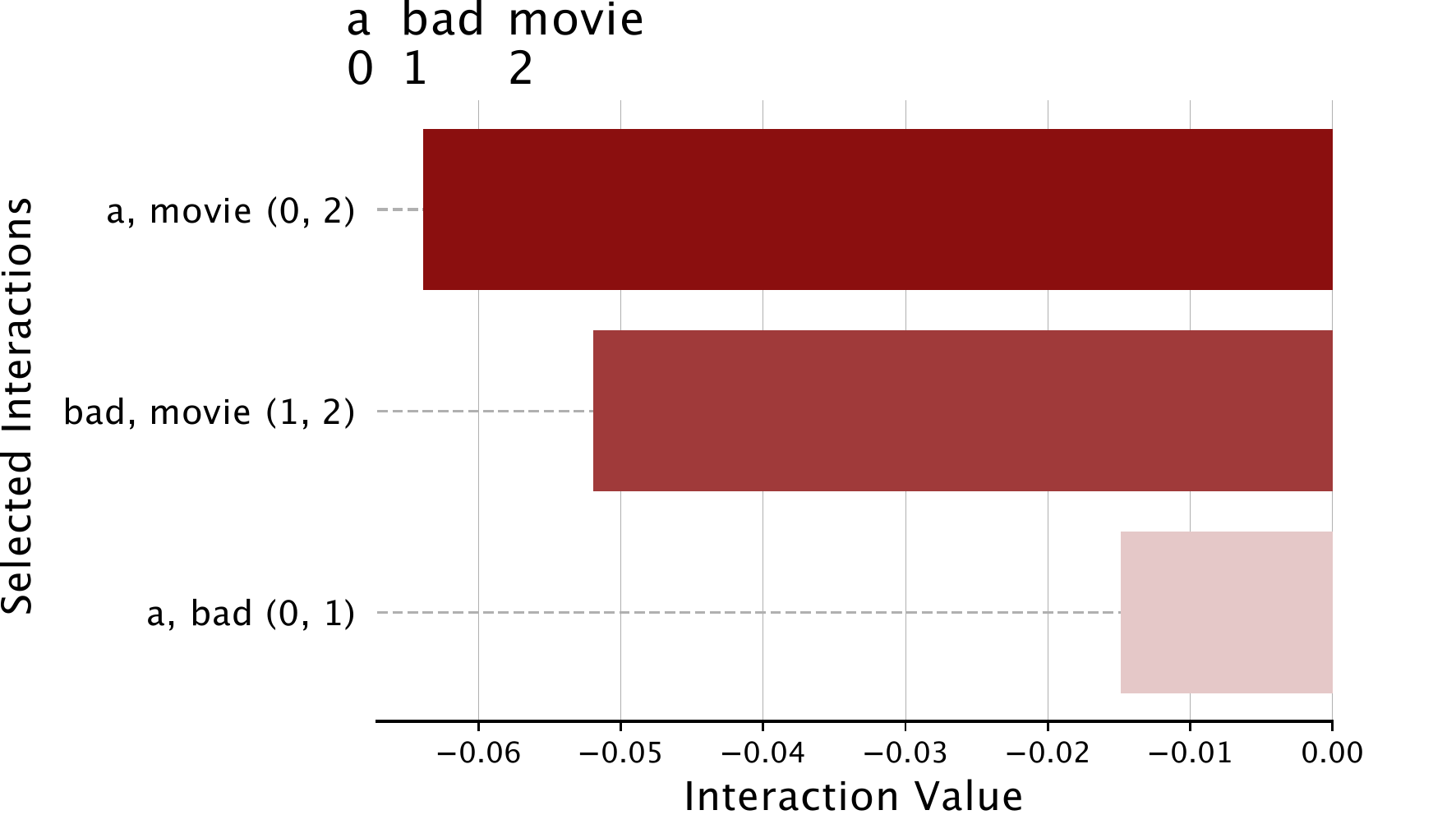}
    \end{subfigure}
    \begin{subfigure}
      \centering
      \includegraphics[width=.4\columnwidth]{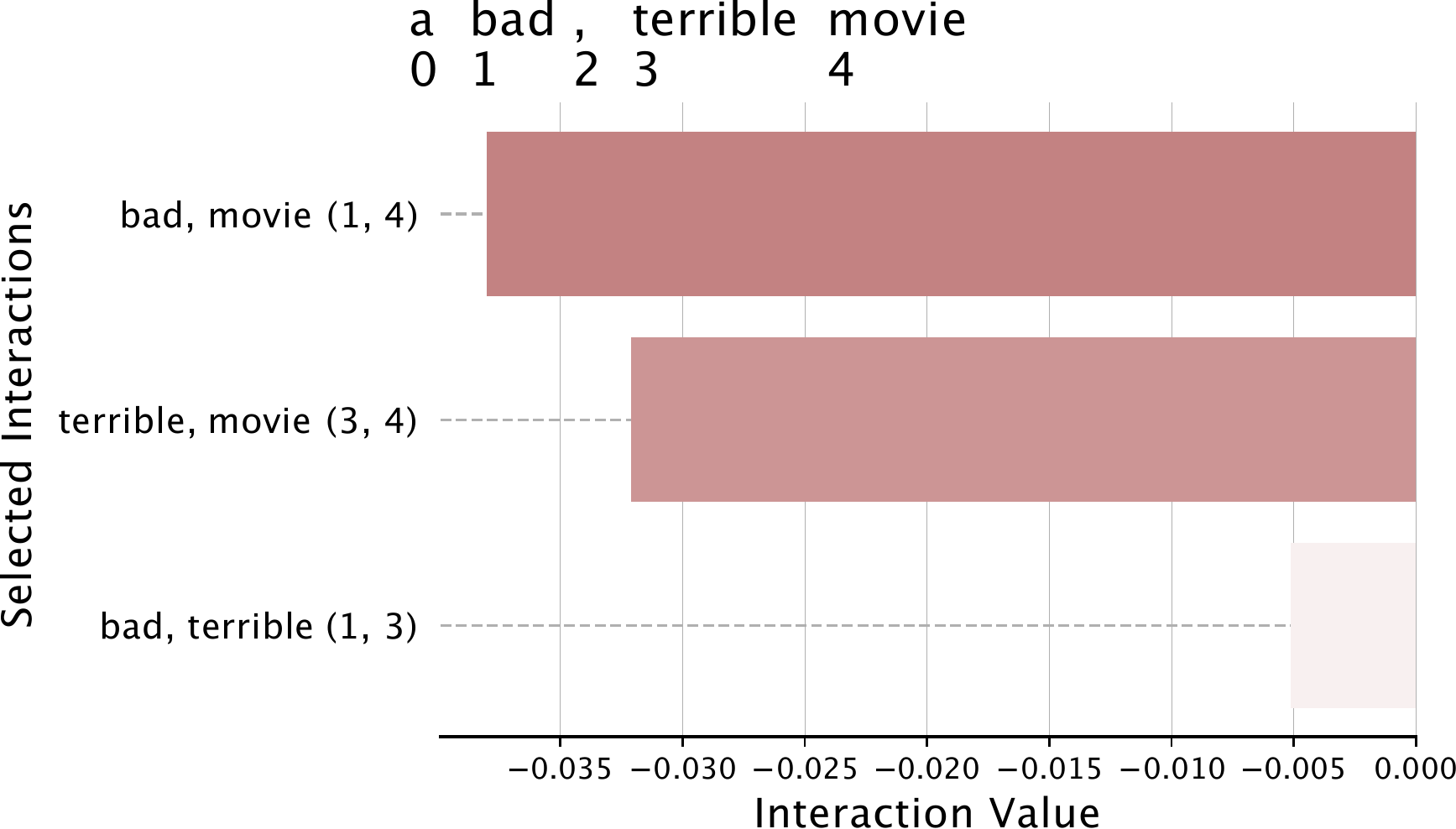}
    \end{subfigure}
    
    \begin{subfigure}
      \centering
      \includegraphics[width=.4\columnwidth]{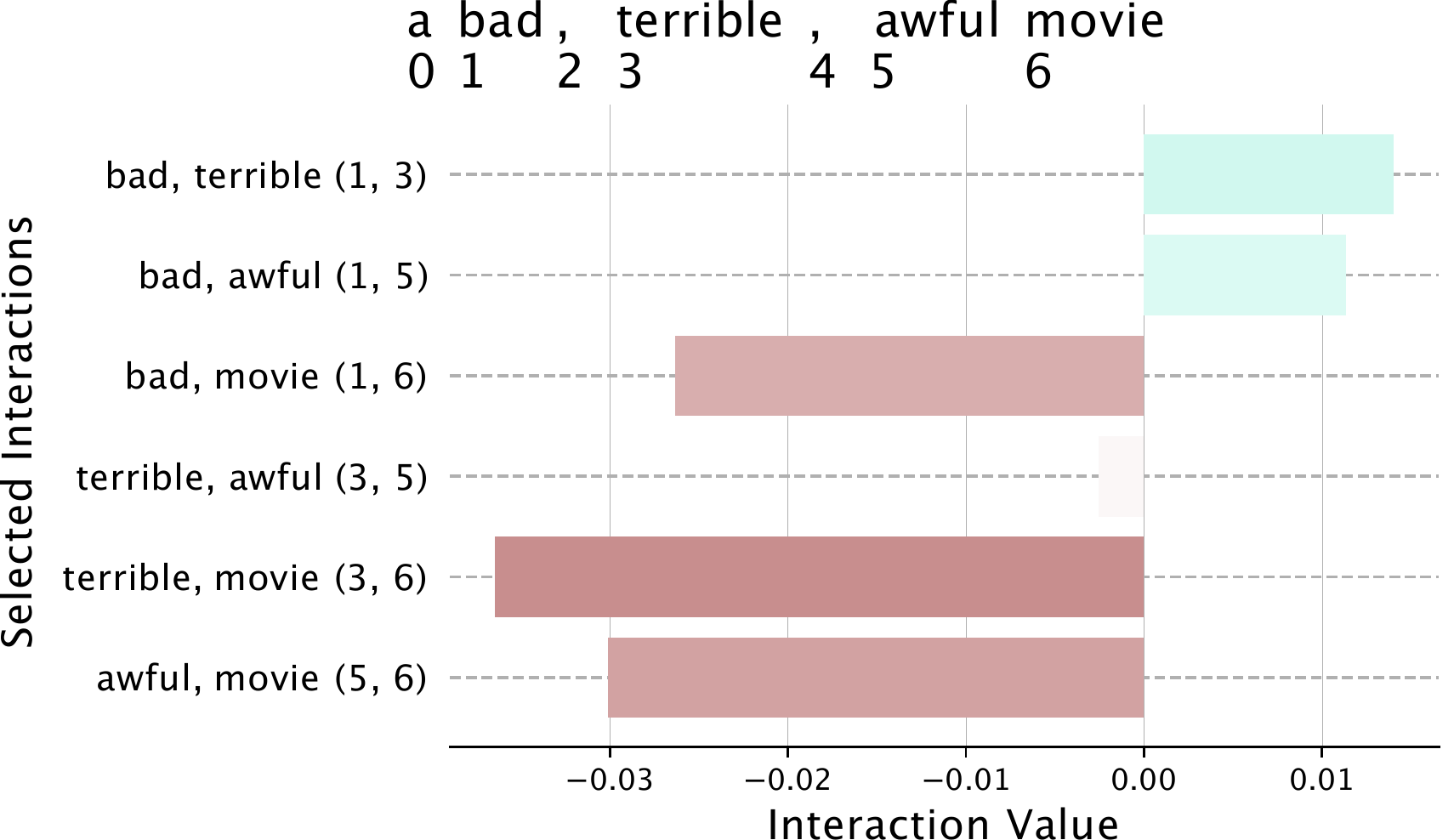}
    \end{subfigure}
    \begin{subfigure}
      \centering
      \includegraphics[width=.4\columnwidth]{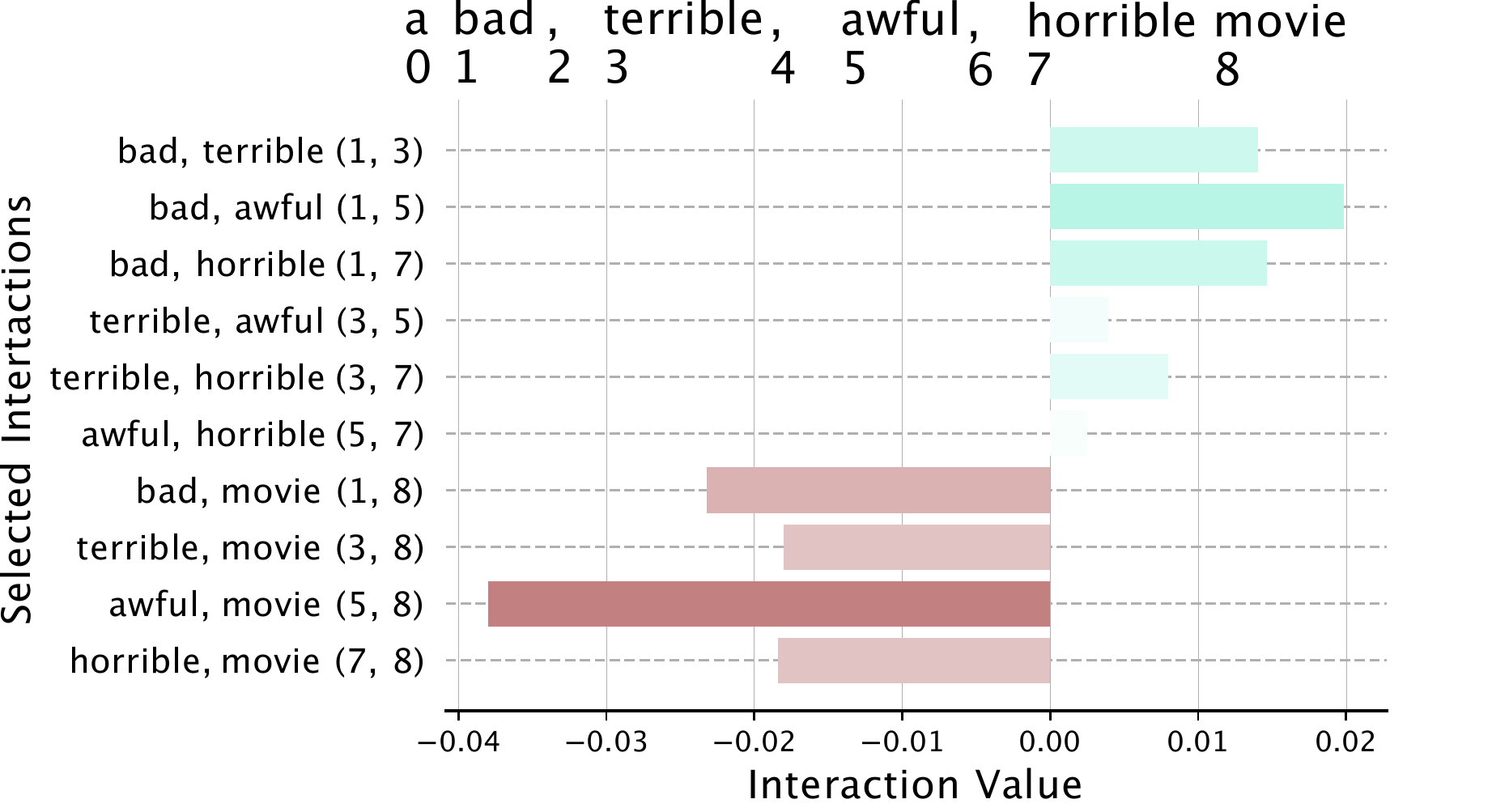}
    \end{subfigure}
    
    \caption{The effects of increasing saturation. As we add more negative adjectives to describe the word ``movie'', those negative adjectives
    interact more and more positively, even though those words interact negatively with the word they are describing. This is because each individual negative adjective has less impact on the overall negativity of the sentence the more negative adjectives there are.
    }
    
    \label{fig:increasing_sat}
\end{figure}

\begin{figure}
    \centering

    \includegraphics[width=.7\columnwidth]{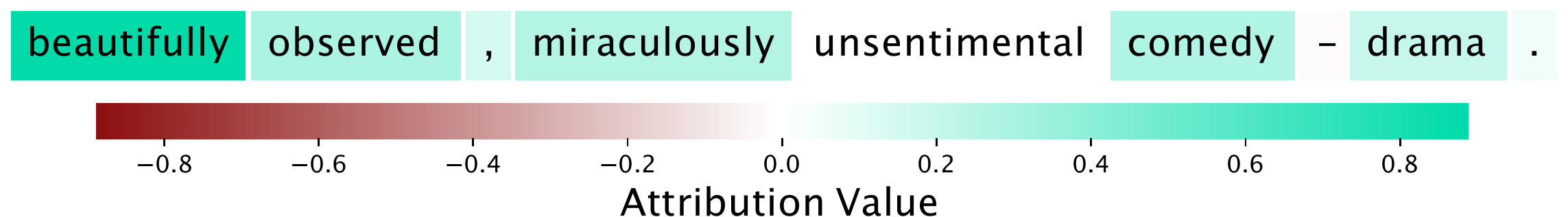}
    \includegraphics[width=.7\columnwidth]{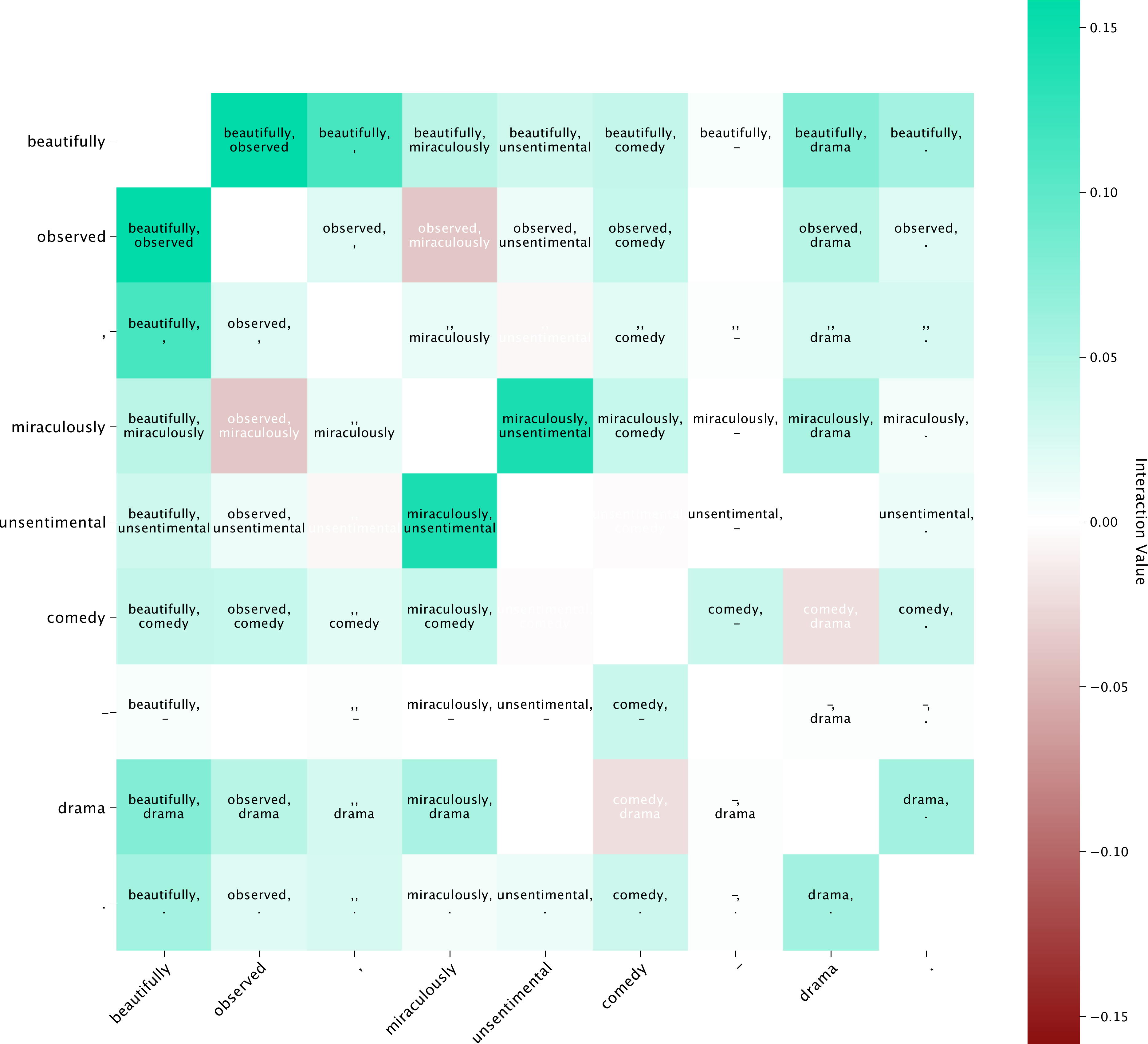}
    
    \caption{An example from the Stanford Sentiment Analysis Treebank validation set. Interactions highlight intuitive patterns in text, such as phrases like "beautifully observed" and "miraculously unsentimental" being strongly positive interactions. }
    
    \label{fig:beautifully_observed}
\end{figure}

\begin{figure}
    \centering

    \includegraphics[width=.9\columnwidth]{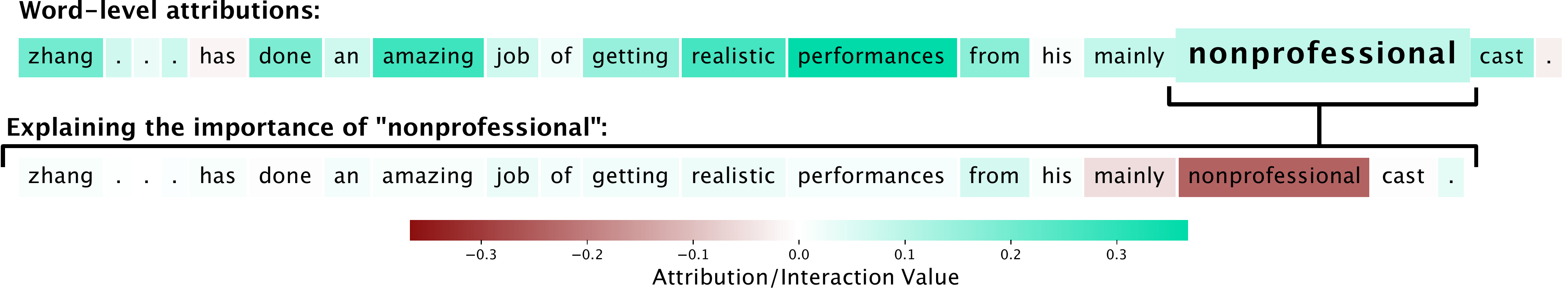}
    
    \caption{An example from the Stanford Sentiment Analysis Treebank validation set. This example shows that the word "nonprofessional" has a main effect that is negative, but the surrounding context outweights the main effect and makes the overall attribution positive. }
    
    \label{fig:mainly_nonprofessional}
\end{figure}

\begin{figure}
    \centering

    \includegraphics[width=.7\columnwidth]{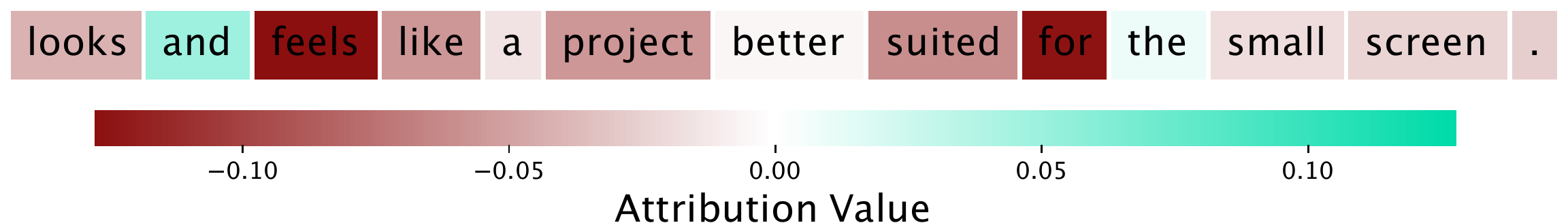}
    \includegraphics[width=.7\columnwidth]{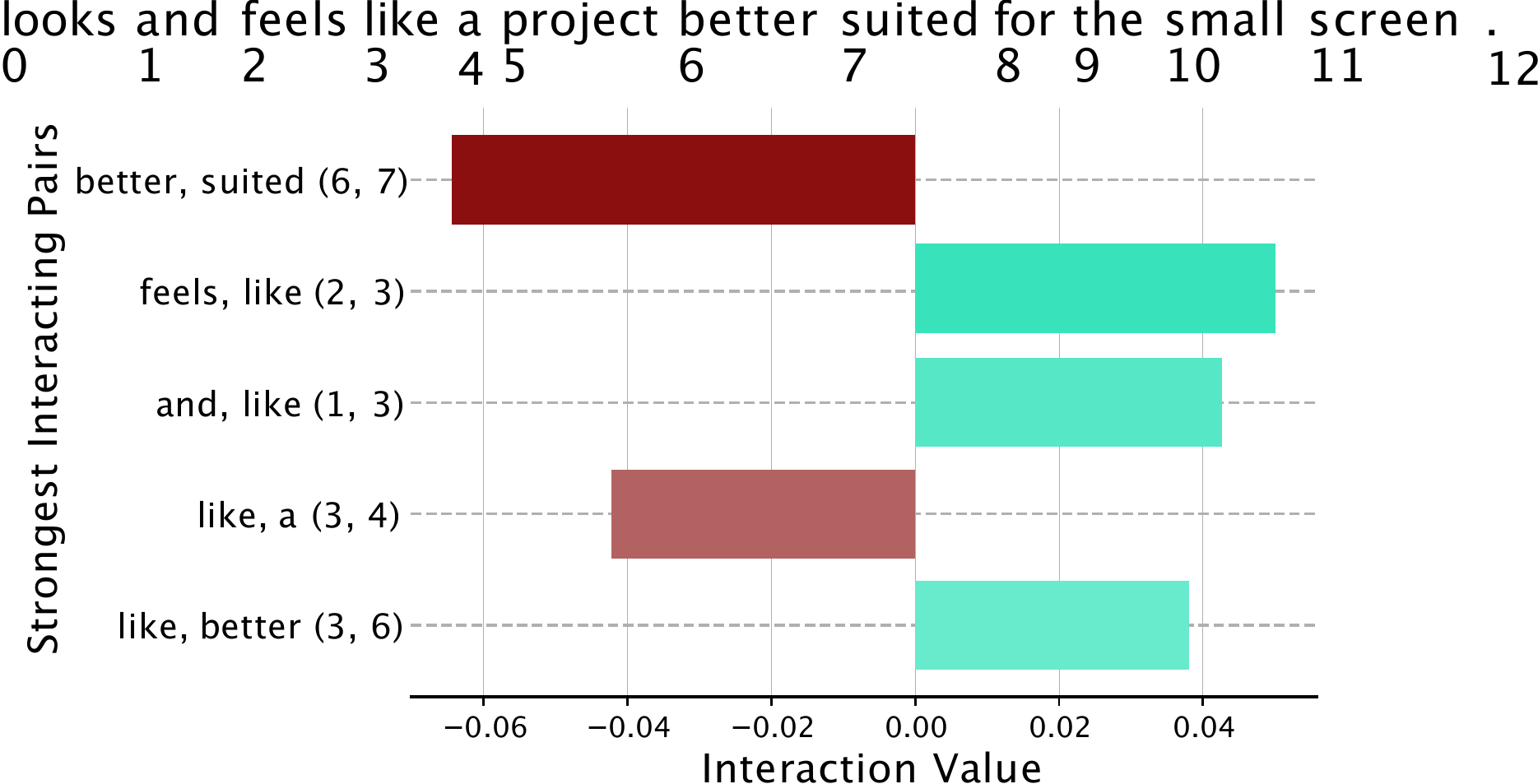}
    
    \caption{An example from the Stanford Sentiment Analysis Treebank validation set. This example highlights the phrase ``better suited'' being strongly negative, which is very intuitive. However, some of the other interactions are slightly less intuitive and may indicate lack of training data or higher-order interactions beyond word pairs. }
    
    \label{fig:better_suited}
\end{figure}

\begin{figure}
    \centering

    \includegraphics[width=.7\columnwidth]{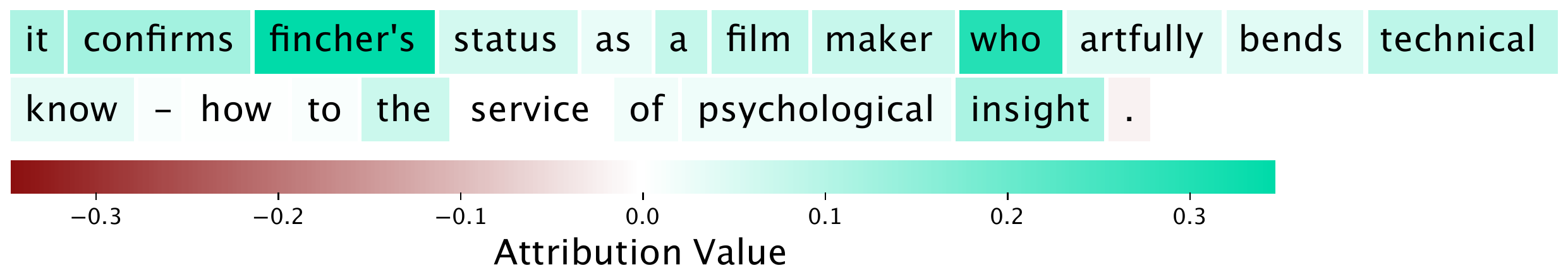}
    \includegraphics[width=.7\columnwidth]{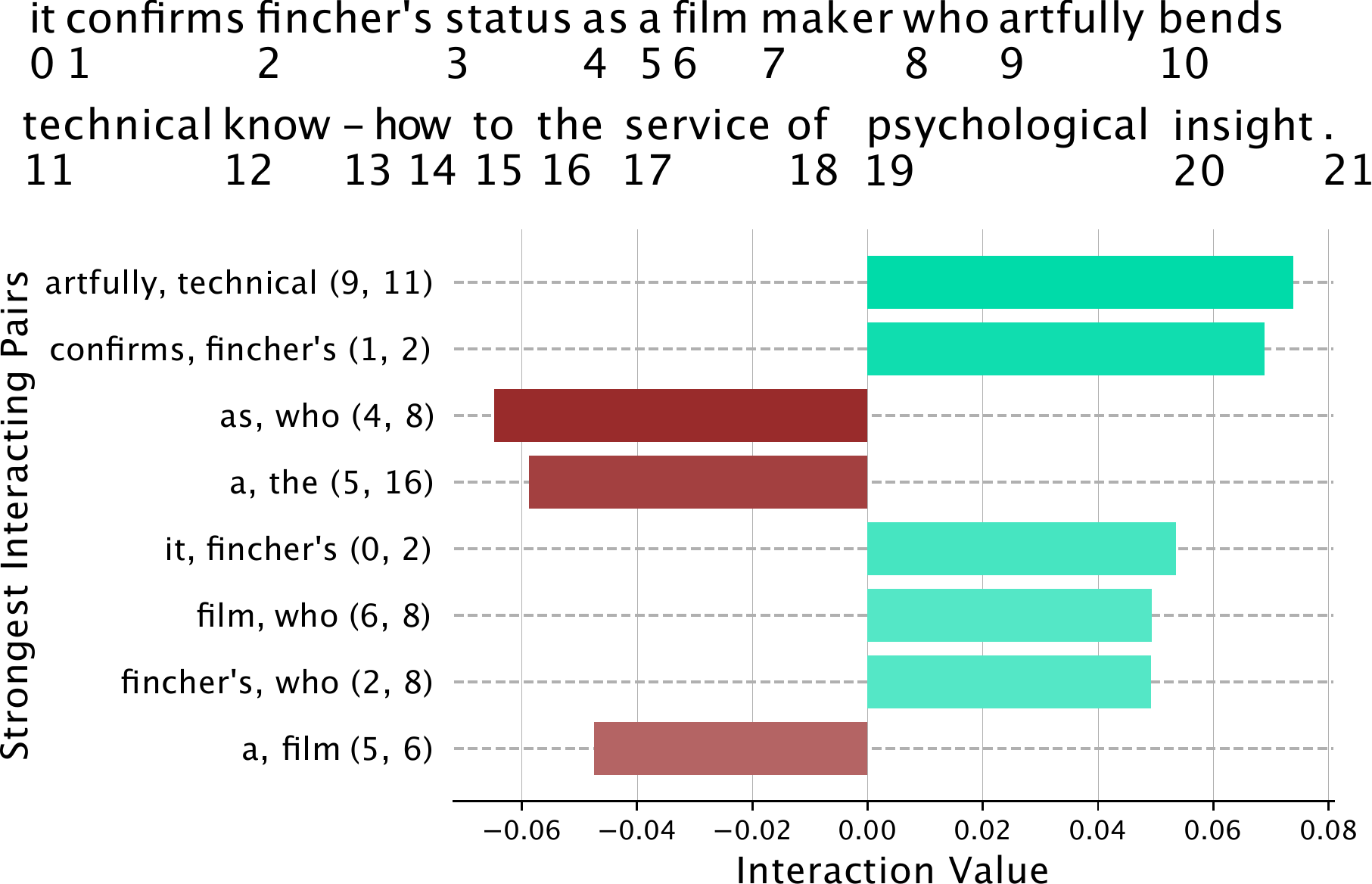}
    
    \caption{Another example from the Stanford Sentiment Analysis Treebank validation set. Notice how the strongest interact pairs may not necessarily be adjacent words, e.g. ``artfully'' and ``technical''. }
    
    \label{fig:fincher}
\end{figure}

\begin{figure}
    \centering

    \includegraphics[width=.7\columnwidth]{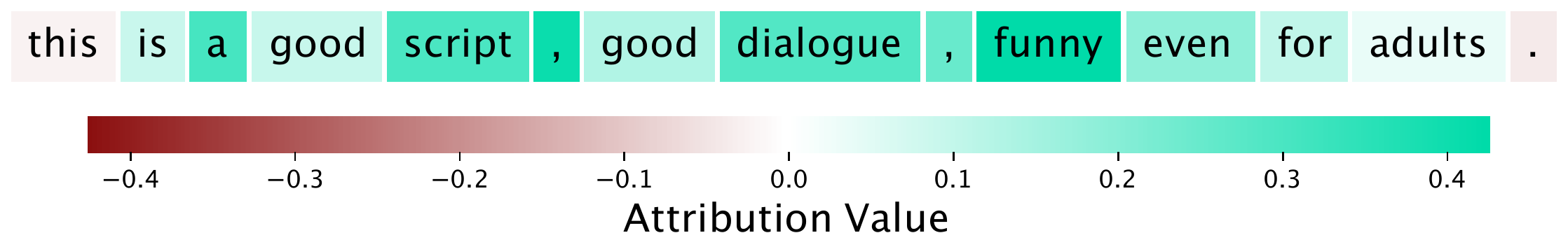}
    \includegraphics[width=.7\columnwidth]{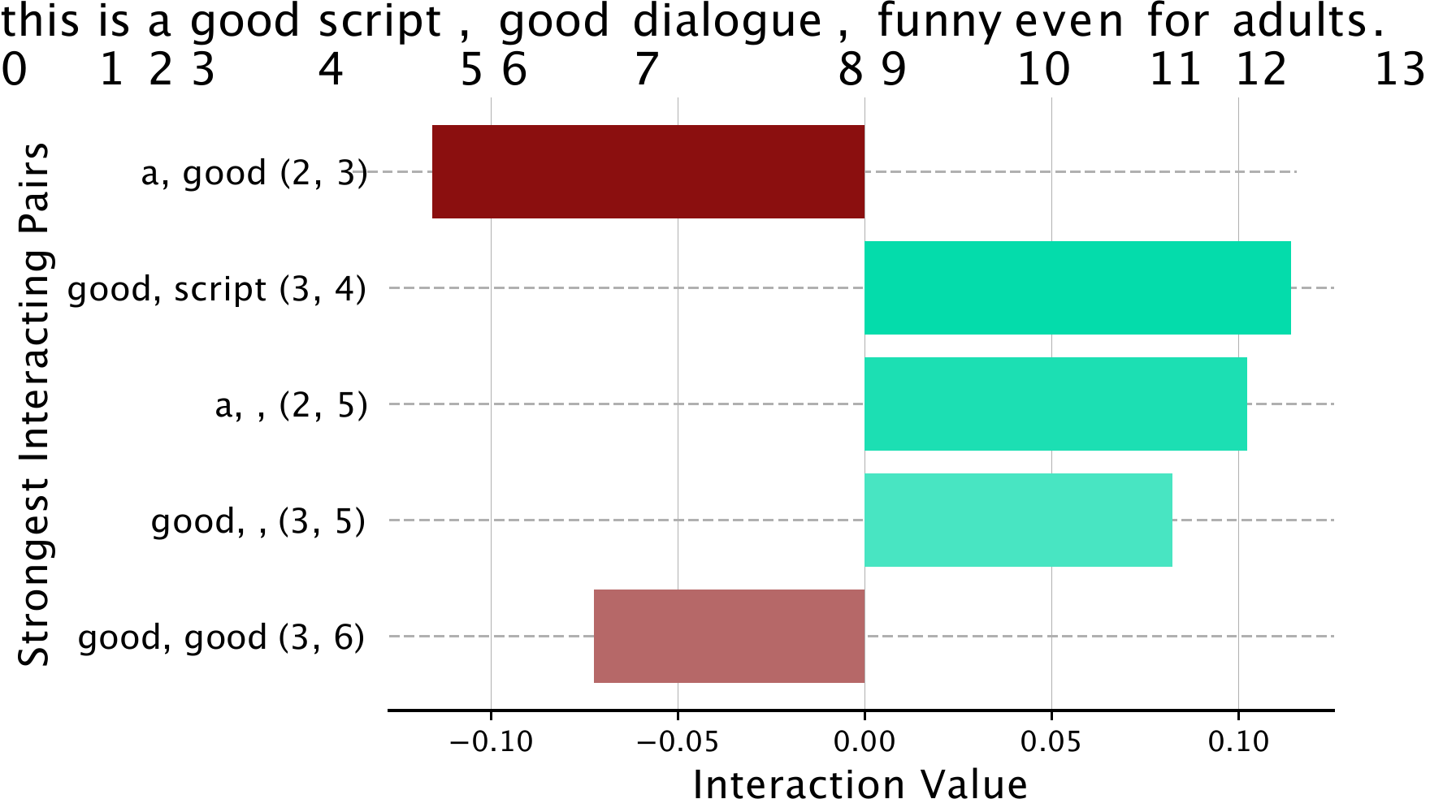}
    
    \caption{An example from the Stanford Sentiment Analysis Treebank validation set. Interestingly, the phrase ``a good'' has a negative interaction. This may indicate saturation effects, higher order effects, or that the model has simply learned an unintuitive pattern. }
    
    \label{fig:good_script}
\end{figure}

\begin{figure}
    \centering

    \includegraphics[width=.7\columnwidth]{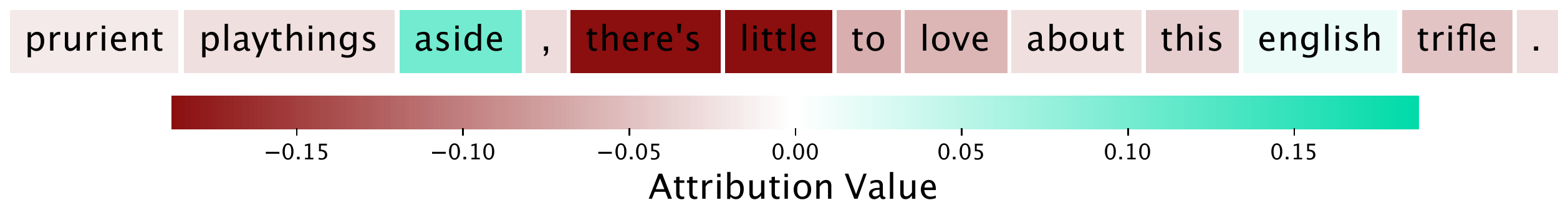}
    \includegraphics[width=.7\columnwidth]{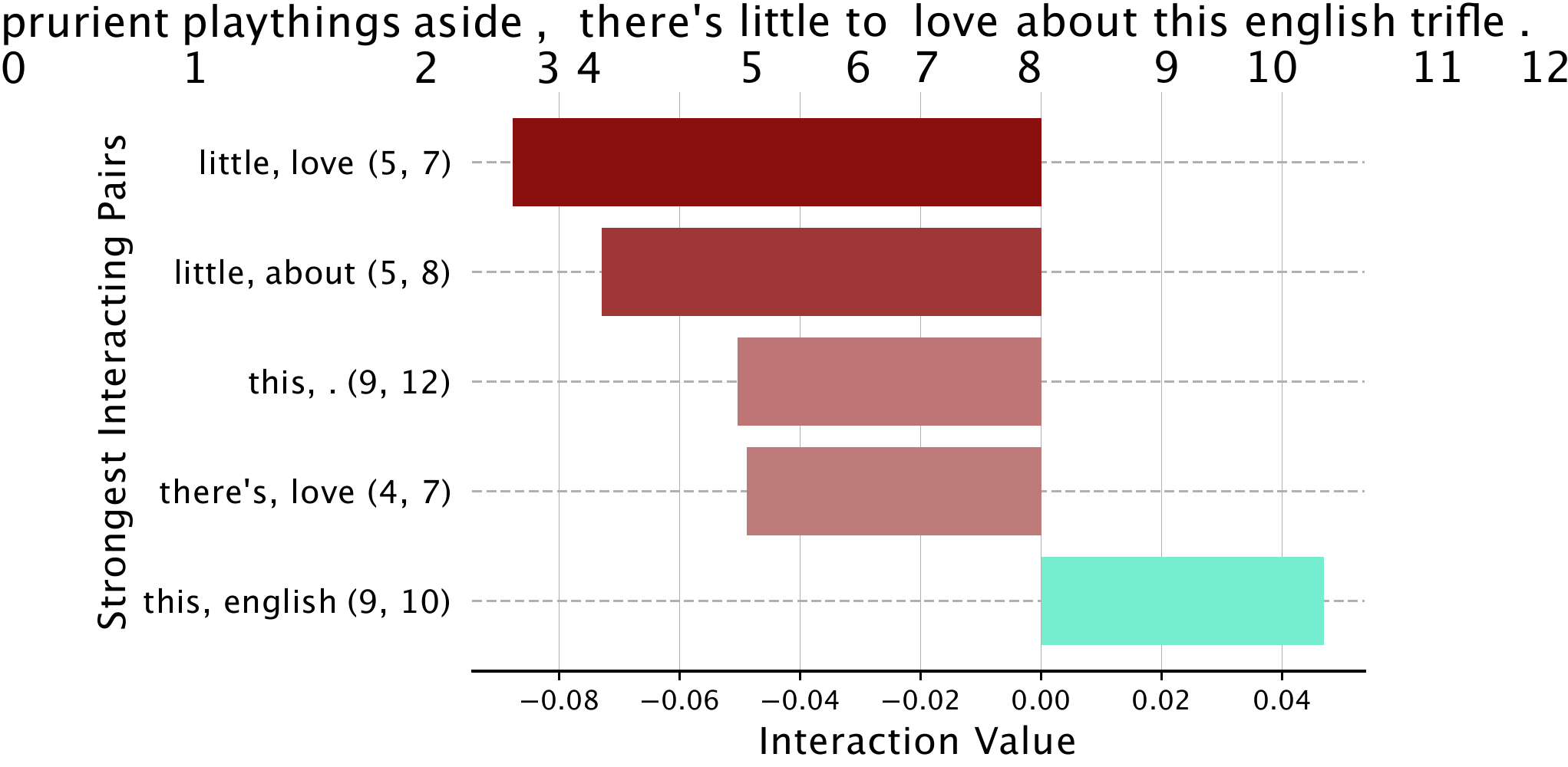}
    
    \caption{An example from the Stanford Sentiment Analysis Treebank validation set. This example shows some very intuitive negative interactions among the phrase ``there's little to love''. Interestingly, ``this english'' has a positive interaction: perhaps the dataset has a bias for english movies? }
    
    \label{fig:little_to_love}
\end{figure}

\begin{figure}
    \centering

    \includegraphics[width=.7\columnwidth]{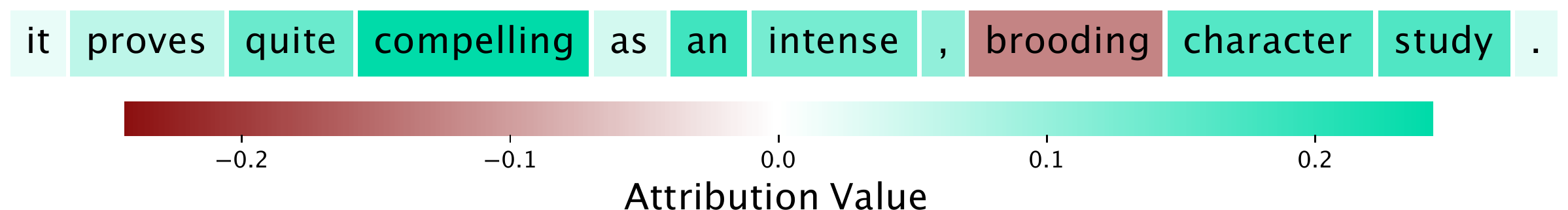}
    \includegraphics[width=.7\columnwidth]{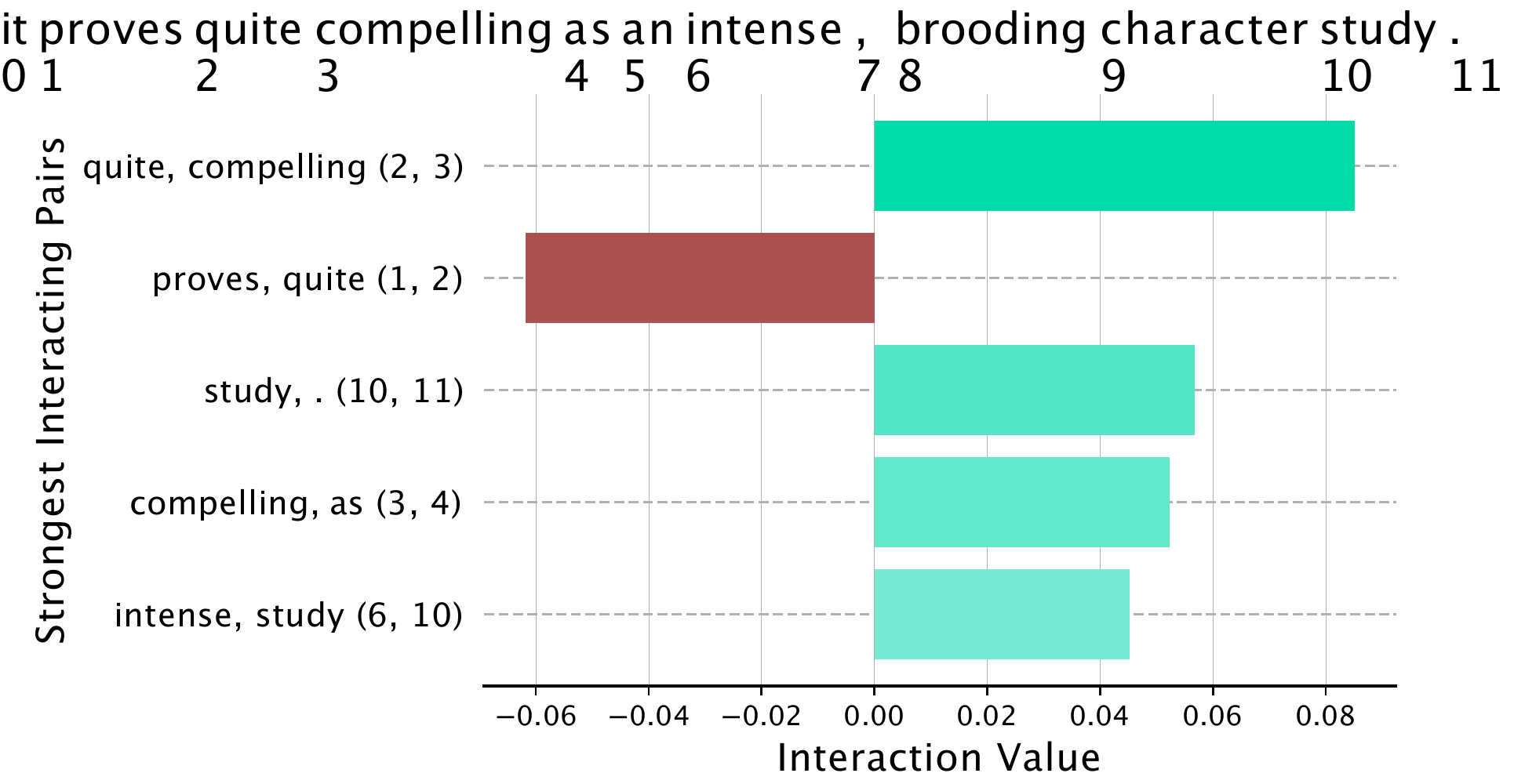}
    
    \caption{An example from the Stanford Sentiment Analysis Treebank validation set. This one also shows intuitive patterns, e.g. ``quite compelling'' being strongly positive. }
    
    \label{fig:quite_compelling}
\end{figure}

\section{Additional Experiments}
\subsection{Heart disease prediction}
\label{sec:heart_disease}

Here we aggregate interactions learned from many samples in a clinical dataset and use the interactions to reveal global patterns. We examine the Cleveland heart disease dataset \citep{detrano1989international, das2009effective}. After preprocessing, the dataset contains 298 patients with 13 associated features, including demographic information like age and gender and clinical measurements such as systolic blood pressure and serum cholesterol. The task is to predict whether or not a patient has coronary artery disease. The list of features, which we reproduce here, is from \cite{detrano1989international}, the original paper introducing the dataset:
\begin{enumerate}
    \item Age of patient (mean: 54.5 years $\pm$ standard deviation: 9.0)
    \item Gender (202 male, 96 female)
    \item Resting systolic blood pressure (131.6 mm Hg $\pm$ 17.7)
    \item Cholesterol (246.9 mg/dl $\pm$ 51.9)
    \item Whether or not a patient's fasting blood sugar was above 120 mg/dl (44 yes)
    \item Maximum heart rate achieved exercise (149.5 bpm $\pm$ 23.0)
    \item Whether or not a patient has exercise-induced angina (98 yes)
    \item Excercise-induced ST-segment depression (1.05 mm $\pm$ 1.16)
    \item Number of major vessels appearing to contain calcium as revealed by cinefluoroscopy (175 patients with 0, 65 with 1, 38 with 2, 20 with 3)
    \item Type of pain a patient experienced if any (49 experienced typical anginal pain, 84 experienced atypical anginal pain, 23 experienced non-anginal pain and 142 patients experienced no chest pain)
    \item Slope of peak exercise ST segment (21 patients had upsloping segments, 138 had flat segments, 139 had downsloping segments)
    \item Whether or not a patient had thallium defects as revealed by scintigraphy (2 patients with no information available, 18 with fixed defects, 115 with reversible defects and 163 with no defects)
    \item Classification of resting electrocardiogram (146 with normal resting ecg, 148 with an ST-T wave abnormality, and 4 with probable or definite left centricular hypertrophy)
\end{enumerate}

We split the data into 238 patients for training (of which 109 have coronary artery disease) and 60 for testing (of which 28 have coronary artery disease). We use a two layer neural network with 128 and 64 hidden units, respectively, with softplus activation after each layer. We optimize using gradient descent (processing the entire training set in a single batch) with an initial learning rate of 0.1 that decays exponentially with a rate 0.99 after each epoch. We use nesterov momentum with $\beta = 0.9$ \citep{sutskever2013importance}. After training for 200 epochs, the network achieves a held-out accuracy of 0.8667 with 0.8214 true positive rate and 0.9062 true negative rate. We note that the hyper-parameters chosen here were not carefully tuned on a validation set - they were simply those that seemed converge to a reasonable performance on the training set. Our focus is not state of the art prediction or comparing model performances, but rather interpreting patterns a reasonable model learns. \\

To generate attributions and interactions for this dataset, we use Expected Gradients and Expected Hessians with the training set forming the background distribution. We use 200 samples to compute both attributions and interactions, although we note this number is probably larger than necessary but was easy to compute due to the small size of the dataset. \\

Figure \ref{fig:heart_disease_summary} shows which features were most important towards predicting heart disease aggregated over the entire dataset, as well as the trend of importance values. Interestingly, the model learns some strangely unintuitive trends: if a patient doesn't experience chest pain, they are more likely to have heart disease than if they experience anginal chest pain! This could indicate problems with the way certain features were encoded, or perhaps dataset bias. Figure \ref{fig:max_heart_rate} demonstrates an interaction learned by the network between maximum heart rate achieved and gender, and Figure \ref{fig:st_depression} demonstrates an interaction between exercise-induced ST-segment depression and number of major vessels appearing to contain calcium. \\

In Figure \ref{fig:major_vessels}, we examine the interactions with a feature describing the number of major coronary arteries with calcium accumulation (0 to 3), as determined by cardiac cinefluoroscopy \citep{DETRANO19861041}. Previous research has shown that this technique is a reliable way to gauge calcium build-up in major blood vessels, and serves as a strong predictor of coronary artery disease \citep{DETRANO19861041, bartel1974significance, liu2015current}. Our model correctly learns that more coronary arteries with evidence of calcification indicate increased risk of disease. Additionally, Integrated Hessians reveals that our model learns a negative interaction between the number of coronary arteries with calcium accumulation and female gender. This supports the well-known phenomenon of under-recognition of heart disease in women -- at the same levels of cardiac risk factors, women are less likely to have clinically manifest coronary artery disease \citep{maas2010gender}.\\

\begin{figure}
    \centering

    \includegraphics[width=.95\columnwidth]{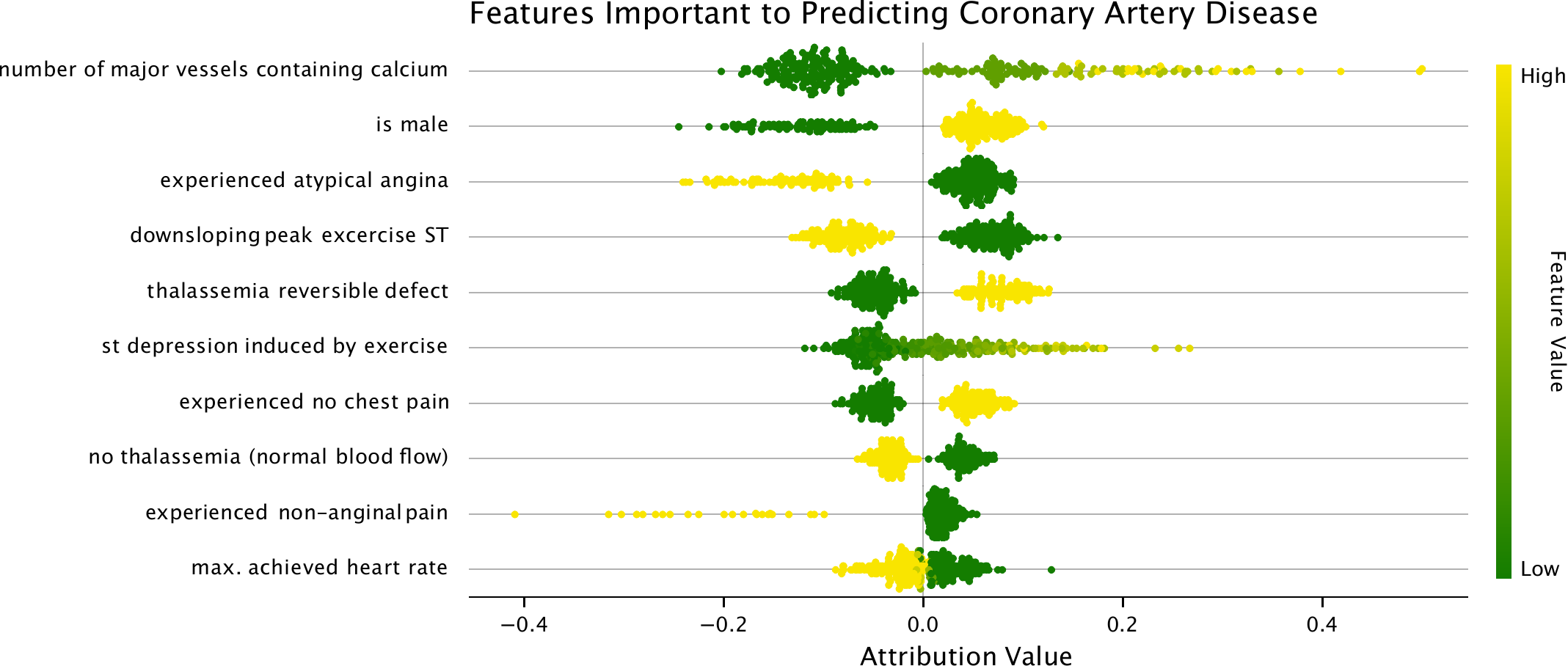}
    
    \caption{A summary of which features were most important towards predicting heart disease. A positive attribution value indicates increased risk of heart disease (negative value indicates decreased risk of heart disease). The features are ordered by largest mean absolute magnitude over the dataset. For binary features, high (yellow) indicates true while low (green) indicates false. For example, for the feature ``experienced atypical angina'', yellow means the patient did experience atypical angina and green means the patient did not. }
    
    \label{fig:heart_disease_summary}
\end{figure}

\begin{figure}
    \centering

    \includegraphics[width=.75\columnwidth]{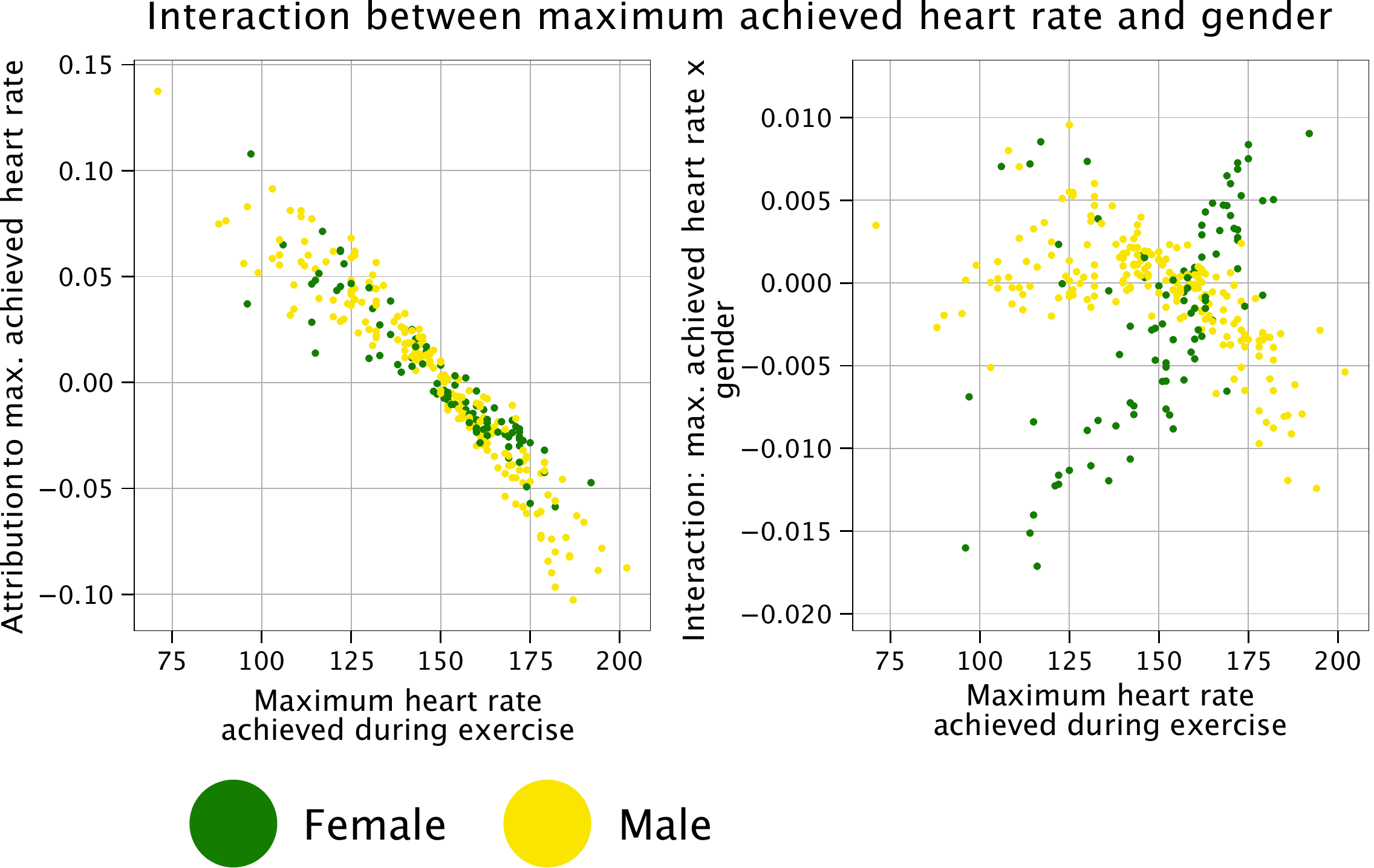}
    
    \caption{An interaction learned by the model between maximum achieved heart rate during exercise and the gender of the patient. In general, achieving a higher heart rate during exercise indicated lower risk of heart disease - but the model learns that this pattern stronger for men than it is for women.}
    
    \label{fig:max_heart_rate}
\end{figure}

\begin{figure}
    \centering

    \includegraphics[width=.75\columnwidth]{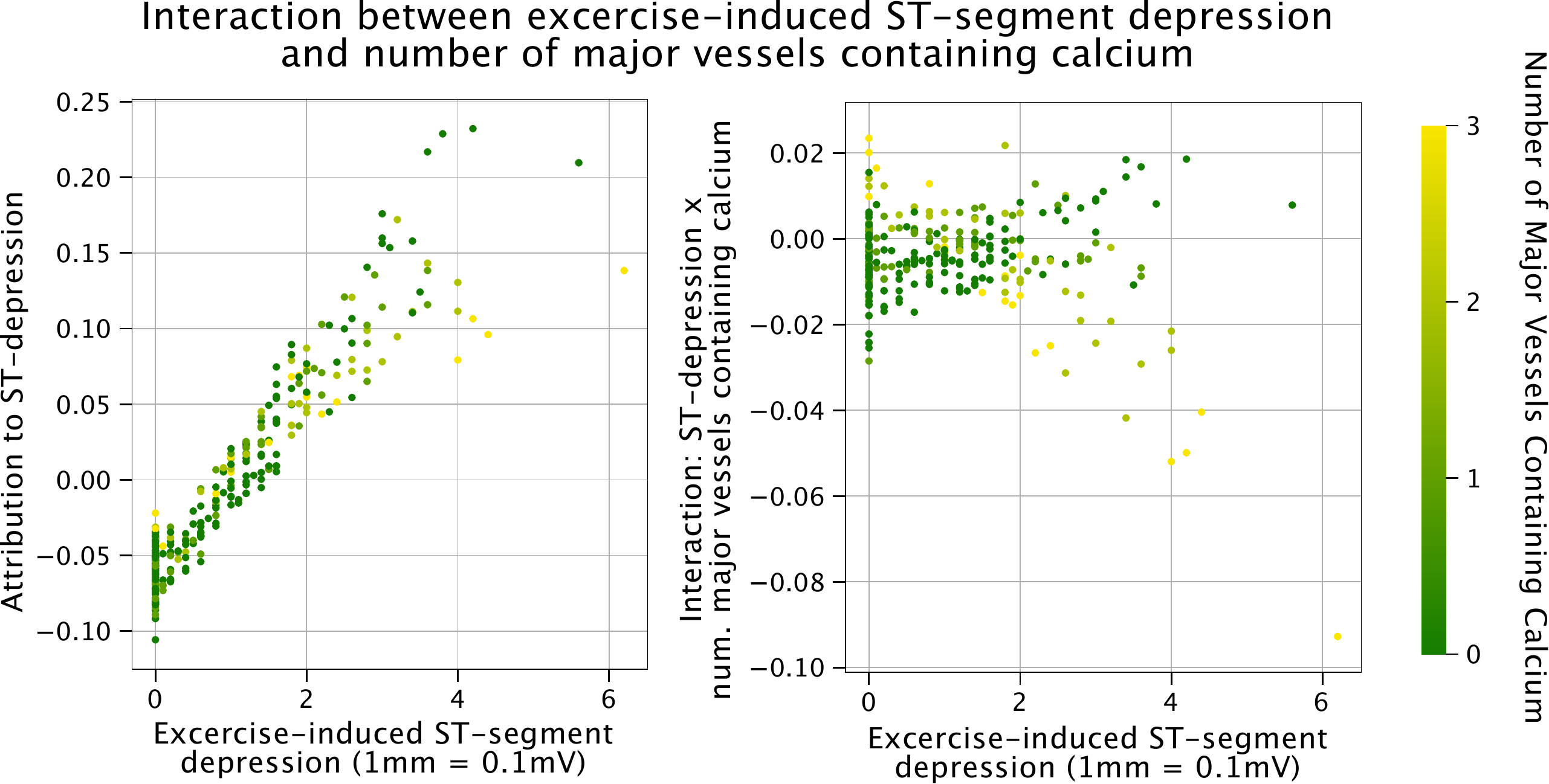}
    
    \caption{An interaction learned between ST-segment depression and number of major vessels appearing to contain calcium. The interaction seems to indicate that if a patient has many vessels appearing to contain calcium, then st-segment depression is less important toward driving risk, probably because the number of major vessels containing calcium becomes the main risk driver. }
    
    \label{fig:st_depression}
\end{figure}

\begin{figure}
    \centering
    \includegraphics[width=0.75\columnwidth]{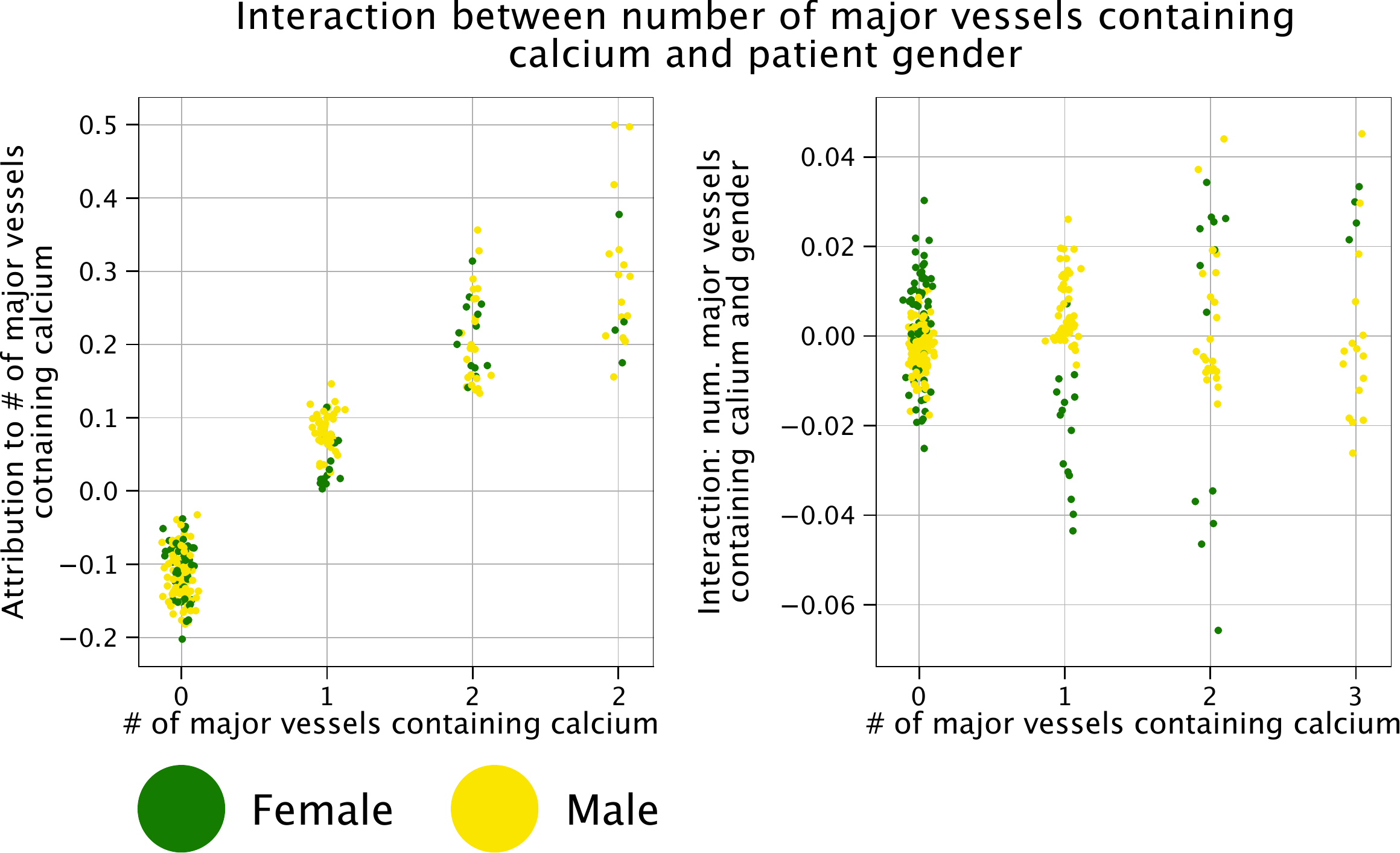}
    \caption{Left: Expected Gradients feature importance of the number of major vessels with accumulation of calcium as indicated by cardiac cinefluoroscopy. More vessels with calcium build-up indicated increased risk. Right: Expected Hessians feature interactions between patient gender and the number of major vessels containing calcium. When the Expected Hessians interactions are aggregated across the dataset, they reveal that our model has learned that women with calcium deposition in one coronary artery are less likely than men to be diagnosed with coronary artery disease.}
    \label{fig:major_vessels}
\end{figure}

\subsection{Pulsar star prediction}
\label{sec:pulsar}

In this section, we use a physics dataset to confirm that a model has learned global pattern that is visible in the training data. We utilize the HRTU2 dataset, curated by \citet{lyon2016fifty} and originally gathered by \citet{keith2010high}. The task is to predict whether or not a particular signal measured from a radio telescope is a pulsar star or generated from radio frequency interference (e.g. background noise). The features include statistical descriptors of measurements made from the radio telescope. The dataset contains 16,259 examples generated through radio frequency interference and 1,639 examples that are pulsars.  The dataset we use has 4 statistical descriptors - mean, standard deviation, skewness and kurtosis - of two measurements relating to pulsar stars: the integrated pulse profile (IP) and the dispersion-measure signal-to-noise ratio curve (DM-SNR), for a total of 8 features. The integrated pulse profile measures how much signal the supposed pulsar star gives off as a function of phase of the pulsar: as pulsars rotate, they emit radiation from their magnetic poles, which sweep over the earth periodically. We can measure the radiation over time using a radio telescope and aggregating measurements over phase to get the integrated pulse profile. Signals that are pulsars stars should in theory have stronger, more peaked integrated pulse profiles than those generated from radio frequency interference. The DM-SNR curve measures how much phase correction changes the signal-to-noise ratio in the measured signal. Since pulsars are far away, their radio emissions get dispersed as they travel from the star to earth: low frequencies get dispersed more than high frequencies (e.g. they arrive later). Phase correction attempts to re-sync the frequencies; however, no amount of phase correction should help peak a signal if the signal was generated from radio frequency interference rather than a legitimate pulsar.\\

On this task, we use a two-layer neural network with 32 hidden units in both layers and the softplus activation function after each layer. We optimize using stochastic gradient descent with a batch size of 256. We use an initial learning rate of 0.1 that decays with a rate of 0.96 every batch. We use nesterov momentum as well with $\beta = 0.9$ \cite{sutskever2013importance}. We train for 10 epochs and use a class-weight ratio of 1:3 negative to positive to combat the imbalance in the training dataset. Again, we note that these hyper-parameters are not necessarily optimal but were simply chosen because they produced reasonable convergence on the training set. We split the data into 14,318 training examples (1,365 are pulsars) and 3,580 testing examples (274 are pulsars), and achieve a held out test accuracy of 0.98 (0.86 TPR and 0.99 TNR).\\

To generate attributions and interactions for this dataset, we use Expected Gradients and Expected Hessians with the training set forming the background distribution. We use 200 samples to compute both attributions and interactions, although, as also noted on the previous section about the heart disease task, 200 samples was probably larger than necessary. In Figure \ref{fig:pulsar_star}, we examine the interaction between two key features in the dataset: kurtosis of the integrated profile, which we abbreviate as kurtosis (IP), and standard deviation of the dispersion-measure signal-to-noise ratio curve, which we abbreviate as standard deviation (DM-SNR). The bottom of Figure \ref{fig:pulsar_star} shows that kurtosis (IP) is a highly predictive feature, while standard deviation (DM-SNR) is less predictive. However, in the range where kurtosis (IP) is roughly between 0 and 2, standard deviation (DM-SNR) helps distinguish between a concentration of negative samples at standard deviation (DM-SNR) $<$ 40. We can verify that the model we've trained correctly learns this interaction. By plotting the interaction values learned by the model against the value of kurtosis (IP), we can see a peak positive interaction for points in the indicated range and with high standard deviation (DM-SNR). Interaction values show us that the model has successfully learned the expected pattern: that standard deviation (DM-SNR) has the most discriminative power when kurtosis (IP) is in the indicated range.

\begin{figure}
    \centering
    \includegraphics[width=0.75\columnwidth]{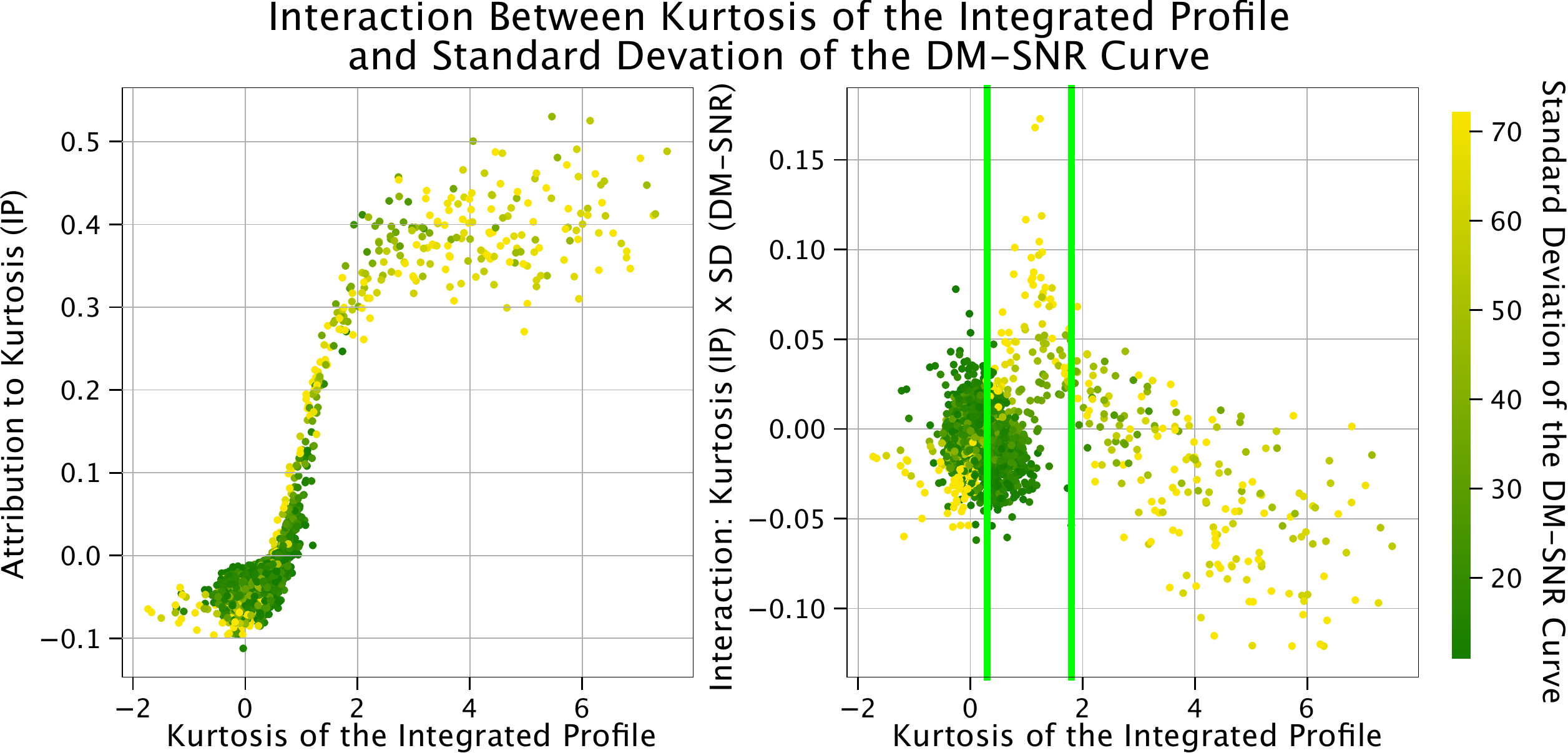}
    
    \vspace{0.25cm}
    
    \includegraphics[width=0.75\columnwidth]{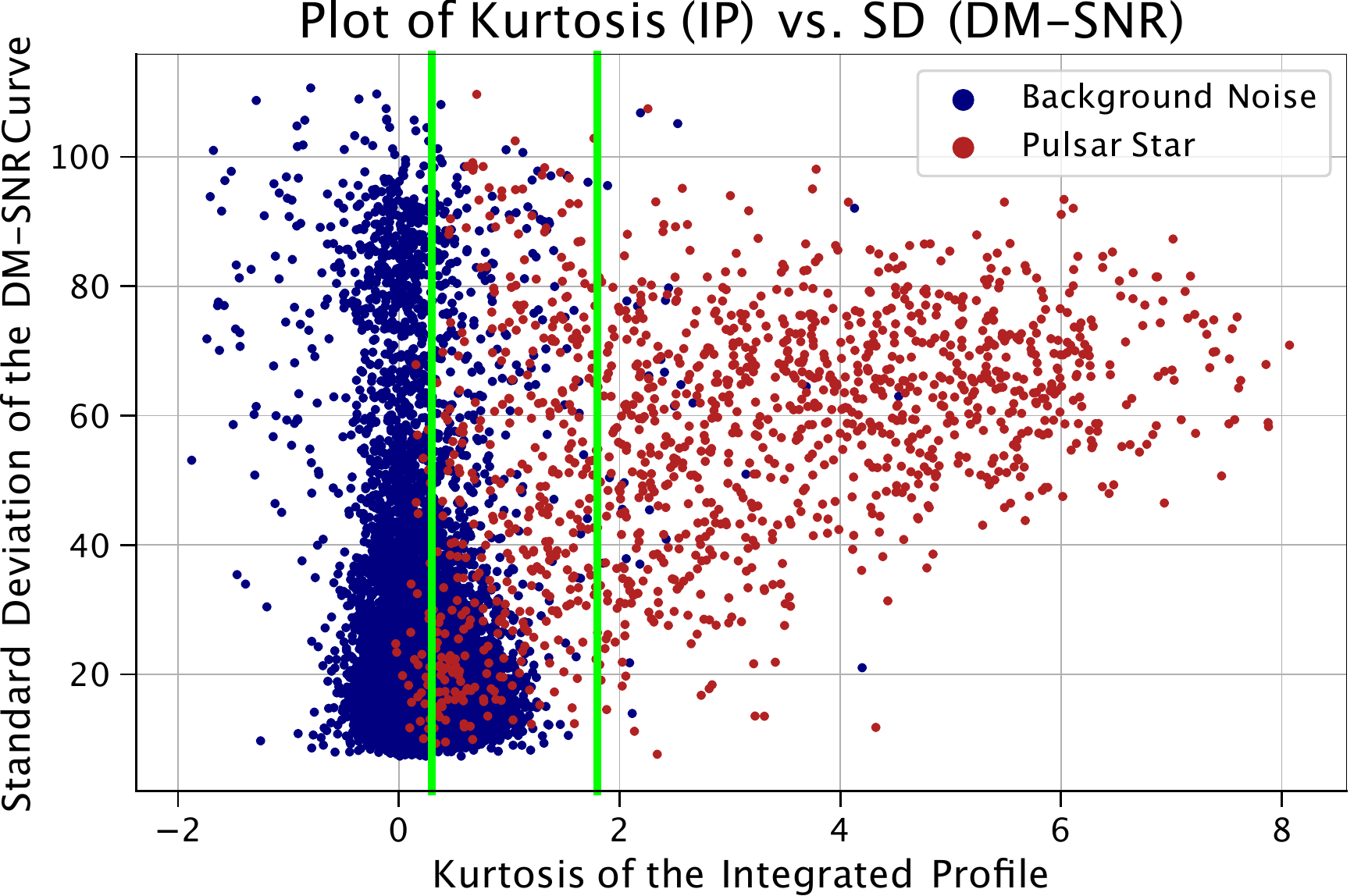}
    \caption{ Top Left: Attributions to kurtosis (IP) generated by expected gradients. Top Right: The model learns a peak positive interaction when kurtosis (IP) is in the range [0, 2]. Bottom: A plot of the training data along the axes of the two aforementioned features, colored by class label. Although kurtosis (IP) seems to be the more predictive feature, in the highlighted band the standard deviation (DM-SNR) provides useful additional information: larger standard deviation (DM-SNR) implies higher likelihood of being a pulsar star. 
    }
    \label{fig:pulsar_star}
\end{figure}

\section{Details for anti-cancer drug combination response prediction}

\subsection{Data description}

As mentioned in the main text, our dataset consists of 12,362 samples (available from http://www.vizome.org/). Each sample consists of the measured response of a 2-drug pair tested in the cancer cells of a patient \cite{tyner2018functional}. The 2-drug combination was described with both a drug identity indicator and a drug target indicator. For each sample, the drug identity indicator is a vector $x_{\textrm{id}} \in \mathbb{R}^{46}$ where each element represents one of the 46 anti-cancer drugs present in the data, and takes a value of $0$ if the corresponding drug is not present in the combination and a value of $1$ if the corresponding drug is present in the combination. Therefore, for each sample, $x_{\textrm{id}}$ will have 44 elements equal to 0 and 2 elements equal to 1. This is the most compact possible representation for the 2-drug combinations. The drug target indicator is a vector $x_{\textrm{target}} \in \mathbb{R}^{112}$ where each element represents one of the 112 unique molecular targets of the anti-cancer drugs in the dataset. Each entry in this vector is equal to $0$ if neither drug targets the given molecule, equal to $1$ if one of the drugs in the combination targets the given molecule, and equal to $2$ if both drugs target the molecule. The targets were compiled using the information available on DrugBank \citep{wishart2018drugbank}. The \textit{ex vivo} samples of each patient's cancer was described using gene expression levels for each gene in the transcriptome, as measured by RNA-seq, $x_\textrm{RNA} \in \mathbb{R}^{15377}$. Before training, the data was split into two parts -- 80\% of samples were used for model training, and an additional 20\% were used as a held-out validation set to determine when the model had been trained for a sufficient number of epochs.

\subsection{RNA-seq preprocessing}

The cancerous cells in each sample were described using RNA-seq data -- measurements of the expression level of each gene in the sample. We describe here the preprocessing steps used to remove batch effects while preserving biological signal. We first converted raw transcript counts to fragments per kilobase of exon model per million mapped reads (FPKM), a measure that is known to better reflect the molar amount of each transcript in the original sample than raw counts. FPKM accounts for this by normalizing the counts for different genes according to the length of transcript, as well as for the total number of reads included in the sample \citep{}. The equation for FPKM is given as:
\begin{equation}
        \textrm{FPKM} = \frac{X_i \times 10^9}{Nl_i},
\end{equation}
where $X_i$ is the vector containing the number of raw counts for a particular transcript $i$ across all samples, $l_i$ is the effective length of that transcript, and $N$ represents the total number of counts. After converting raw counts to FPKM, we opt to consider only the protein-coding part of the transcriptome by removing all non-protein-coding transcripts from the dataset. Protein-coding transcripts were determined according to the list provided by the HUGO Gene Nomenclature Committee (\href{https://www.genenames.org/download/statistics-and-files/}{https://www.genenames.org/download/statistics-and-files/}). In addition to non-protein-coding transcripts, we also removed any transcript that was not observed in $>70\%$ of samples. Transcripts are then $\log_2$ transformed and made 0-mean unit variance. Finally, the ComBat tool (a robust empirical Bayes regression implemented as part of the sva R package) was used to correct for batch effects \citep{10.1371/journal.pgen.0030161}.

\subsection{Model and training description}

\begin{figure}
    \centering

    \includegraphics[width=.7\columnwidth]{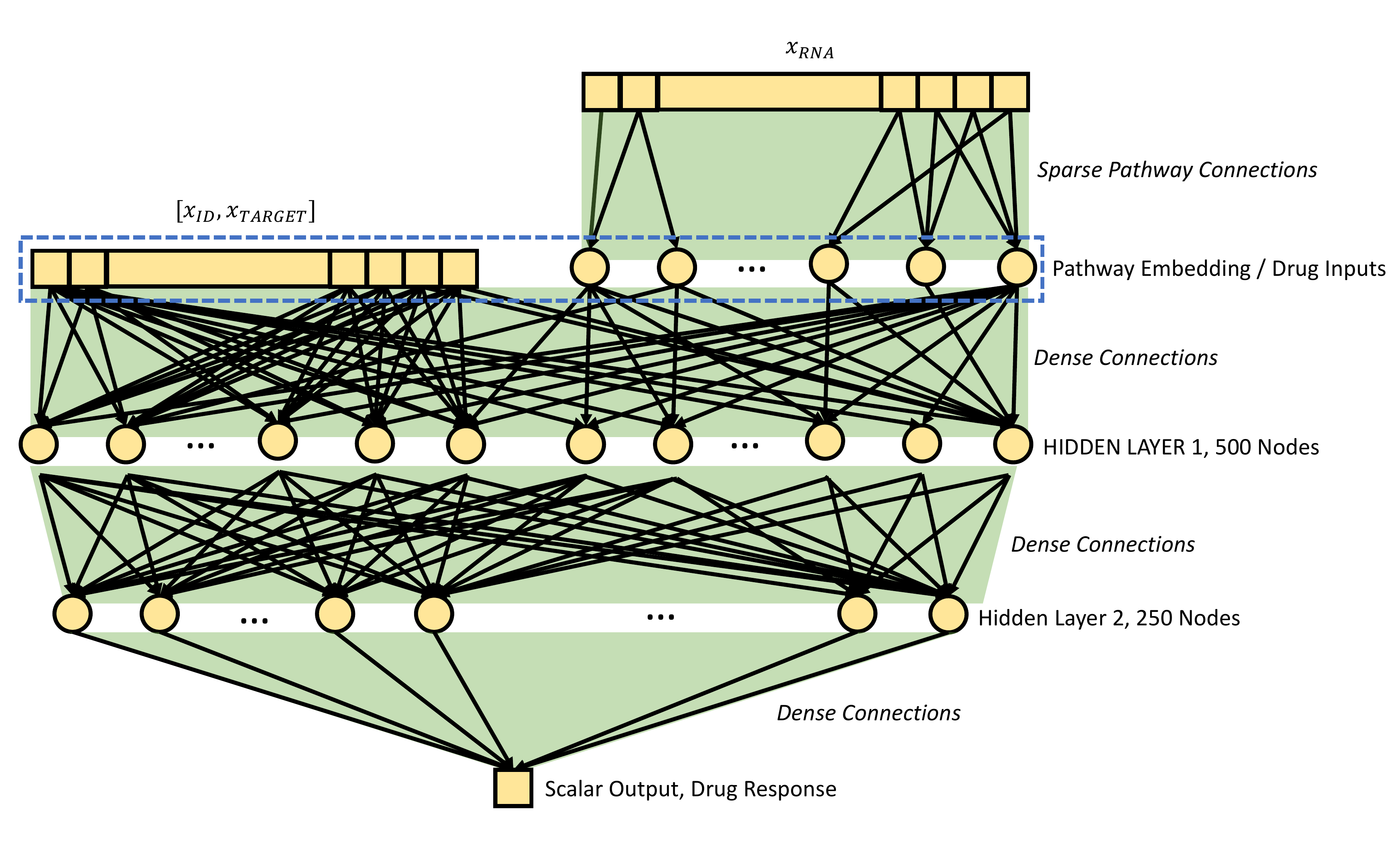}
    
    \caption{Neural network architecture for anti-cancer drug combination response prediction. We learn an embedding from all RNA-seq gene expression features ($x_{\textrm{RNA}}$) to KEGG pathways by sparsely connecting the inputs only to nodes corresponding to the pathways of which they are members. When we calculate feature attributions and interactions, we attribute to the layer that contains the raw drug inputs and the learned pathway embeddings (layer boxed with dashed blue line).}
    
    \label{fig:pathway_neural_net_architecture}
\end{figure}

To model the data, we combined the successful approaches of \citet{10.1093/bioinformatics/btx806} and \citet{hao2018pasnet}. Our network architecture is a simple feed-forward network (\autoref{fig:pathway_neural_net_architecture}), as in \citet{10.1093/bioinformatics/btx806}, where there were two hidden layers of 500 and 250 nodes respectively, both with Tanh activation. In order to improve performance and interpretability, we followed \citet{hao2018pasnet} in learning a \emph{pathway-level} embedding of the gene expression data. The RNA-seq data, $x_\textrm{RNA} \in \mathbb{R}^{15377}$, was sparsely connected to a layer of $1077$ nodes, where each node corresponded to a single pathway from KEGG, BioCarta, or Reactome \citep{kanehisa2002kegg,nishimura2001biocarta,croft2014reactome}. We made this embedding non-linear by following the sparse connections with a Tanh activation function. The non-linear pathway embeddings were then concatenated to the drug identity indicators and the drug target indicators, and these served as inputs to the densely connected layers.We trained the network to optimize a mean squared error loss function, and used the Adam optimizer in PyTorch with default hyperparameters and a learning rate equal to $10^{-5}$ \citep{kingma2014adam}. We stopped the training when mean squared error on the held-out validation set failed to improve over 10 epochs, and found that the network reached an optimum at $200$ epochs. For the sake of easier calculation and more human-intuitive attribution, we attribute the model's output to the layer with the pathway embedding and drug inputs, rather than to the raw RNA-seq features and drug inputs (see \autoref{fig:pathway_neural_net_architecture}).

For this experiment, we calculated all explanations and interactions using the Integrated Gradients and Integrated Hessians approach using the all zeros vector as reference and $k > 256$ interpolation points.

\subsection{Biological interaction calculation}

To evaluate how well the interactions detected by Integrated Hessians match with the ground truth for biological drug-drug interactions in this dataset, we can use additional single drug response data that our model was not given access to in order to calculate \emph{biological synergy}. Drug synergy is the degree of extra-additive or sub-additive response observed when two drugs are combined as compared to the additive response that would be expected if there were no interaction between the two compounds. The drug response for a single drug is measured as $\textrm{IC50}^{\textrm{single}}$, or the dose of that single drug necessary to kill half of the cells in an \textit{ex vivo} sample. The drug response for a drug combination is measured as $\textrm{IC50}^{\textrm{combination}}$, or the dose of an equimolar combination of two drugs necessary to kill half of the cells in an \textit{ex vivo} sample. The drug synergy between two drugs $a$ and $b$ can be calculated using the $CI$, or combination index:

\begin{equation}
    CI_{a,b} = \frac{\textrm{IC50}_a^{\textrm{combination}}}{\textrm{IC50}_a^{\textrm{single}}} + \frac{\textrm{IC50}_b^{\textrm{combination}}}{\textrm{IC50}_b^{\textrm{single}}}.
\end{equation}

For $CI$, a value greater than 1 indicates anti-synergy (negative interaction) while a value less than 1 indicates synergy (positive interaction). While our model was trained solely to predict $\textrm{IC50}^{\textrm{combination}}$, we can see how well the model learned true biological interactions by using the additional single drug response data to calculate synergy for particular samples. As described in the main text, when we calculated $CI$ values for all samples in which the combination of Venetoclax and Artemisinin was tested, then binarized the samples into synergistic and anti-synergistic, we can see that the Integrated Hessians values were higher in the truly synergistic group than in the truly anti-synergistic group ($p = 2.31 \times 10^{-4}$). We note that this is particularly remarkable, given that the model had no access to single drug data whatsoever.

\end{document}